\documentclass{article}

\usepackage{microtype}
\usepackage{graphicx}
\usepackage{subcaption}
\usepackage{booktabs} 

\usepackage{hyperref}

\usepackage{xspace}

\newcommand{\modelname}{Cuttlefish\xspace}
\newcommand{\datasetname}{GEO-AT\xspace}
\newcommand{\papertitle}{Scaling-Aware Adapter for Structure-Grounded LLM Reasoning\xspace}

\newcommand{\noveltyA}{Geometry Grounding Adapter\xspace}
\newcommand{\noveltyB}{Scaling-Aware Patching\xspace}
\usepackage{enumitem}
\usepackage[dvipsnames]{xcolor}
\usepackage{wrapfig}

\usepackage{bbm} 

\usepackage{algorithmic,algorithm}
\newcommand{\algorithmicreturn}{\textbf{return}}
\newcommand{\RETURN}{\STATE \algorithmicreturn}
\renewcommand{\algorithmicrequire}{\textbf{Input:}}
\renewcommand{\algorithmicensure}{\textbf{Output:}}
\renewcommand{\algorithmiccomment}[1]{\textcolor{black}{\hfill$\triangleright$ \textit{#1}}}

\newlength{\algorithmicindent}
\newcommand{\algstep}[1]{%
  \vspace{0.4ex}
    \noindent\hspace*{-\algorithmicindent}%
  {\color{ForestGreen}#1}\par\vspace{0.4ex}%
}

\newcommand{\algstepout}[1]{%
  \vspace{0.2ex}
  {\color{ForestGreen}#1}\par\vspace{0.3ex}%
}

\usepackage{booktabs}
\usepackage{multirow}
\usepackage[table]{xcolor}
\newcommand{\best}{\cellcolor[HTML]{FFD3EC}}
\newcommand{\second}{\cellcolor[HTML]{FFEFF6}}

\usepackage{graphicx}
\usepackage{subcaption} 
\usepackage{soul}
\definecolor{takeawaybg}{RGB}{230,242,255} 
\definecolor{takeawayfg}{RGB}{0,70,140}    

\usepackage{amsthm}

\usepackage[accepted]{icml2026}

\usepackage{amsmath}
\usepackage{amssymb}
\usepackage{mathtools}
\usepackage{amsthm}

\usepackage[capitalize,noabbrev]{cleveref}

\theoremstyle{plain}
\newtheorem{theorem}{Theorem}[section]
\newtheorem{proposition}[theorem]{Proposition}

\newtheorem{corollary}[theorem]{Corollary}
\theoremstyle{definition}

\newtheorem{assumption}[theorem]{Assumption}
\theoremstyle{remark}

\usepackage[textsize=tiny]{todonotes}

\icmltitlerunning{\papertitle}

\begin{document}

\twocolumn[
  \icmltitle{\papertitle}



  \icmlsetsymbol{equal}{*}

  \begin{icmlauthorlist}
    \icmlauthor{Zihao Jing}{wes}
    \icmlauthor{Qiuhao Zeng}{wes}
    \icmlauthor{Ruiyi Fang}{wes}
    \icmlauthor{Yan Yi Li}{wesbio}
    \icmlauthor{Yan Sun}{wes}
    \icmlauthor{Boyu Wang}{wes}
    \icmlauthor{Pingzhao Hu}{wes,wesbio}

  \end{icmlauthorlist}

  \icmlaffiliation{wes}{Department of Computer Science, Western University, London, Canada}
  \icmlaffiliation{wesbio}{Department of Biochemistry, Western University, London, Canada}

  \icmlcorrespondingauthor{Pingzhao Hu}{phu49@uwo.ca}

  \icmlkeywords{Machine Learning, ICML}

  \vskip 0.3in
]



\printAffiliationsAndNotice{}  

\begin{abstract}

Large language models (LLMs) are enabling reasoning over 2D and 3D structures, yet existing methods remain modality-specific and typically compress structural inputs through sequence-based tokenization or fixed-length query connectors.
Such architectures either omit the geometric grounding
requisite for mitigating structural hallucinations, or impose inflexible
modality fusion bottlenecks that concurrently over-compress 
and suboptimally allocate structural tokens, thereby impeding the realization 
of generalized all-atom reasoning.
We introduce \textbf{\modelname}, a unified multimodal LLM that grounds language reasoning in geometric cues while scaling modality tokens with structural complexity.
First, \textbf{\noveltyB} leverages an instruction-conditioned gating mechanism to generate variable-size patches over structural graphs, adaptively scaling the query token budget with structural complexity to mitigate fixed-length connector bottlenecks. Second, \textbf{\noveltyA} refines these adaptive tokens via cross-attention to modality embeddings and injects the resulting modality tokens into the LLM, exposing explicit geometric cues to reduce structural hallucination.
Experiments across interdisciplinary all-atom benchmarks demonstrate that \modelname achieves superior performance in heterogeneous structure-grounded reasoning. \underline{\href{https://github.com/zihao-jing/Cuttlefish}{Code: github.com/zihao-jing/Cuttlefish}}.
\end{abstract}


\section{Introduction}

\textbf{Background.}
Multimodal large language models (LLMs) increasingly extend to reason over non-linguistic modalities, including vision, video, and scientific structures.
Beyond vision-language models (e.g., Flamingo~\citep{alayrac2022flamingo} and LLaVA~\citep{lin2024llava}), recent efforts also target scientific domains, such as Galactica~\citep{taylor2022galactica} for scientific text, BioGPT~\citep{luo2022biogpt} for biomedical text, and structure-aware models that operate on microscopic geometry, including Mol-Llama~\citep{kim2025molllama} for molecules, ProtLLM~\citep{zhuo2024protllm} for proteins, and ChatNT~\citep{de2025chatnt} for nucleic acids.

\begin{table}[t]
\centering
\caption{Functional group hallucination test on 200 molecules and proteins from dataset \datasetname.
Metrics HR: hallucination rate; HPM: hallucinations per molecule; and AR: answer rate. (S) denotes the model's sequence-only variant (with tokenizer enhanced). The preferred variant is highlighted in pink for clarity.}
\label{tab:hallucination}
\sc
\setlength{\tabcolsep}{3pt}
\resizebox{\linewidth}{!}{
    \begin{tabular}{lcc|lcc}
    \toprule
    \multicolumn{3}{c|}{Molecule} & \multicolumn{3}{c}{Protein} \\
    \cmidrule(lr){1-3} \cmidrule(lr){4-6}
    Model & HR (HPM) $\downarrow$ & AR $\uparrow$ &
    Model & HR (HPM) $\downarrow$ & AR $\uparrow$ \\
    \midrule
    Mol-Llama(S)   & 0.28 (0.91) & 0.95 & ProtChatGPT(S) & 0.34 (0.99) & \best{0.99} \\
    \best Mol-Llama       & \best{0.12 (0.59)} & \best0.97 & \best ProtChatGPT     & \best{0.10 (0.55)} & \best0.99 \\
    \midrule
    3D-MoLM(S)     & 0.59 (2.23) & \best{0.89} & Prot2Chat(S)   & 0.29 (2.30) & \best{1.00 }\\
    \best 3D-MoLM         & \best{0.23 (1.15)} & 0.83 & \best Prot2Chat       & \best{0.06 (0.23)} & 0.99 \\
    \bottomrule
    \end{tabular}
}
\vskip -3 pt
\end{table}

\begin{figure}[t]
  \vskip  -2 pt
  \begin{center}
    \centerline{\includegraphics[width=\columnwidth]{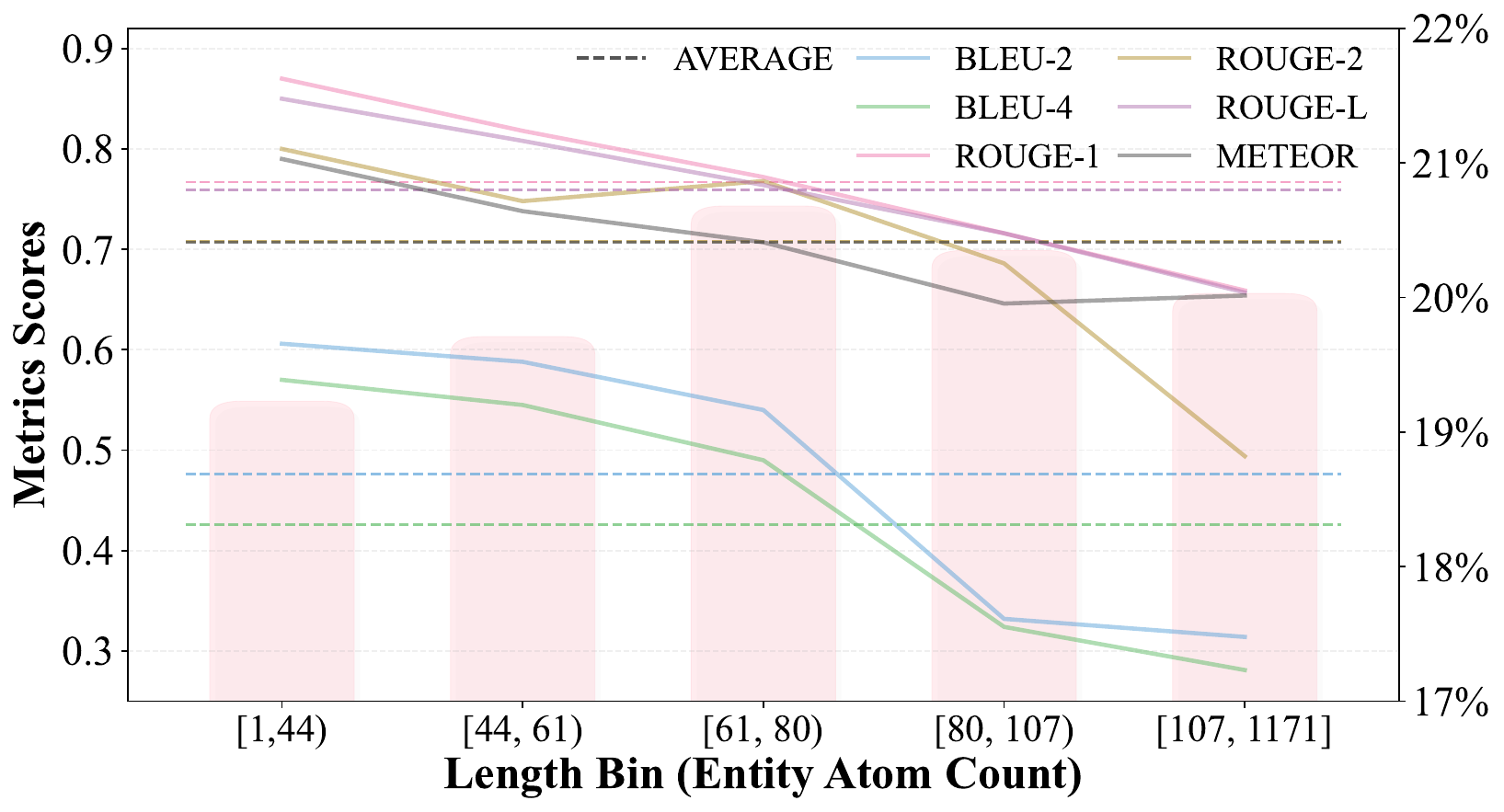}}
    \vskip -5 pt
    \caption{
     Mol-Llama performance on the Mol-Instructions captioning task, evaluated across five molecule length bins with 6 metrics (left y-axis, detailed in App~\ref{app: metrics}) plotted as curves with dashed overall averages, and the background bars indicate the proportion of samples in each length bin (right y-axis).
    }
    \label{fig:motivation}
  \end{center}
  \vskip -28 pt
\end{figure}

\textbf{All-atom modeling} has advanced rapidly, catalyzed by AlphaFold-3~\citep{abramson2024alphafold3}. Subsequent work, including BoltzGen~\citep{stark2025boltzgen} for coordinates generation and ATOMICA~\citep{fang2025atomica} for representation, further suggests that deep models can operate directly on all-atom level structures. However, most structure-aware LLM efforts remain single modality~\citep{wang2025prot2chat,park2024llamo} without a unified interface for heterogeneous all-atom entities. ChatNT~\cite{de2025chatnt} takes a step toward integrating nucleic acids and proteins, but is constrained to sequence inputs without geometric evidence.

This gap motivates a rethinking of how all-atom structural evidence can be selected and exposed to the language model:

\textbf{Challenge 1: Budget Scaling in All-Atom Modalities.} All-atom graphs span a wide size range. As shown in Fig.~\ref{fig:motivation}, Q-Former-style~\citep{li2023blip2} fixed-length query connectors (Mol-Llama~\citep{kim2025molllama}) yield lower performance across all metrics for molecules in larger length bins, due to over-compressed geometric features. Conversely, increasing the budget over-allocates capacity to small entities, diluting attention.
\textbf{Our solution: \noveltyB} uses instruction-conditioned gating to select anchors and expand variable-size structural patches, letting the query token budget grow with structural complexity to avoid over-compression and context inefficiency.

\textbf{Challenge 2: Structural Hallucination.} Sequence-only inputs, such as SMILES, protein/DNA residue sequences, do not encode geometry or long-range spatial relations. As shown in Tab.~\ref{tab:hallucination}, LLMs that lack verifiable structural evidence can produce geometric rationales that are not entailed by the conditioning context.
\textbf{Our solution: \noveltyA} refines the adaptive tokens through cross-attention with modality embeddings, injecting the final modality
tokens into the LLM to provide the explicit geometric grounding 
to suppress structural hallucinations.

Together, these designs address both the correctness and scalability of structure-aware reasoning:  \noveltyB adaptively allocates representational capacity to structurally informative regions, while the \noveltyA ensures that LLM outputs are explicitly conditioned on geometry-aware tokens.
Our contributions are summarized as follows:
\begin{itemize}[leftmargin=*, itemsep=0.3ex, topsep=0.2ex, parsep=0pt, partopsep=0pt]
\item We propose \textbf{\noveltyB}, an instruction-conditioned anchor patch growing mechanism ensuring modality token count scales with structural complexity, mitigating fixed-length connector bottlenecks.
\item We introduce \textbf{\noveltyA}, which injects verifiable geometric cues into the LLM via modality tokens, reducing structural hallucination.
\item We present \textbf{\modelname}, a unified structure-aware LLM that reasons over all-atom modalities, and achieves strong performance across all-atom understanding benchmarks.
\end{itemize}

\section{Related Work}
\label{sec: related work}

Molecular LLMs have progressed from contrastive representation learning toward generation and instruction following. MoleculeSTM~\citep{liu2023moleculestm} and MoMu~\citep{su2022momu} establish graph or structure and text alignment via contrastive co-embedding, while InstructMol~\citep{cao2025instructmol} and GIT-Mol~\citep{liu2023gitmol} introduce projector and adapter-based fusion with instruction tuning. Connector designs vary across Q-Former-style~\citep{li2023blip2} query tokens, as in 3D-MoLM~\cite{li20243D-MoLM} and UniMoT~\citep{guo2025unimolt}, and tokenization oriented bridges such as Graph2Token~\citep{wang2025graph2token}, contrasting with sequence-only baselines like MolT5~\citep{edwards2022molT5}. Protein LLMs follow a parallel trajectory, from alignment in ProtST~\citep{xu2023protst} to open-ended generation in ProtLLM~\citep{zhuo2024protllm}, with Prot2Text-V2~\citep{fei2025prot2textv2} and ProteinGPT~\citep{xiao2024proteingpt} emphasizing captioning and dialogue under instruction tuning. For nucleic acids, ChatNT~\citep{de2025chatnt} unifies DNA, RNA, and protein through encoder coupling, while RNA-GPT~\citep{xiao2024rnagpt} targets instruction-aligned RNA sequence understanding.

Despite strong empirical progress, existing systems exhibit recurring geometry bottlenecks. Sequence-only or shallow fusion, including MolT5~\citep{edwards2022molT5}, ProtST~\citep{xu2023protst}, and RNA-GPT~\citep{xiao2024rnagpt}, often under-specifies geometry, which correlates with structural hallucinations in dialogue settings such as ProtChatGPT~\citep{wang2024protchatgpt}. Embedding-only alignment, as in MoleculeSTM~\citep{liu2023moleculestm}, limits expressivity for multi-step reasoning and generation. Simple projection fusion and fixed-length query Q-Former-style~\citep{li2023blip2} connectors, as in InstructMol~\citep{cao2025instructmol}, GIT-Mol~\citep{liu2023gitmol}, 3D-MoLM~\cite{li20243D-MoLM}, UniMoT~\citep{guo2025unimolt}, and Chem3DLLM~\citep{jiang2025chem3dllm}, compress variable-size structures into a constant token budget, which degrades scalability with entity size and atom-level detail. Discretization-based bridges such as Graph2Token~\citep{wang2025graph2token} reduce this mismatch but still face information loss at high structural complexity. Several recent variants, including MolCA~\citep{liu2023molca}, LLaMo~\citep{park2024llamo}, DeepMolTex~\citep{yan2025deepmoltex}, and Prot2Chat~\citep{wang2025prot2chat}, remain within these fixed-capacity connector regimes.

All-atom LLMs must jointly address both structural representation and scaling under variable-length atomistic inputs. This setting demands long-range spatial relations modeling while avoiding information loss from fixed-capacity connectors. Progress likely requires a universal atom-level structural adapter with scaling-aware query length that preserves fine geometry for faithful reasoning. Precise definitions of scaling and modality scope are provided in App.~\ref{app:clarification} with detailed related works in App.~\ref{app: related work}.

\section{Structure-Grounding All-Atom LLM: \modelname}
\label{sec: method}
\begin{figure*}[t]
  \begin{center}
    \centerline{\includegraphics[width=\linewidth]{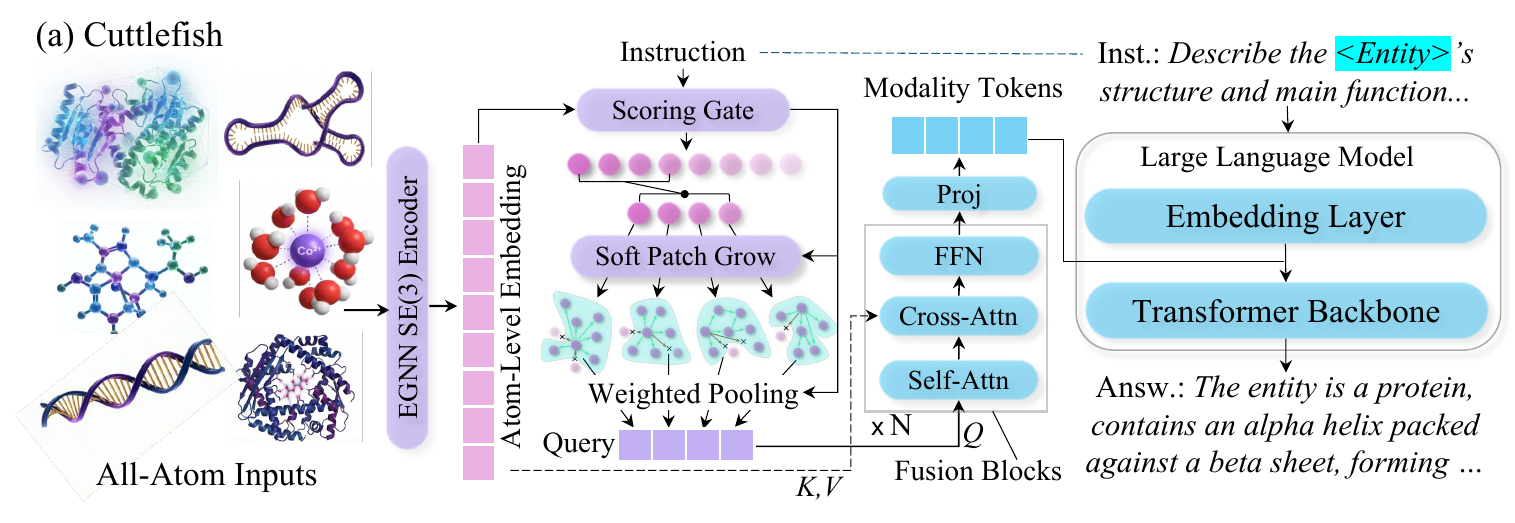}}
    \vskip -5 pt
    \caption{
     Architecture of \modelname. The framework accepts all-atom inputs (spatial graph: atom features, coordinates, and spatial relations) processed by EGNN for modality embeddings. The model incorporates \noveltyB through an instruction-conditioned gate and soft patch-growing mechanism. Then \noveltyA utilizes cross-attention to enrich adaptive tokens with granular geometric features derived from modality embeddings, subsequently projecting these modality tokens into the LLM's embedding space.
    }
    \label{fig:main_arch}
  \end{center}
  \vskip -25 pt
\end{figure*}

We introduce \modelname, a structure-aware all-atom LLM. Mirroring the multi-appendaged versatility of its namesake, \modelname leverages a unified connector that extends to heterogeneous all-atom modalities.
Sec.~\ref{sec: method-overview} introduces the model architecture and the overall data flow, followed by a description of \noveltyB in Sec.~\ref{sec: method-noveltyB} and \noveltyA in Sec.~\ref{sec: method-noveltyA}. 
Finally, Sec.~\ref{sec: method-training} outlines the training objectives and optimization protocol.

\subsection{Architecture Overview}
\label{sec: method-overview}

\modelname is a multimodal LLM designed for structure-grounded reasoning by adaptively bridging microscopic geometry with linguistic context.
As shown in Fig.~\ref{fig:main_arch}, (1) input atomic coordinates and features are first encoded via an SE(3)-equivariant Equivariant Graph Neural Network (EGNN) to produce atom-level modality embeddings.
(2) Then, an instruction-conditioned scoring gate identifies critical anchor atoms by selecting top-ranking nodes that satisfy a cumulative mass-based threshold.
(3) A soft patch-growing operator determines atomic membership using the spatial proximity and gate logits, enabling in-patch weighted pooling to generate structural query tokens.
(4) These queries retrieve geometric details through cross-attention to produce modality tokens, which are finally injected into the LLM embedding space at the designated locations.

\subsection{\noveltyB}
\label{sec: method-noveltyB}

The \noveltyB compresses all-atom geometry into an adaptive number of tokens by identifying structurally and semantically significant regions. This mechanism addresses the fixed-budget bottleneck by scaling representational capacity relative to the scale of different entities.


\textbf{Mass-Based Anchor Selection.} Given instruction embeddings $\boldsymbol{z}$, graph-batch assignments $\boldsymbol{b}$, and SE(3)-equivariant node embeddings $\boldsymbol{X}$, we compute per-node anchor logits $\boldsymbol{\ell}=G_{\text{anc}}(\boldsymbol{z},\boldsymbol{X},\boldsymbol{b})$, which represent \textbf{instruction-conditioned relevance}: how much each atom contributes to answering the current prompt (Algo.~\ref{algo:complexity-aware-patching}, Line 1). For each graph $g$, we then choose the number of anchors $k_g$ adaptively. Ranking nodes by $\boldsymbol{prob}=\mathrm{Softmax}(\boldsymbol{\ell})$—where softmax serves purely as a normalization to enable the mass-based threshold, distinct from any GNN-internal operations—we select the top indices $\mathcal{S}_g$ such that their cumulative probability mass exceeds $\rho$:

\vskip -8 pt
\begin{equation}
k_g =  \min\left\{k \in \{1,\dots,N_g\}\; \middle|\; \sum_{j=1}^{k}\boldsymbol{prob}_{\pi_j} \ge \rho \right\}.
\end{equation}

\vskip -4 pt

where $\pi$ denotes the descending permutation of $\boldsymbol{prob}$ (Lines 7--8). This cumulative selection ensures further query tokens are adaptively allocated to match the entity scale.

\textbf{Soft Patch Growth.} 
The identified anchors $\mathcal{S}_g$ are expanded into structural patches via a soft assignment mechanism. We determine the membership $\boldsymbol{W}_{i,a}$ of atom $i$ to anchor $a$ by fusing spatial proximity with the semantic bias of the anchor (Line 12). The assignment weight is formulated as:

\vskip -8 pt
\begin{equation}
\boldsymbol{W}_{i,a} = \frac{\exp(-\|\mathbf{P}_i - \mathbf{P}_a\|_2^2 + \boldsymbol{\ell}_a)}{\sum_{a' \in \mathcal{S}_g} \exp(-\|\mathbf{P}_i - \mathbf{P}_{a'}\|_2^2 + \boldsymbol{\ell}_{a'})}
\end{equation}
\vskip -4 pt

It ensures that anchors with high instruction relevance (large $\boldsymbol{\ell}_a$) can ``grow'' to capture a wider receptive field, while spatial distances $\|\mathbf{P}_i - \mathbf{P}_a\|_2^2$ ($\mathbf{P}$ is the atom coordinates) maintain geometric locality.

\textbf{In-Patch Weighted Pooling.} 
Final patch tokens $\boldsymbol{t}_a$ are synthesized through membership-weighted aggregation of node features within each patch:
\vskip -8 pt
\begin{equation}
\boldsymbol{t}_a = \sum_{i \in \mathcal{I}_g} \frac{\boldsymbol{W}_{i,a}}{\sum_{j \in \mathcal{I}_g} \boldsymbol{W}_{j,a}} \boldsymbol{X}_i
\end{equation}
\vskip -8 pt

The resulting tokens $\boldsymbol{t}_a$ effectively summarize the structural grounding in both geometry and textual intent, providing a solid query initialization in further retrieval (Sec.~\ref{sec: method-noveltyA}). Theoretical analysis of the instruction-weighted compression distortion bound is provided in App.~\ref{app:adaptive-compression}.

\subsection{\noveltyA}
\label{sec: method-noveltyA}

The \noveltyA facilitates the explicit injection of geometric evidence into the LLM by transforming query tokens from patch pooling into modality tokens. 

{\setlength{\textfloatsep}{6pt}
\begin{algorithm}[t]
\caption{\noveltyB}
\label{algo:complexity-aware-patching}
\begin{algorithmic}[1]
\REQUIRE Instruction embeddings $\boldsymbol{z}\in\mathbb{R}^{G\times D_{LLM}}$, node embeddings $\boldsymbol{X}\in\mathbb{R}^{N\times D_{enc}}$, node coordinates $\mathbf{P}\in\mathbb{R}^{N\times 3}$, batch assignment $\boldsymbol{b}\in\{0,\dots,G-1\}^N$, anchor gate $g_{\text{anc}}(\cdot)$, maximum anchors $K_{\max}$, mass threshold $\rho$.
\ENSURE Patch tokens $\boldsymbol{T}\in\mathbb{R}^{G\times K\times D_{enc}}$, patch mask $\boldsymbol{M}\in\{0,1\}^{G\times K}$, anchor indices $\boldsymbol{A}\in\{1,\dots,N\}^{G\times K}$.

\algstepout{/* Step1: Instruction-Conditioned Anchor Scoring */}
\STATE $\boldsymbol{\ell} \gets G_{\text{anc}}(\boldsymbol{z}, \boldsymbol{X}, \boldsymbol{b})$ \COMMENT{$\boldsymbol{\ell}\in\mathbb{R}^{N}$ anchor logits}
\STATE Initialize $\boldsymbol{T}\leftarrow \mathbf{0}$, $\boldsymbol{M}\leftarrow \mathbf{0}$, $\boldsymbol{A}\leftarrow -1$

\algstepout{/* Step2: Mass-Based Anchor Selection */}
\FOR{$g=0$ \textbf{to} $G-1$} 
    \STATE $\mathcal{I}_g \gets \{i \mid \boldsymbol{b}_i = g\}$ \COMMENT{node indices in graph $g$}
    \STATE $\mathbf{p} \gets \textit{Softmax}(\boldsymbol{\ell}_{\mathcal{I}_g})$ \COMMENT{softmax for anchor scoring}
    \STATE $\pi \gets \textit{Argsort}(\mathbf{p}, \textit{descending})$ \COMMENT{sort by score}
    \STATE $k_g \gets \min\Big(K_{\max}, \min\{k:\sum_{j=1}^{k}\mathbf{p}_{\pi_j}\ge \rho\}\Big)$
    \STATE $\mathcal{S}_g \gets \{\mathcal{I}_g[\pi_1],\dots,\mathcal{I}_g[\pi_{k_g}]\}$ \COMMENT{selected anchors}
    
\algstep{/* Step3: Soft Assignment Patch Growth */}
    \STATE $\boldsymbol{D}_{i,a} \gets \|\mathbf{P}_i - \mathbf{P}_a\|_2^2 \quad \forall i\in\mathcal{I}_g,\ a\in\mathcal{S}_g$ \COMMENT{distance}
    \STATE $\mathbf{S}_{i,a} \gets -\boldsymbol{D}_{i,a} + \boldsymbol{\ell}_a \quad \forall i\in\mathcal{I}_g,\ a\in\mathcal{S}_g$ \COMMENT{add bias}
    \STATE $\boldsymbol{W} \gets \textit{Softmax}_a(\mathbf{S})$ \COMMENT{to assignment weight}

\algstep{/* Step4: Patch Token Pooling */}

\STATE $\boldsymbol{T}_a \gets \textit{Norm}_i(\boldsymbol{W})^\top \boldsymbol{X}_{\mathcal{I}_g} \quad \forall a\in\mathcal{S}_g$ \COMMENT{pooling}
    \STATE $\boldsymbol{T}[g,1:k_g] \gets \{\boldsymbol{T}_a\}_{a\in\mathcal{S}_g}, \boldsymbol{A}[g,1:k_g] \gets \mathcal{S}_g$
    \STATE $\boldsymbol{M}[g,1:k_g] \gets 1$ \COMMENT{write back pooling results}

\ENDFOR
\RETURN \space $\boldsymbol{T}, \boldsymbol{M}, \boldsymbol{A}$
\end{algorithmic}

\end{algorithm}
}
\textbf{Geometry Grounding Retrieval.} We first project the patch tokens $\boldsymbol{t}$ from Sec.~\ref{sec: method-noveltyB} into patch queries $\mathcal{Q}$ (Algo.~\ref{algo:geometry-grounding-adapter}, Line 2). To capture detailed structural dependencies, these queries undergo a series of fusion blocks $\{f_\ell\}_{\ell=1}^{L_f}$ where they execute cross-attention with the full modality embeddings $\boldsymbol{X}$ (Lines 4--8). This enables queries to retrieve high-resolution geometric cues that are abstracted during patching. This retrieval-and-refinement stage complements anchor selection: while the anchor gate identifies instruction-relevant substructure positions, the fusion blocks recover full geometric detail at and around those positions rather than performing a second selection.

\textbf{Space Alignment and Injection.} The retrieved features are projected into the LLM's embedding space $D_{\text{LLM}}$ as final modality tokens $\widehat{\boldsymbol{T}}$ (Line 9). We locate the designated injection positions $\boldsymbol{p}$ corresponding to the modality placeholder $y_{\text{ins}}$ within the instruction sequence $\mathbf{E}$ (Line 1). The adapter then inserts the structure-aware tokens $\widehat{\boldsymbol{T}}$ into the instruction embedding (Lines 10--11). This integration allows \modelname to ground its textual responses in specific structural regions. Analysis showing that geometry grounding reduces irreducible Bayes risk is discussed in App.~\ref{app:geometry-grounding}.

{\setlength{\textfloatsep}{6pt}
\begin{algorithm}[t]
\caption{\noveltyA}
\label{algo:geometry-grounding-adapter}
\begin{algorithmic}[1]
\REQUIRE Instruction embeddings $\boldsymbol{E}\in\mathbb{R}^{B\times L\times D_{\text{LLM}}}$, modality (node) embeddings $\boldsymbol{X}\in\mathbb{R}^{N\times D_{\text{enc}}}$,
patch tokens $\boldsymbol{T}$ and patch mask $\boldsymbol{M}$ (Alg.~\ref{algo:complexity-aware-patching}),
batch assignment $\boldsymbol{b}$,
fusion blocks $\{f_\ell\}_{\ell=1}^{L_f}$,
modality placeholder token id $y_{\text{ins}}$.
\ENSURE Updated LLM embeddings $\boldsymbol{E}'$ (and aligned attention/label masks).

\algstepout{/* Step1: Locate Injection Positions */}
\STATE $\boldsymbol{p}\gets \textit{FindPos}(\mathbf{y}=y_{\text{ins}})$ 

\algstepout{/* Step2: Geometry Grounding Retrieval */}
\STATE $\mathcal{Q} \gets \textit{Proj}_p(\textit{Norm}(\boldsymbol{T}))$ \COMMENT{get patch queries}
\STATE $\mathcal{K},\mathcal{V}\gets \textit{Proj}_n(\textit{Norm}(\boldsymbol{X})) $ \COMMENT{modality embeddings}
\FOR{$\ell=1$ \textbf{to} $L_f$}
    \STATE $\mathcal{Q}\gets \textit{Self-Attn}(\mathcal{Q})$ \COMMENT{patch tokens themselves}
    \STATE $\mathcal{Q}\gets \textit{Cross-Attn}(\mathcal{Q},\mathcal{K},\mathcal{V})$ \COMMENT{retrieve full nodes}
    \STATE $\mathcal{Q}\gets \textit{FFN}(\mathcal{Q})$
\ENDFOR
\STATE $\widehat{\boldsymbol{T}}\gets \textit{Norm}_{\text{out}}(\textit{Proj}_{\text{out}}(\mathcal{Q})) \in\mathbb{R}^{G\times K\times D_{\text{LLM}}}$

\algstepout{/* Step3: Inject Geometry Tokens */}
\STATE $\boldsymbol{E}'\gets \textit{Insert}(\boldsymbol{E}, \widehat{\boldsymbol{T}}, \boldsymbol{p}; \boldsymbol{M})$ \COMMENT{insert $K$ tokens at $\boldsymbol{p}$}
\STATE Update masks \& labels to match $\boldsymbol{E}'$
\RETURN \space $\boldsymbol{E}'$
\end{algorithmic}
\end{algorithm}
}

\subsection{Training}
\label{sec: method-training}

\textbf{Encoder pretraining} (Fig.~\ref{fig:encoder_training}). While the EGNN encoder is not the focus of our novelty, the absence of a unified model capable of processing all-atom modalities necessitated a custom pretraining phase.
We adopt an SE(3)-equivariant EGNN as the encoder to ensure the invariance to rotations and translations, which is essential for stable geometric representation.
Concretely, for masked atoms and edges, the encoder predicts element identity, pairwise distances, and the injected noise on direction vectors, with total loss:

\begin{equation}
\mathcal{L}_{\text{enc}}=\mathcal{L}_{\text{type}}+\lambda_d \mathcal{L}_{\text{dist}}+\lambda_u \mathcal{L}_{\text{dir}},
\end{equation}
\begin{equation}
\mathcal{L}_{\text{type}}=-\sum_{i\in\mathcal{M}}\log p_\theta(a_i\mid \mathbf{h}_i),
\end{equation}
\begin{equation}
\mathcal{L}_{\text{dist}}=\sum_{(i,j)\in\mathcal{E}_\mathcal{M}} \lVert \hat d_{ij}- d_{ij}\rVert_1,\space\space
\mathcal{L}_{\text{dir}}=\sum_{(i,j)\in\mathcal{E}_\mathcal{M}} \lVert \hat{\boldsymbol\epsilon}_{ij}-\boldsymbol\epsilon_{ij}\rVert_2^2,
\end{equation}
where \(\mathcal{M}\) denotes masked atoms, \(\mathcal{E}_\mathcal{M}\) denotes edges incident to masked atoms, \(a_i\) is the element type, \(d_{ij}=\lVert \mathbf{r}_i-\mathbf{r}_j\rVert_2\) is the interatomic distance, and \(\boldsymbol\epsilon_{ij}\) is the synthetic noise added to a normalized direction vector.

\begin{figure}[t]
  \begin{center}
    \centerline{\includegraphics[width=\columnwidth]{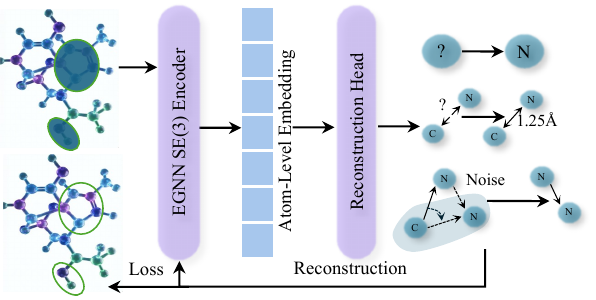}}
    \vskip -5 pt
    \caption{
     Schematic of all-atom encoder pretraining. Masked reconstruction on all-atom spatial graphs with multi-task heads.
    }
    \label{fig:encoder_training}
  \end{center}
  \vskip -33 pt
\end{figure}

\paragraph{Modality Alignment Tuning (Connector Unfrozen).}
After encoder pretraining, we freeze the EGNN encoder and the LLM, and optimize only the connector components (\noveltyA and \noveltyB) on our all-atom instruction tuning corpus--\datasetname. 
This isolates the optimization to the geometry-to-text interface, stabilizes training under all-atom inputs, and allows the connector to learn instruction-conditioned selection and injection without perturbing pretrained linguistic priors.

\paragraph{LLM Adaptation Tuning (Connector \& LLM Unfrozen).}
We then unfreeze the LLM and continue end-to-end tuning with a relatively smaller learning rate to let the LLM accommodate the new modality injection distribution while preserving its general language capabilities. Reasoning-related strategy is described in App.~\ref{app:masked-reasoning-augmentation}.

Unlike Q-Former~\citep{li2023blip2} style pipelines that require heavy contrastive pretraining to fit a fixed set of learnable query tokens as modality anchors, our structural queries are dynamically instantiated by \noveltyB and carry explicit geometric meaning; thus, alignment is achieved directly via instruction supervised end-to-end tuning under variable length query token budgets. Detailed model and training settings are discussed in App.~\ref{app: hyper and settings}.

\section{Experiments}
\label{sec: experiments}

\setcounter{topnumber}{3}
\renewcommand{\topfraction}{0.95}
\renewcommand{\textfraction}{0.05}

\begin{table*}[t]
\centering
\setlength{\aboverulesep}{2pt}
\setlength{\belowrulesep}{2pt}
\setlength{\tabcolsep}{6pt} 
\caption{Improvements compared to general LLM baselines on benchmark \datasetname across four modalities (molecule, protein, DNA, RNA) measured by METEOR and BERTScore with average scores reported. The best (pink) and second-best (lightpink) are highlighted.}
\label{tab:improvement-metrics}
\resizebox{\linewidth}{!}{
\begin{tabular}{l cc cc cc cc cc}
\toprule
Model & \multicolumn{2}{c}{Molecule} & \multicolumn{2}{c}{Protein} & \multicolumn{2}{c}{DNA} & \multicolumn{2}{c}{RNA} & \multicolumn{2}{c}{Average} \\
\cmidrule(lr){2-3}\cmidrule(lr){4-5}\cmidrule(lr){6-7}\cmidrule(lr){8-9}\cmidrule(lr){10-11}
 & METEOR & BERT-S & METEOR & BERT-S & METEOR & BERT-S & METEOR & BERT-S & METEOR & BERT-S \\
\midrule
\rowcolor[HTML]{EFEFEF}
\multicolumn{11}{l}{\textit{General LLM Baselines - Sequence Only - Non-Reasoning}} \\
Qwen-2.5-7B-Instruct & 0.137 & 0.691 & 0.091 & 0.651 & 0.171 & 0.644 & 0.159 & 0.640 & 0.143 & 0.653 \\
Llama-3.1-8B-Instruct & 0.229 & 0.778 & 0.178 & 0.742 & 0.175 & 0.658 & 0.175 & 0.646 & 0.186 & 0.694 \\
Mistral-3-8B-Instruct-2512 & 0.185 & 0.732 & 0.134 & 0.714 & 0.156 & 0.665 & 0.192 & 0.652 & 0.172 & 0.683 \\
GLM-4-9B-0414 & 0.174 & 0.644 & 0.110 & 0.672 & 0.204 & 0.690 & 0.143 & 0.549 & 0.155 & 0.621 \\
\midrule
\rowcolor[HTML]{EFEFEF}
\multicolumn{11}{l}{\textit{General LLM Baselines - Reasoning - Sequence Only}} \\
Qwen3-8B & 0.103 & 0.715 & 0.044 & 0.679 & 0.131 & 0.665 & 0.128 & 0.655 & 0.107 & 0.674 \\
DeepSeek-R1-D-Qwen-7B & 0.160 & 0.721 & 0.141 & 0.707 & 0.206 & 0.659 & 0.170 & 0.687 & 0.169 & 0.692 \\
Mistral-3-8B-Reasoning-2512 & 0.147 & 0.753 & 0.108 & 0.811 & 0.264 & 0.806 & 0.180 & 0.674 & 0.176 & 0.744 \\
\midrule
\rowcolor[HTML]{EFEFEF}
\multicolumn{11}{l}{\textit{General LLM Baselines - Reasoning - Sequence Only - Modality Enhanced Tokenizer}} \\
Qwen3-8B & 0.158 & 0.709 & 0.027 & 0.573 & 0.177 & 0.792 & 0.095 & 0.773 & 0.110 & 0.724 \\
DeepSeek-R1-D-Qwen-7B & 0.204 & 0.673 & 0.219 & 0.617 & 0.147 & 0.644 & 0.283 & 0.688 & 0.227 & 0.662 \\
Mistral-3-8B-Reasoning-2512 & 0.185 & 0.823 & 0.192 & 0.755 & 0.149 & 0.756 & 0.288 & 0.579 & 0.220 & 0.698 \\
\midrule
\rowcolor[HTML]{EFEFEF}
\multicolumn{11}{l}{\textit{Ours}} \\
\modelname + Qwen2.5-7B & 0.314 & \second 0.863 & 0.317 & 0.816 & 0.262 & 0.806 & 0.378 & 0.858 & 0.330 & 0.840 \\
\modelname + Llama-3.1-8B-I & \second 0.391 & \best 0.875 & \best 0.417 & \best 0.896 & \best 0.529 & 0.816 & 0.403 & \second 0.868 & \best 0.428 & \second 0.864 \\
\modelname + Mistral-3-8B-I & \best 0.415 & 0.653 & 0.333 & 0.858 & 0.358 & 0.852 & 0.220 & 0.694 & 0.310 & 0.750 \\
\modelname + GLM-4-9B & 0.327 & 0.830 & 0.262 & 0.831 & \second 0.443 & \best 0.888 & 0.403 & 0.794 & 0.367 & 0.827 \\
\modelname + Qwen3-8B & 0.389 & 0.853 & \second 0.377 & \second 0.888 & 0.391 & \second 0.860 & \best 0.491 & \best 0.890 & \second 0.428 & \best 0.876 \\
\modelname + R1-7B & 0.342 & 0.812 & 0.327 & 0.801 & 0.330 & 0.802 & \second 0.422 & 0.799 & 0.369 & 0.803 \\
\modelname + Mistral3-8B-R & 0.309 & 0.776 & 0.320 & 0.756 & 0.256 & 0.798 & 0.395 & 0.775 & 0.335 & 0.776 \\
\bottomrule
\end{tabular}
}
\vskip -5 pt
\end{table*}

This section evaluates \modelname on heterogeneous all-atom instruction datasets spanning molecules, proteins, and nucleic acids. Training and evaluation data are detailed in Sec.~\ref{sec: training data}. Training and optimization performance are analyzed in Sec.~\ref{sec: training details}. The performance gains beyond general LLMs are quantified in Sec.~\ref{sec: improvement}. Comparisons to modality-specific baselines are reported in Sec.~\ref{sec: baselines}. Scaling ability is discussed in Sec.~\ref{sec: efficiency}, followed by core ablations in Sec.~\ref{sec: ablation}. A matched-budget connector comparison is in Sec.~\ref{sec: matched_budget}. Structure availability analysis is in Sec.~\ref{sec: missing_structure}. Backbone size ablation is in Sec.~\ref{sec: backbone_size}. Hallucination analysis is in Sec.~\ref{sec: hallucination}. Coordinate-noise robustness is in Sec.~\ref{sec: coord_noise} with data contamination analysis in Sec.~\ref{sec: data contamination}.

\subsection{Training \& Benchmark Datasets}
\label{sec: training data}

\textbf{Standard Benchmarks.} To fairly compare \modelname with modality-specific baselines, we evaluated on widely adopted benchmarks: (1) \textbf{Mol-Instructions} \citep{fang2023molinstructions} aggregates molecules, proteins into a unified instruction format comprising approximately 700K samples verified via human-in-the-loop quality control; (2) \textbf{DNA-Chat} \citep{de2025chatnt} reformulates the Nucleotide Transformer benchmark~\citep{dalla2025ntb} into a 7.8M-sample corpus using multi-template question-answering; (3) \textbf{RNA-QA} \citep{xiao2024rnagpt} leverages a topic-modeling summarization pipeline and LLM-based decomposition to generate 400K+ literature-grounded QA pairs from RNAcentral~\citep{bateman2011rnacentral} for functional annotation. Baselines and selection rationale are discussed in App.~\ref{app: baselines}.

\textbf{\datasetname} (ours). Motivated by the absence of corpora with all-atom geometries for LLM tuning, we introduce \datasetname, the first all-atom instruction dataset encompassing modalities including molecules, proteins, nucleic acids, with atom-level coordinates and language annotations. 
\datasetname serves as the training and evaluation data for \modelname, providing a community resource for a geometry-grounded instruction corpus, which will catalyze the next-generation all-atom LLMs. Detailed dataset construction, source, and count are discussed in App.~\ref{app: data process}\&~\ref{app: datasets}. Text-output tasks (description, open-ended QA) are evaluated by METEOR and BERTScore; generation tasks (molecular design, reaction prediction) by validity and RDKit fingerprint similarity (FTS); closed-form QA tasks (true-or-false, multiple-choice) by accuracy. Full metric definitions are in App.~\ref{app: metrics}.

\subsection{Training Performance}
\label{sec: training details}

\begin{figure}[t]
  \begin{center}
    \centerline{\includegraphics[width=\columnwidth]{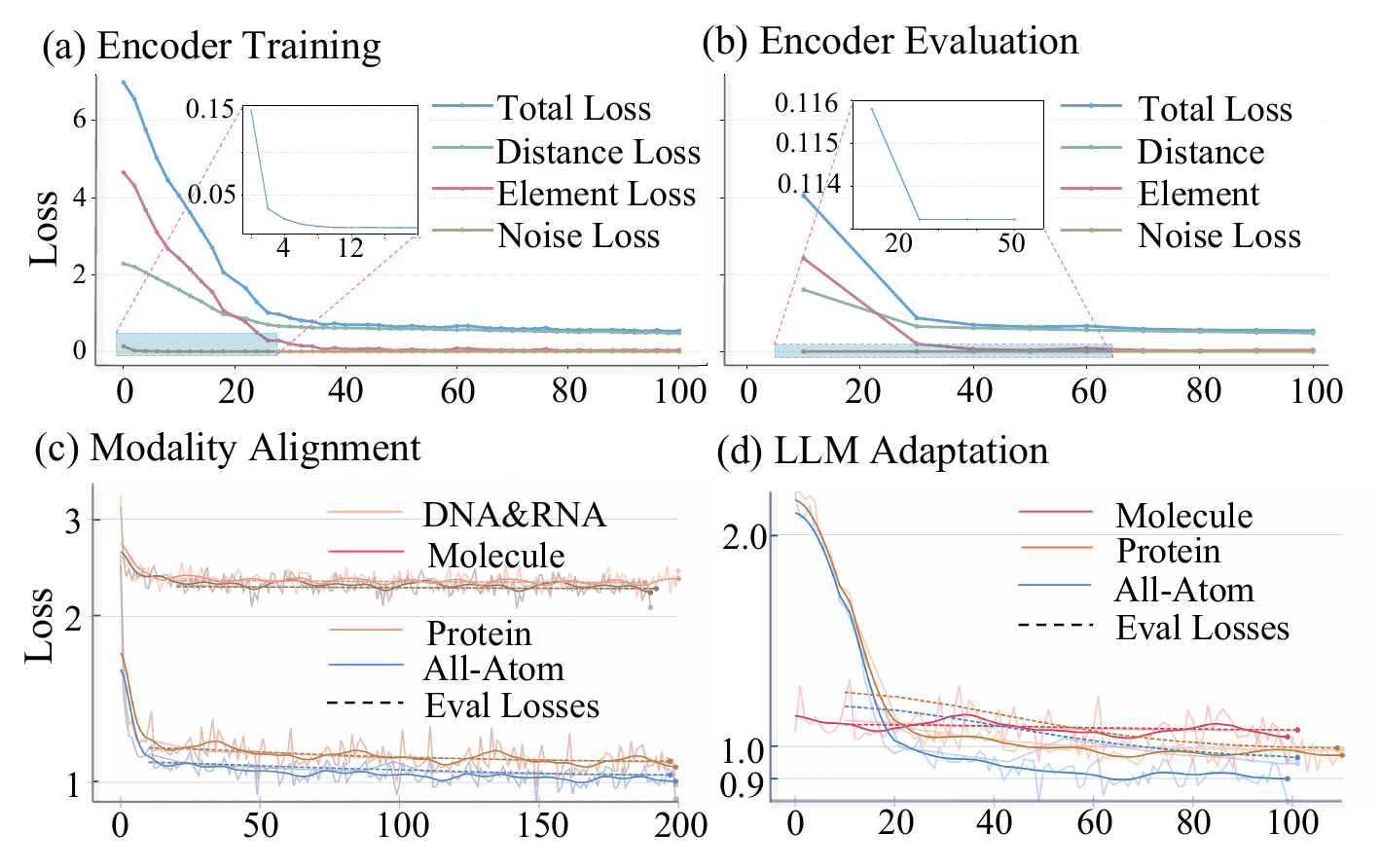}}
    \vskip -5 pt
     \caption{Loss (all atom vs. single modality) across training stages: encoder (a, training; b, evaluation) with objective-wise losses, then modality alignment (c) and LLM adaptation (d). Dashed lines denote evaluation loss. X-axis: global steps (scaled by 0.1).
     }
    \label{fig:training_loss}
  \end{center}
  \vskip -30 pt
\end{figure}

\textbf{Encoder pretraining.} Fig.~\ref{fig:training_loss} (a,b) shows the encoder training with three objectives with stable and quick convergence, indicating that a huge amount of data is not needed for pretraining to let an encoder learn the spatial relations inside all-atom entities, in line with MuMo~\citep{jing2025mumo}. For the mask strategy, we also considered masking an appropriate size (multiple residue-level) of graph regions that can capture meaningful long-range relations.

\textbf{End-to-end tuning.} Fig.~\ref{fig:training_loss} (c,d) illustrates the loss of two tuning stages. We report the single-modality and mixed-all-atom modality training loss, which shows that the mixed-modality converges better, indicating that the model benefits from cross-modality diversity in complexes and chemical features. Detailed training settings are in App.~\ref{app: hyper and settings}.


\subsection{\modelname vs.\ General LLMs}
\label{sec: improvement}

To quantify \modelname’s advantage on structural understanding and reasoning, representative open-source LLM backbones are compared, including reasoning and non-reasoning variants, on \datasetname under identical split and finetuning settings, as shown in Tab.~\ref{tab:improvement-metrics}. Llama-3~\citep{grattafiori2024llama3} and Qwen-3~\citep{yang2025qwen3} exhibit relatively strong bio-modality understanding, motivating the adoption of Llama-3.1 as the default backbone in subsequent experiments.
To mitigate sequence-level comprehension difficulty for general LLMs, a modality tokenizer is introduced that maps meaningful motifs or residues to single tokens, yielding consistent performance gains. Across different backbones, augmenting with \modelname produces substantial improvements on each modality and the average, supporting \textbf{two conclusions}: (1) atom-level modalities require structure-enhanced understanding beyond sequence representation, and (2) \modelname enables stronger multimodal understanding and reasoning over heterogeneous all-atom modalities. These cross-backbone improvements confirm that explicit geometry grounding is essential for structure-conditioned understanding and reasoning (Challenge 2).

\subsection{\modelname vs. Modality-Specific Baselines}
\label{sec: baselines}

\modelname is evaluated on all 17 Mol-Instruction~\citep{fang2023molinstructions} tasks, 20 tasks from DNA-Chat~\citep{de2025chatnt} and RNA-QA~\citep{xiao2024rnagpt}, comparing against modality-specific models.

\textbf{Small molecules} (Tab.~\ref{tab:mol-instructions}). Performance is reported against benchmark baselines and strong molecular LLMs such as Mol-Llama~\citep{kim2025molllama}. Tasks fall into two types: \textit{description tasks} produce text output evaluated by METEOR, BERTScore, or accuracy; \textit{generation tasks} (molecular design, forward/retro prediction, reagent prediction) produce molecular structures evaluated by validity and FTS. Full metrics results see App.~\ref{app: detailed results}.

\textbf{Proteins} (Tab.~\ref{tab:mol-instructions-protein}). Evaluation uses five protein tasks from Mol-Instruction~\citep{fang2023molinstructions}. Provided inputs are protein sequences; structures are retrieved by mapping UniProt IDs~\citep{uniprot2019uniprot} to PDB IDs~\citep{burley2017pdb}. For targets missing PDB structures, AlphaFold2~\citep{jumper2021alphafold2} is used to generate structures. \modelname enables superior reasoning over both sequence-only and Q-Former style multimodal LLM baselines.

\begin{table*}[t]
\centering
\tiny
\setlength{\aboverulesep}{1pt}
\setlength{\belowrulesep}{1pt}
\setlength{\tabcolsep}{4pt} 
\caption{Performance on the Mol-Instructions molecule-related benchmark. Design, Forward, Retro, and Reagent denote molecular design, forward prediction, retrosynthesis, and reagent prediction, each reported by validity (Val) and RDKit fingerprint similarity (FTS). OpenQA reports BERTScore. T/F and MC denote True-or-False and Multi-Choice accuracy, and ER and IE denote Entity Recognition and Interaction Extraction F1. The best (pink) and second-best (lightpink) results are highlighted. Full metrics results see App.~\ref{app: detailed results}. Default \modelname backbone: Llama-3.1-8B-Instruct.}
\label{tab:mol-instructions}
\resizebox{\linewidth}{!}{
\begin{tabular}{l cc cc cc cc cc c c c c c}
\toprule
Model
& \multicolumn{2}{c}{Captioning}
& \multicolumn{2}{c}{Design}
& \multicolumn{2}{c}{Forward}
& \multicolumn{2}{c}{Retro}
& \multicolumn{2}{c}{Reagent}
& OpenQA
& T/F
& MC
& ER
& IE \\
\cmidrule(lr){2-3}\cmidrule(lr){4-5}\cmidrule(lr){6-7}\cmidrule(lr){8-9}\cmidrule(lr){10-11}\cmidrule(lr){12-12}\cmidrule(lr){13-13}\cmidrule(lr){14-14}\cmidrule(lr){15-15}\cmidrule(lr){16-16}
 & ROUGE-L$\uparrow$ & METEOR$\uparrow$
 & Val$\uparrow$ & FTS$\uparrow$
 & Val$\uparrow$ & FTS$\uparrow$
 & Val$\uparrow$ & FTS$\uparrow$
 & Val$\uparrow$ & FTS$\uparrow$
 & BERT-S$\uparrow$
 & Acc$\uparrow$
 & Acc$\uparrow$
 & F1$\uparrow$
 & F1$\uparrow$ \\
\midrule
Alpaca-7B & 0.136 & 0.107 & 0.002 & 0.006 & 0.138 & 0.004 & 0.160 & 0.005 & 0.186 & 0.029 & 0.824 & 0.330 & 0.290 & 0.210 & 0.040 \\
Baize-7B & 0.148 & 0.106 & 0.002 & 0.000 & 0.097 & 0.004 & 0.112 & 0.025 & 0.099 & 0.022 & 0.811 & 0.480 & 0.240 & 0.010 & 0.000 \\
ChatGLM-6B & 0.166 & 0.129 & 0.005 & 0.005 & 0.108 & 0.050 & 0.046 & 0.056 & 0.074 & 0.017 & 0.795 & 0.180 & 0.220 & 0.150 & 0.020 \\
Llama-3-8B & 0.148 & 0.184 & 0.003 & 0.005 & 0.039 & 0.001 & 0.010 & 0.018 & 0.001 & 0.037 & 0.814 & 0.270 & 0.300 & 0.000 & 0.050 \\
Vicuna-v1.5-13B & 0.130 & 0.168 & 0.001 & 0.006 & 0.059 & 0.007 & 0.017 & 0.025 & 0.007 & 0.038 & 0.814 & 0.120 & 0.290 & 0.020 & 0.080 \\
Galactica-6.7B & 0.063 & 0.065 & \second 0.992 & 0.135 & 0.946 & 0.156 & \second 0.984 & 0.167 & 0.995 & 0.036 & 0.794 & 0.420 & 0.310 & 0.170 & 0.030 \\
Qwen-2.5-7B & 0.572 & 0.538 & 0.772 & 0.207 & \second 0.953 & 0.163 & 0.939 & 0.168 & \second 0.997 & 0.258 & \second 0.846 & 0.210 & 0.870 & 0.400 & 0.170 \\
MolT5 & 0.034 & 0.033 & 0.773 & \second 0.400 & - & - & - & - & - & - & 0.615 & 0.350 & 0.490 & 0.540 & 0.120 \\
Mol-Ins.-Llama-2 & 0.291 & 0.291 & \best 1.000 & 0.231 & \best 1.000 & 0.313 & \best 1.000 & 0.283 & \best 1.000 & 0.237 & 0.837 & 0.550 & 0.650 & \second 0.750 & \second 0.220 \\
Mol-Ins.-Llama-3.1 & 0.709 & 0.637 & \best 1.000 & 0.358 & \best 1.000 & 0.756 & \best 1.000 & 0.704 & \best 1.000 & \second 0.412 & \second 0.846 & \second 0.600 & \best 0.960 & 0.690 & 0.180 \\
Mol-LLama-2 & 0.750 & 0.701 & 0.876 & 0.254 & \best 1.000 & 0.724 & \best 1.000 & 0.362 & \best 1.000 & 0.256 & 0.823 & 0.560 & 0.850 & 0.740 & 0.210 \\
Mol-LLama-3.1 & \second 0.759 & \second 0.707 & 0.953 & 0.392 & \best 1.000 & \second 0.774 & \best 1.000 & \second 0.708 & \best 1.000 & 0.411 & 0.812 & \second 0.600 & 0.880 & 0.700 & 0.200 \\
\midrule
\modelname & \best 0.766 & \best 0.715 & \best 1.000 & \best 0.422 & \best 1.000 & \best 0.792 & \best 1.000 & \best 0.747 & \best 1.000 & \best 0.509 & \best 0.884 & \best 0.660 & \second 0.890 & \best 0.780 & \best 0.270 \\
\bottomrule
\end{tabular}
}
\vskip 0 pt
\end{table*}

\textbf{Nucleic acids.} DNA evaluation follows the DNA-Chat~\citep{de2025chatnt} dataset, covering 18 tasks. As shown in Fig.~\ref{fig:dna_two} (a) \modelname achieves top-1 average performance, slightly exceeding ChatNT~\citep{de2025chatnt}. Limited gains from atom-level inputs are consistent with substantial structural repetition in DNA. Per-task results are summarized in Fig.~\ref{fig:dna_two} (b). For RNA (Tab.~\ref{tab:rna-qa}), \modelname is validated on benchmark RNA-QA~\citep{xiao2024rnagpt}.

Across modalities, \modelname attains top-1 results on most tasks, indicating that unified all-atom structural grounding enables understanding that surpasses modality-specific models (Challenge 2).

\begin{table}[t]
\centering
\tiny
\setlength{\aboverulesep}{1pt}
\setlength{\belowrulesep}{1pt}
\setlength{\tabcolsep}{2pt} 
\caption{Protein-oriented Mol-Instructions benchmarks. IE denotes interaction extraction. PF, FD, CA, and DM denote protein function, functional description, catalytic activity, and domain or motif prediction. The top-2 are highlighted in pink and lightpink. Default \modelname backbone: Llama-3.1-8B-Instruct.}
\label{tab:mol-instructions-protein}
\resizebox{\linewidth}{!}{
\begin{tabular}{l c c c c c}
\toprule
Model & IE & PF & FD & CA & DM \\
      & F1$\uparrow$ & ROUGE-L$\uparrow$ & ROUGE-L$\uparrow$ & ROUGE-L$\uparrow$ & ROUGE-L$\uparrow$ \\
\midrule
Alpaca-7B & 0.002 & 0.200 & 0.100 & 0.230 & 0.120 \\
Baize-7B & 0.004 & 0.200 & 0.150 & 0.220 & 0.130 \\
ChatGLM-6B & 0.003 & 0.150 & 0.140 & 0.130 & 0.100 \\
Llama-3-8B & 0.003 & 0.120 & 0.120 & 0.130 & 0.090 \\
Vicuna-v1.5-13B & 0.013 & 0.150 & 0.140 & 0.160 & 0.120 \\
Galactica-6.7B & 0.001 & 0.070 & 0.080 & 0.080 & 0.060 \\
Mol-Instructions & \second 0.224 & 0.430 & \second 0.440 & \second 0.520 & 0.460 \\
ProtLLM & 0.176 & \second 0.450 & 0.435 & 0.463 & 0.384 \\
ProteinGPT & 0.166 & 0.336 & 0.416 & 0.406 & \second 0.472 \\
ProtChatGPT & 0.055 & 0.355 & \second 0.510 & 0.306 & 0.452 \\
\midrule
\modelname & \best 0.273 & \best 0.486 & \best 0.520 & \best 0.551 & \best 0.495 \\
\bottomrule
\end{tabular}
}
\vskip -5 pt
\end{table}

\begin{figure}[t]
  \centering
  \captionsetup[subfigure]{labelfont={scriptsize}}
  \captionsetup[subfigure]{
  justification=raggedright,
  singlelinecheck=false
}

  \newcommand{\panelw}{0.48}      
  \newcommand{\hgap}{0.02\linewidth} 
  \newcommand{\vgap}{2pt}         

  \captionsetup[subfigure]{position=top, skip=\vgap}

  \begin{subfigure}[t]{\panelw\linewidth}
    \caption{\scriptsize Mean MCC across 18 DNA tasks}
    \centering
    \includegraphics[width=\linewidth]{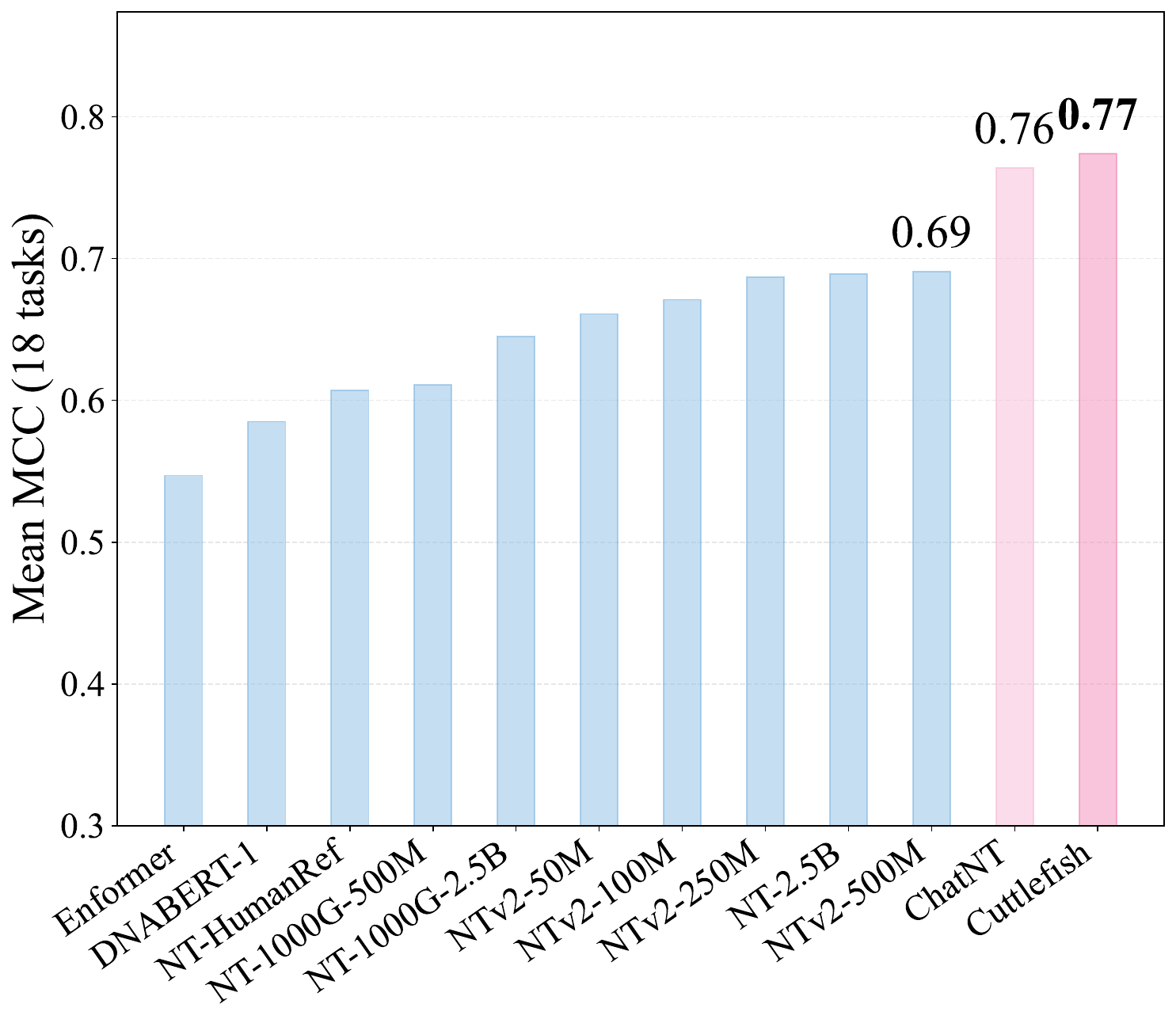}
  \end{subfigure}\hspace{\hgap}
  \begin{subfigure}[t]{\panelw\linewidth}
    \caption{\scriptsize Per-task performance}
    \centering
    \includegraphics[width=\linewidth]{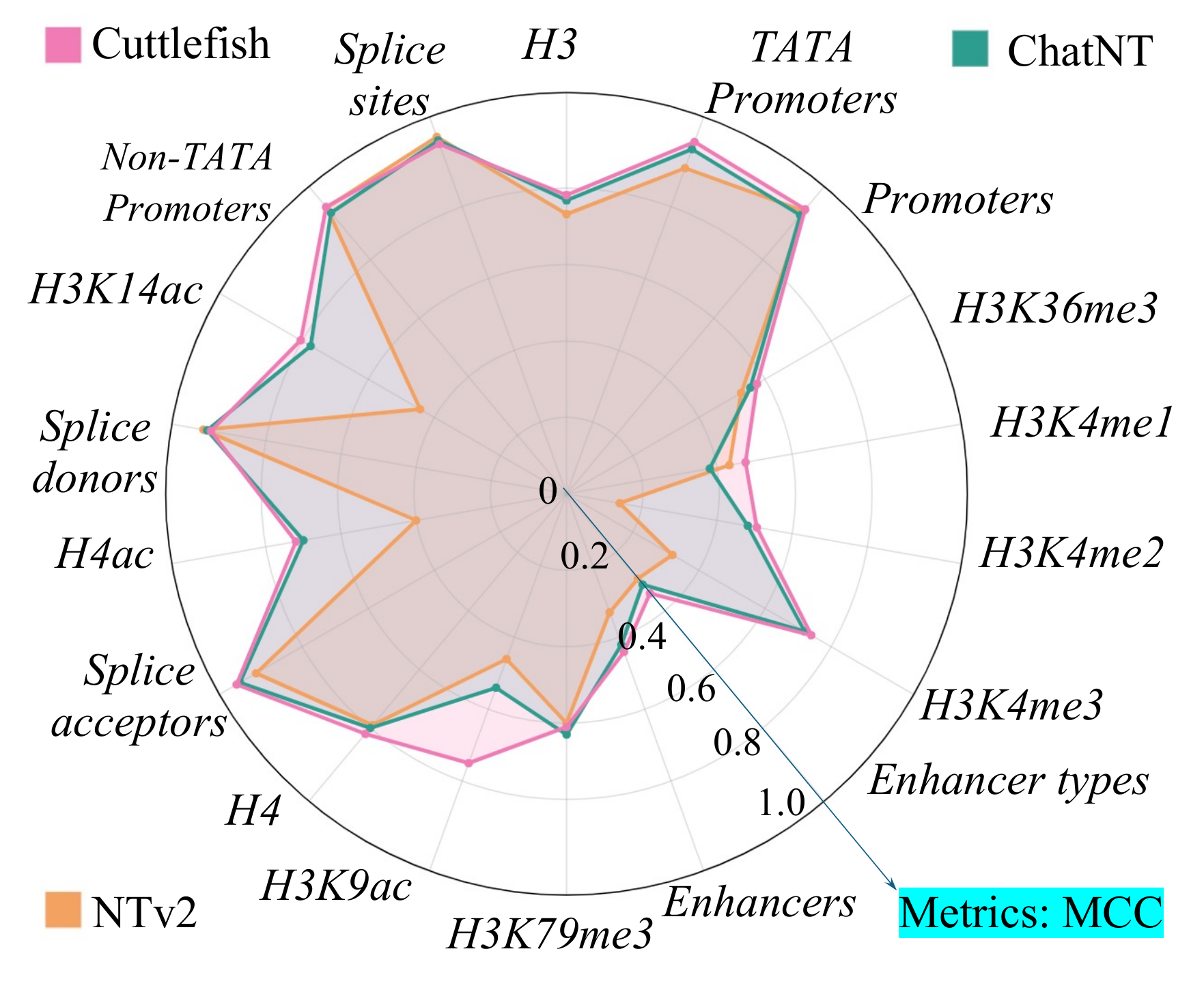}
  \end{subfigure}

  \vspace{-4pt} 
  \caption{Evaluation on DNA-Chat.
(a) Mean Matthews correlation coefficient (MCC) over 18 DNA tasks, comparing \modelname with representative baselines (x-axis).
(b) Per-task MCC on the same 18 tasks, overlaying \modelname and top-2 baselines.}
  \label{fig:dna_two}
  \vspace{-10 pt} 
\end{figure}
\begin{table}[t]
\centering
\tiny
\setlength{\aboverulesep}{1pt}
\setlength{\belowrulesep}{1pt}
\setlength{\tabcolsep}{1pt} 
\caption{Performance on two data types from RNA-QA: Abstract Information Summary (AIS) and Divide and Conquer RNA Literature Summarization (D\&C), evaluated by ROUGE-1/2/L. The best (pink) and second-best (lightpink) results are highlighted. Default \modelname backbone: Llama-3.1-8B-Instruct.}
\label{tab:rna-qa}
\resizebox{\linewidth}{!}{
\begin{tabular}{l c c c c c c}
\toprule
Baseline & \multicolumn{3}{c}{RNA-QA (AIS)} & \multicolumn{3}{c}{RNA-QA (D\&C)} \\
\cmidrule(lr){2-4}\cmidrule(lr){5-7}
 & ROUGE-1$\uparrow$ & ROUGE-2$\uparrow$ & ROUGE-L$\uparrow$
 & ROUGE-1$\uparrow$ & ROUGE-2$\uparrow$ & ROUGE-L$\uparrow$ \\
\midrule
Llama3-8B & 0.236 & 0.094 & 0.204 & 0.247 & 0.096 & 0.218 \\
RNA-FM & 0.224 & 0.136 & 0.209 & 0.092 & 0.039 & 0.080 \\
RNA-GPT & \best 0.503 & \second 0.367 & \second 0.475 & \second 0.479 & 0.269 & \second 0.441 \\
\modelname & \second 0.494 & \best 0.371 & \best 0.512 & \best 0.519 & \best 0.432 & \best 0.471 \\
\bottomrule
\end{tabular}
}
\vskip -10 pt
\end{table}

\begin{figure}[t]
  \centering
  \captionsetup[subfigure]{labelfont={tiny}}
  \captionsetup[subfigure]{
  justification=raggedright,
  singlelinecheck=false
}

  \newcommand{\panelw}{0.48}          
  \newcommand{\hgap}{0\linewidth}  
  \newcommand{\vgap}{0pt}             
  \newcommand{\tgap}{0.5pt}             

  \captionsetup[subfigure]{position=top, skip=\tgap}

  \begin{subfigure}[t]{\panelw\linewidth}
    \caption{\tiny Scaling on molecular captioning}
    \centering
    \includegraphics[width=\linewidth]{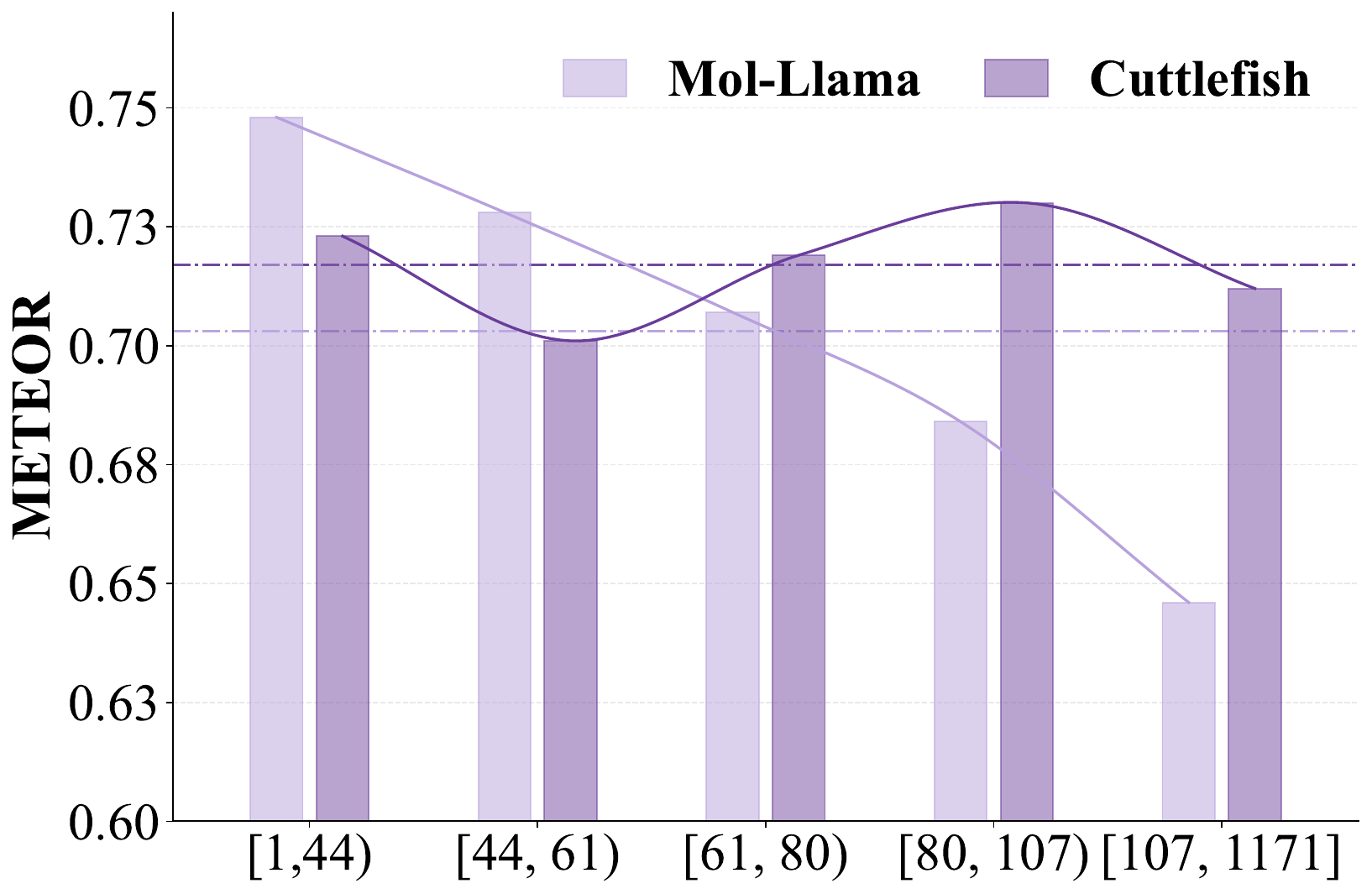}
  \end{subfigure}\hspace{\hgap}
  \begin{subfigure}[t]{\panelw\linewidth}
    \caption{\tiny Scaling on protein functional description}
    \centering
    \includegraphics[width=\linewidth]{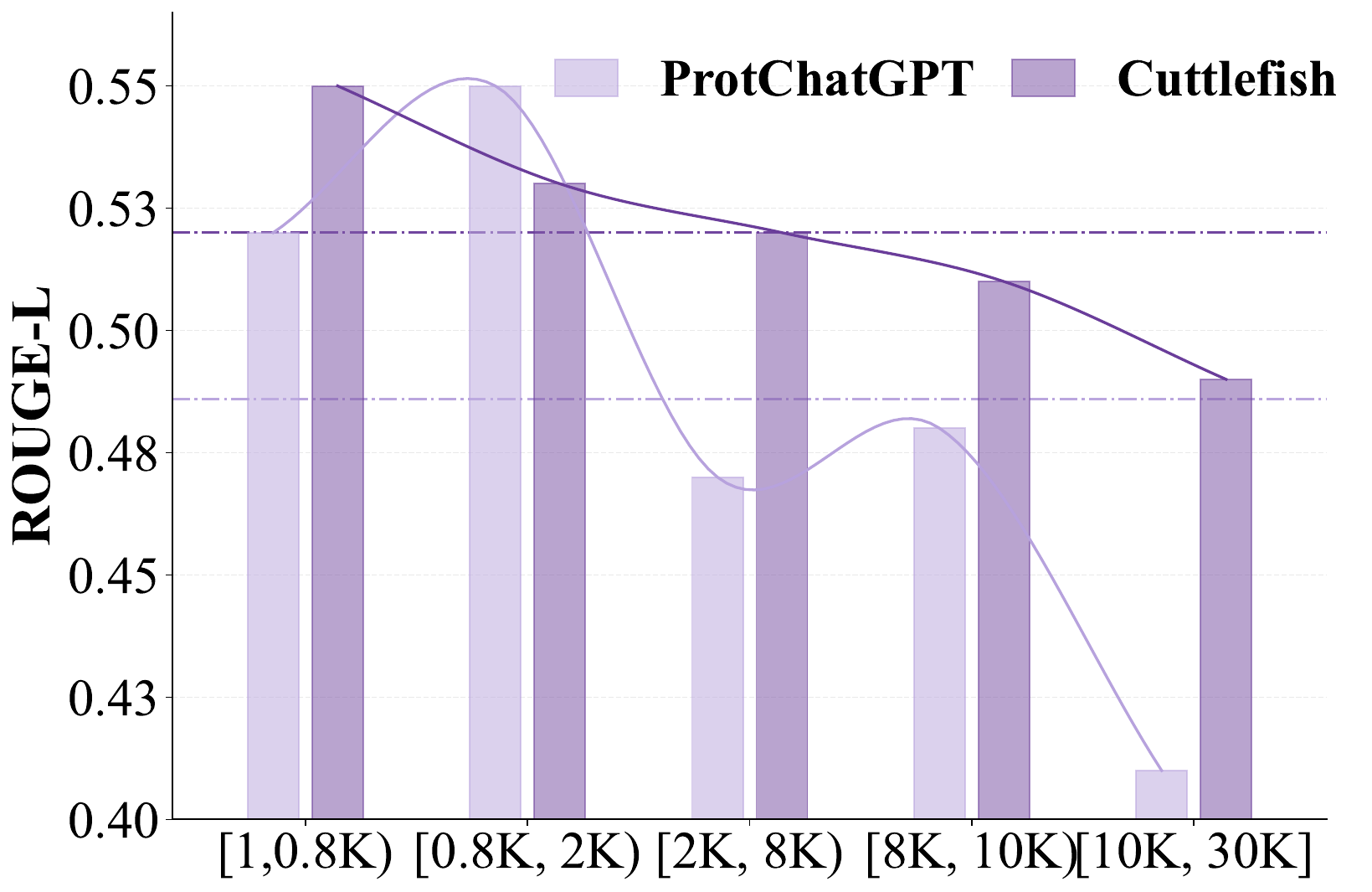}
  \end{subfigure}

  \vspace{\vgap}

  \newcommand{\rowtwoindent}{0.03\linewidth}
    \hspace*{\rowtwoindent}%
  \begin{subfigure}[t]{\panelw\linewidth}
    \caption{\tiny Ratio between language \& structural tokens}
    \centering
    \includegraphics[width=\linewidth]{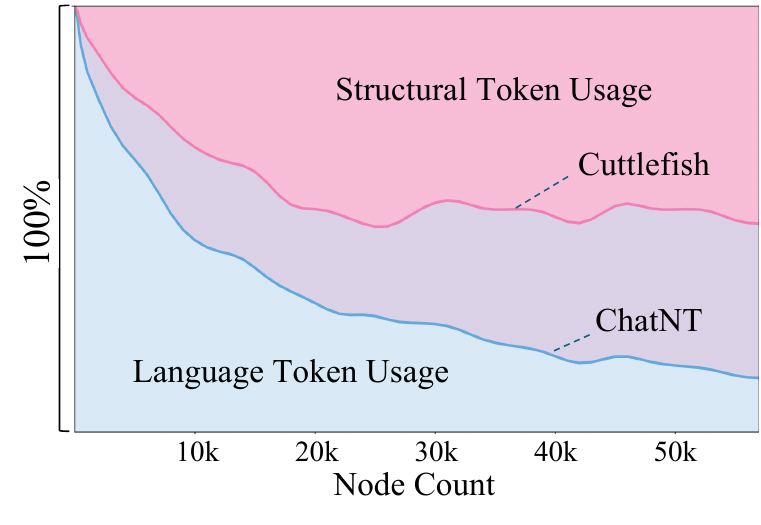}
  \end{subfigure}\hspace{\hgap}
  \begin{subfigure}[t]{\panelw\linewidth}
    \caption{\tiny Structural token scaling and growth rate}
    \centering
    \includegraphics[width=\linewidth]{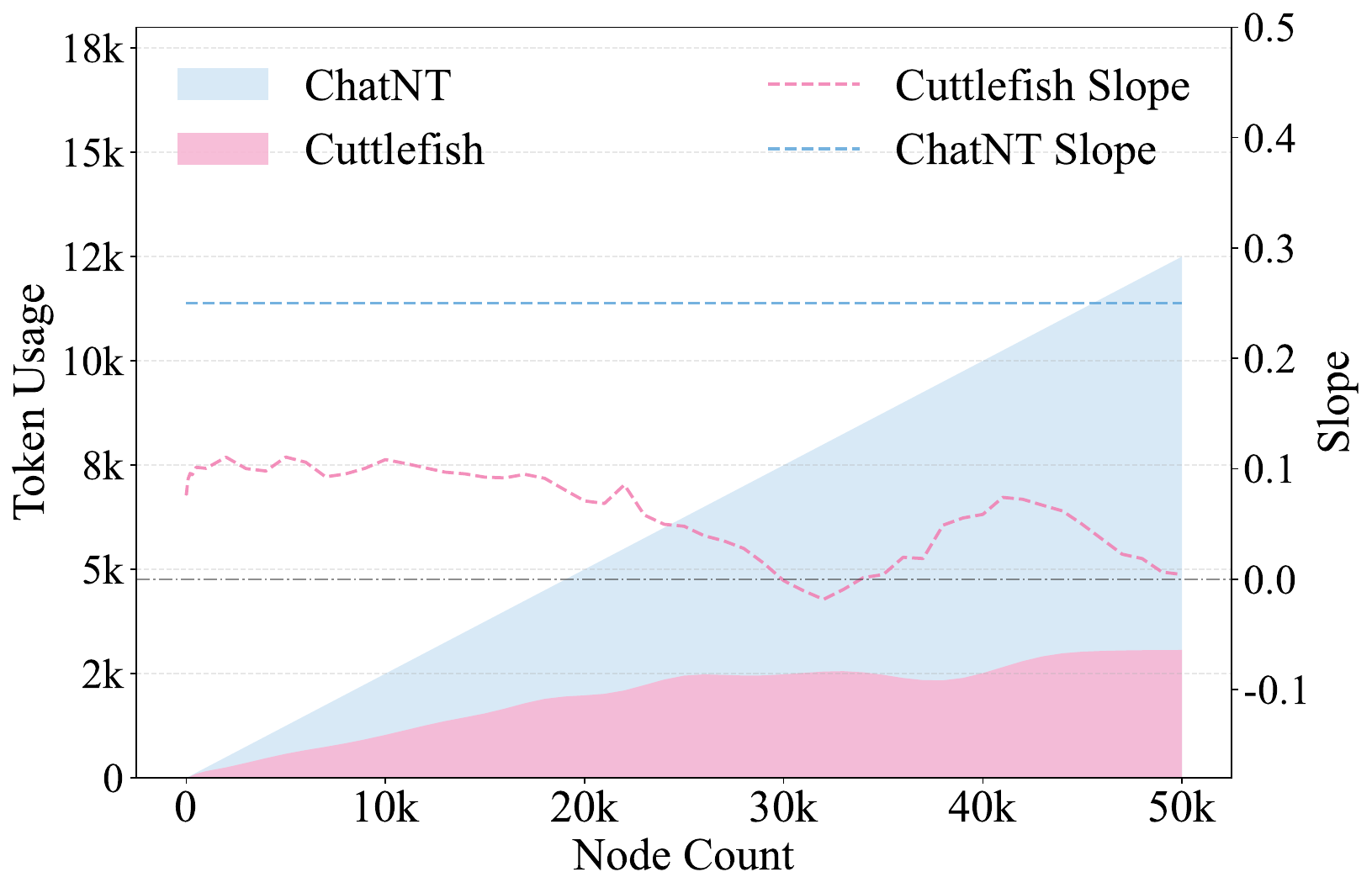}
  \end{subfigure}

  \vspace{-4pt}
  \caption{Scaling analysis by entity size.
    (a–b) Mol-Instructions results for \modelname versus task-specific strong baselines on molecular \& protein captioning tasks, plotted over atom-count bins (x-axis); bars show bin-wise scores (METEOR / ROUGE-L), with dashed lines indicating averages and solid curves indicating trend lines.
    (c–d) Scaling versus ChatNT over the node-count axis: (c) language/structural token fractions, and (d) structural-token usage (area) with growth-rate indicators (dashed).
    }
  \label{fig:experiments-four}
  \vspace{-10 pt}
\end{figure}

\begin{table*}[t]
\centering
\tiny
\setlength{\aboverulesep}{1pt}
\setlength{\belowrulesep}{1pt}
\setlength{\tabcolsep}{2pt}
\caption{Ablation study on generation quality across modalities, evaluated by METEOR and BERTScore. DNA\&RNA aggregates nucleic acids. Avg. Drop reports relative degradation from the full model. The best (pink) and second-best (lightpink) results are highlighted.}
\label{tab:ablation}
\resizebox{\linewidth}{!}{
\begin{tabular}{l cc cc cc cc cc}
\toprule
Ablation
& \multicolumn{2}{c}{Molecule}
& \multicolumn{2}{c}{Protein}
& \multicolumn{2}{c}{DNA\&RNA}
& \multicolumn{2}{c}{Average}
& \multicolumn{2}{c}{Avg. Drop (\%)} \\
\cmidrule(lr){2-3}\cmidrule(lr){4-5}\cmidrule(lr){6-7}\cmidrule(lr){8-9}\cmidrule(lr){10-11}
& METEOR$\uparrow$ & BERTScore$\uparrow$
& METEOR$\uparrow$ & BERTScore$\uparrow$
& METEOR$\uparrow$ & BERTScore$\uparrow$
& METEOR$\uparrow$ & BERTScore$\uparrow$
& METEOR & BERTScore \\
\midrule
\modelname & \best 0.391 & \best 0.875 & \second 0.417 & \best 0.896 & \best 0.466 & \best  0.842 & \best 0.425 & \best 0.871 & - & - \\
Single-modality encoder & \second 0.387 & 0.845 & \best 0.424 & \second 0.876 & \second 0.435 & \second 0.821 & \second 0.415 & \second 0.847 & 2.17 & 2.69 \\
Residue-level patching & 0.320 & 0.746 & 0.368 & 0.820 & 0.452 & 0.771 & 0.380 & 0.779 & 10.49 & 10.53 \\
w/o instruction-guided gating & 0.309 & 0.791 & 0.338 & 0.814 & 0.448 & 0.748 & 0.365 & 0.785 & 14.01 & 9.89 \\
Direct Projection & 0.310 & 0.796 & 0.342 & 0.815 & 0.355 & 0.678 & 0.336 & 0.763 & 20.96 & 12.38 \\
Q-Former variant (same max token length) & 0.332 & \second 0.856 & 0.218 & 0.774 & 0.174 & 0.687 & 0.241 & 0.772 & 43.15 & 11.31 \\
Sequence-only baseline & 0.229 & 0.778 & 0.178 & 0.742 & 0.175 & 0.652 & 0.194 & 0.724 & 54.32 & 16.83 \\
Seq-only + enhanced tokenizer & 0.203 & 0.699 & 0.141 & 0.494 & \second 0.218 & 0.731 & 0.187 & 0.701 & 55.89 & 19.46 \\
w/o soft patch growth & 0.136 & 0.675 & 0.123 & 0.686 & 0.163 & 0.662 & 0.141 & 0.674 & 66.83 & 22.57 \\
\bottomrule
\end{tabular}
}
\end{table*}

\subsection{Scaling Stability and Token Efficiency}
\label{sec: efficiency}

\textbf{Scaling Stability}. Figs.~\ref{fig:experiments-four} (a–b) illustrate modality scaling results on two Mol-Instructions~\citep{fang2023molinstructions} generation tasks (molecular captioning and protein functional description), where test instances are stratified into graph node-count bins to evaluate performance under increasing structural complexity. Across these varied bins, \modelname maintains remarkably stable generation quality and achieves its most significant relative gains in the high-size regime, signaling favorable scaling characteristics rather than the size-induced degradation often seen in fixed-capacity models. This trend is directly attributable to \noveltyB: by implementing a variable query token budget rather than enforcing a rigid fixed-length bottleneck, the connector adaptively preserves critical fine-grind features regardless of the entity’s scale.


\textbf{Token Efficiency.} Fig.~\ref{fig:experiments-four} (c) reports the language/structure token fractions and Fig.~\ref{fig:experiments-four} (d) shows the modality token usage as entity size increases. Compared to a strong baseline ChatNT~\citep{de2025chatnt}, which scales token count linearly, \modelname maintains a substantially lower structural token budget while scaling reasonably with the entity size. Detailed training, token and memory efficiency analysis see App.~\ref{app: efficiency}. These results confirm that adaptive token allocation effectively addresses the fixed-budget bottleneck (Challenge 1).

\subsection{Main Ablations}
\label{sec: ablation}

\textbf{Settings.} Tab.~\ref{tab:ablation} reports the core ablations of \modelname, including: (i) single-modality instead of mixed all-atom, (ii) residue-level chemically meaningful patches in place of the proposed patching, (iii) removal of instruction guidance in the anchor gate, (iv) replacement of fusion blocks with direct projection, (v) replacement of the connector with standard Q-Former~\citep{li2023blip2}, (vi) a sequence-only baseline with the enhanced tokenizer, and (vii) removal of soft patch growth.

\textbf{Results.} Single-modality training yields the smallest degradation, indicating that the architecture remains effective under unimodal inputs, while mixed-modality training provides additional gains, consistent with cross-entity transfer of chemical relations. 
Residue-level patching incurs a ${\sim}10\%$ drop, suggesting that chemically meaningful segmentation is not optimal under the target objectives and that lack of instruction conditioning limits focus allocation. 

Combined ablations further substantiate the necessity of both proposed novelties. The Q-Former connector performs competitively on molecules but degrades on proteins and nucleic acids, consistent with a fixed-length bottleneck as structural complexity increases. Finally, the sequence-only baseline underperforms, supporting the claim that sequence-only inputs are insufficient. Extended ablation experiments are discussed in App.~\ref{app: ablations}.

\subsection{Matched-Budget Connector Comparison}
\label{sec: matched_budget}

\begin{table}[t]
\centering
\small
\setlength{\aboverulesep}{1pt}
\setlength{\belowrulesep}{3pt}
\setlength{\tabcolsep}{3pt}
\caption{Matched-budget comparison between fixed-length Q-Former connectors and Cuttlefish on GEO-AT with other parts use Cuttlefish's settings. BERTScore is reported (higher is better), together with the overall average.}
\label{tab:rebuttal_w8Lq_R1}
\resizebox{\linewidth}{!}{
\begin{tabular}{l l cccc}
\toprule
Connector & Token policy & Molecule$\uparrow$ & Protein$\uparrow$ & DNA\&RNA$\uparrow$ & Average$\uparrow$ \\
\midrule
Q-Former            & Fixed 256                   & 0.778 & 0.743 & 0.658 & 0.726 \\
Q-Former            & Fixed 512                   & 0.781 & 0.772 & 0.680 & 0.744 \\
Q-Former            & Fixed 1024                  & 0.809 & 0.784 & 0.712 & 0.768 \\
Q-Former            & Fixed 2048                  & 0.768 & 0.767 & 0.745 & 0.760 \\
\midrule
\modelname & Matched to 256   & 0.842 & 0.659 & 0.693 & 0.731 \\
\modelname & Matched to 512   & 0.853 & 0.776 & 0.748 & 0.792 \\
\modelname & Matched to 1024  & \best{0.875} & 0.798 & 0.750 & 0.808 \\
\modelname & Matched to 2048  & \best{0.875} & \best{0.896} & \best{0.816} & \best{0.862} \\
\bottomrule
\end{tabular}
}
\vskip -10pt
\end{table}

To isolate whether gains come from adaptivity or simply from using more tokens, we compare \modelname against Q-Former connectors at matched budgets (Tab.~\ref{tab:rebuttal_w8Lq_R1}). The adaptive policy consistently outperforms fixed-length counterparts across all budgets and modalities, with the largest margins on proteins and DNA\&RNA, confirming that adaptivity mainly drives the improvement (Challenge 1).

\subsection{Structure Availability Analysis}
\label{sec: missing_structure}

\begin{table}[t]
\centering
\small
\setlength{\aboverulesep}{1pt}
\setlength{\belowrulesep}{3pt}
\setlength{\tabcolsep}{3pt}
\caption{Missing-structure analysis on GEO-AT. BERTScore (higher is better) is reported with the overall average.}
\label{tab:rebuttal_w8Lq_R7}
\resizebox{\linewidth}{!}{
\begin{tabular}{l ccccc}
\toprule
Variant & Molecule$\uparrow$ & Protein$\uparrow$ & DNA$\uparrow$ & RNA$\uparrow$ & Average$\uparrow$ \\
\midrule
Base LLM (no tuning)                                 & 0.778        & 0.742        & 0.658        & 0.646        & 0.706 \\
Sequence-only                            & 0.752        & 0.820        & 0.710        & 0.698        & 0.745 \\
w/o Structure at inference    & 0.867        & 0.788        & 0.730        & 0.860        & 0.811 \\
Full \modelname   & \best{0.875} & \best{0.896} & \best{0.816} & \best{0.868} & \best{0.864} \\
\bottomrule
\end{tabular}
}
\vskip -15pt
\end{table}

Tab.~\ref{tab:rebuttal_w8Lq_R7} examines performance when structural input is withheld at inference. A model trained with structure but evaluated without it still substantially outperforms sequence-only fine-tuning, showing that structure-aware training instills lasting geometric priors even when coordinates are absent at test time (Challenge 2).

\subsection{Backbone Size Ablation}
\label{sec: backbone_size}

\begin{table}[t]
\centering
\sc
\setlength{\tabcolsep}{7pt}
\caption{Backbone-size ablation on GEO-AT using the Qwen-2.5 family. Results are reported by modality with BERTScore (higher is better) used as the evaluation metric.}
\label{tab:rebuttal_daHq_R5}
\vskip -5pt
\resizebox{\linewidth}{!}{
\begin{tabular}{l cccc}
\toprule
Model Size & Molecule$\uparrow$ & Protein$\uparrow$ & DNA\&RNA$\uparrow$ & Average$\uparrow$ \\
\midrule
0.5B  & 0.453 & 0.422 & 0.382 & 0.419 \\
1.5B  & 0.586 & 0.577 & 0.544 & 0.569 \\
3B    & 0.608 & 0.578 & 0.621 & 0.602 \\
7B    & 0.863 & 0.816 & 0.832 & 0.840 \\
14B   & 0.897 & 0.861 & 0.834 & 0.864 \\
32B   & \best{0.903} & \best{0.877} & \best{0.840} & \best{0.873} \\
\bottomrule
\end{tabular}
}
\vskip -3pt
\end{table}

Tab.~\ref{tab:rebuttal_daHq_R5} scales the backbone from 0.5B to 32B within the Qwen-2.5 family. Performance improves monotonically across all modalities, with the largest gain between 3B and 7B, confirming that \modelname's connector does not bottleneck backbone capacity (Challenge 2). This trend holds across both text-generation and closed-form QA tasks.

\subsection{Hallucination Analysis}
\label{sec: hallucination}

Tab.~\ref{tab:rebuttal_w8Lq_R3} evaluates hallucination across three failure modes (functional-group prediction, mechanism explanation, and long-range interaction) on molecules and proteins. \modelname achieves the lowest hallucination rate and highest answer rate across all modes and modalities, confirming that geometry grounding reduces structural hallucination (Challenge 2). The advantage is most pronounced on the long-range interaction mode. Inference and Reasoning examples are shown in App.~\ref{app: reasoning cases}.



\begin{table}[t]
\centering
\small
\setlength{\aboverulesep}{1pt}
\setlength{\belowrulesep}{3pt}
\setlength{\tabcolsep}{6pt}
\caption{Hallucination evaluation on GEO-AT across three failure modes: Func.-group (functional-group prediction), Mech.-expl.\ (mechanism explanation), and L.-range (long-range interaction). HR: hallucination rate ($\downarrow$); AR: answer rate ($\uparrow$).}
\label{tab:rebuttal_w8Lq_R3}
\vskip -4pt
\resizebox{\linewidth}{!}{
\begin{tabular}{l cccccccc}
\toprule
& \multicolumn{2}{c}{Func.-group}
& \multicolumn{2}{c}{Mech.-expl.}
& \multicolumn{2}{c}{L.-range}
& \multicolumn{2}{c}{Avg.} \\
\cmidrule(lr){2-3}\cmidrule(lr){4-5}\cmidrule(lr){6-7}\cmidrule(lr){8-9}
Model & HR & AR & HR & AR & HR & AR & HR & AR \\
\midrule
\rowcolor[HTML]{EFEFEF}
\multicolumn{9}{l}{\textit{Molecule}} \\
Mol-LLaMA  & 0.12        & 0.94        & 0.23        & 0.98        & 0.19        & \best{1.00} & 0.18        & 0.97 \\
3D-MoLM    & 0.23        & 0.83        & 0.72        & 0.95        & 0.32        & 0.81        & 0.42        & 0.86 \\
\modelname  & \best{0.07} & \best{0.99} & \best{0.13} & \best{0.99} & \best{0.16} & \best{1.00} & \best{0.12} & \best{0.99} \\
\midrule
\rowcolor[HTML]{EFEFEF}
\multicolumn{9}{l}{\textit{Protein}} \\
ProtChatGPT & 0.10        & 0.94        & 0.20        & 0.96        & 0.24        & \best{1.00} & 0.18        & 0.97 \\
Prot2Chat   & 0.06        & 0.95        & 0.71        & 0.89        & 0.33        & 0.76        & 0.37        & 0.86 \\
\modelname  & \best{0.04} & \best{0.97} & \best{0.16} & \best{0.98} & \best{0.20} & 0.98        & \best{0.13} & \best{0.98} \\
\bottomrule
\end{tabular}
}
\vskip -13pt
\end{table}

\subsection{Coordinate-Noise Robustness}
\label{sec: coord_noise}

\begin{table}[t]
\centering
\sc
\setlength{\aboverulesep}{1pt}
\setlength{\belowrulesep}{1pt}
\setlength{\tabcolsep}{9pt}
\caption{Coordinate-noise robustness analysis on the GEO-AT protein tasks. Isotropic Gaussian noise with different standard deviations is added to the input coordinates, and ROUGE-L (higher is better) is reported for protein function (PF), functional description (FD), catalytic activity (CA), and domain or motif prediction (DM), together with the average.}
\label{tab:rebuttal_w8Lq_R6}
\resizebox{\linewidth}{!}{
\begin{tabular}{l ccccc}
\toprule
noise (\AA) & PF$\uparrow$ & FD$\uparrow$ & CA$\uparrow$ & DM$\uparrow$ & Average$\uparrow$ \\
\midrule
0    & 0.486        & \best{0.520} & \best{0.551} & 0.495        & \best{0.513} \\
0.25 & 0.523        & 0.475        & 0.521        & \best{0.517} & 0.509 \\
0.5  & \best{0.526} & 0.418        & 0.475        & 0.452        & 0.468 \\
1    & 0.460        & 0.438        & 0.477        & 0.405        & 0.445 \\
2    & 0.443        & 0.396        & 0.422        & 0.432        & 0.423 \\
\bottomrule
\end{tabular}
}
\vskip -10pt
\end{table}

Tab.~\ref{tab:rebuttal_w8Lq_R6} injects Gaussian coordinate perturbations of increasing magnitude into protein inputs to test structural noise robustness. Performance degrades gracefully with noise level, confirming that \modelname does not overfit to precise atomic coordinates and remains reliable under realistic structural uncertainty (Challenge 2).

\subsection{Data Contamination Analysis}
\label{sec: data contamination}

Data contamination is a common concern in LLM training. We argue that our encoder pretraining and alignment tuning are not highly data-dependent, as the model is trained to recognize structural patterns rather than memorize new factual knowledge (details in App.~\ref{app: contamination}). To ensure a fair comparison, we remove all benchmark test-set entities from our training set by entity ID and apply a 13-gram overlap filter~\citep{brown2020gpt3}. To isolate structural contributions from LLM pretrained knowledge, we train the same LLM backbone without \modelname's structural components on the identical corpus; the resulting gap confirms that performance gains stem from structural grounding rather than data exposure (Tab.~\ref{tab:improvement-metrics}).  





\section{Conclusion} \label{sec: conclusion}

\modelname establishes a unified framework for structure-grounded reasoning across all-atom modalities. By integrating \noveltyB, the architecture enables instruction-conditioned query token scaling, bypassing fixed-length connector bottlenecks while preserving fine-grained structural signals. Building on these adaptive representations, \noveltyA injects verifiable geometric cues into the LLM to mitigate structural hallucinations. Empirical results across all-atom modality benchmarks show consistent gains, and support four observations:
(1) sequence-only inputs are insufficient;
(2) fixed-length connectors impose an information bottleneck for capturing geometry-dependent semantics;
(3) structural-aware LLMs can internalize atom-level structural features with appropriate alignment; and
(4) joint modality training improves per-modality learning. 
Overall, these findings indicate that \noveltyB and \noveltyA are key to robust, unified multimodal scientific reasoning. Limitations and future works are discussed in App.~\ref{app:limitation_future_work}.

\newpage

\section*{Acknowledgements}

This work was supported in part by the Canada Research Chairs Tier II Program (CRC-2021-00482), the Canadian Institutes of Health Research (PLL 185683, PJT 190272, PJT204042, CFA-205059), the Natural Sciences and Engineering Research Council of Canada (RGPIN-2026-07632, ALLRP 602759-24), and The Canada Foundation for Innovation (CFI) John R.\ Evans Leaders Fund (JELF) program (\#43481).

\section*{Impact Statement}

This paper aims to advance machine learning by improving structure-aware LLMs for all-atom modalities. It introduces mechanisms that ground language reasoning in geometric evidence and scale modality tokens with structural complexity to improve faithfulness and scalability. The work is intended for general scientific modeling settings, and no broader societal impacts beyond those common to this research area are expected.





\bibliography{reference}
\bibliographystyle{icml2026}

\newpage
\appendix
\onecolumn

\section*{Appendix Table of Contents}
\vspace{5ex}
\begin{itemize}[label={},leftmargin=4.2em,labelsep=0pt,align=left,  topsep=5.0ex,partopsep=0pt]
    \item \textbf{\hyperref[app:mechanistic]{A. Mechanistic Analysis}}
        \begin{itemize}[labelsep=5pt]
            \item \hyperref[app:adaptive-compression]{A.1 Mechanistic Analysis of Adaptive Structural Compression}
            \item \hyperref[app:geometry-grounding]{A.2 Geometry Grounding Reduces Irreducible Ambiguity}
            \item \hyperref[app:clarification]{A.3 Definitions on Scaling and Modality Scope}
        \end{itemize}
    \item \textbf{\hyperref[app: related work]{{B. Related Literature Review}}}
            \begin{itemize}[labelsep=5pt]
            \item \hyperref[app:rw_structure_llms]{B.1 Structure LLMs: From Sequence Supervision to Structure Grounding}
            \item \hyperref[app:rw_all_atom]{B.2 The Development of All-Atom Modeling}
            \item \hyperref[app:rw_qformer]{B.3 Q-Former Development and Widespread Use}
            \item \hyperref[app:rw_limits]{B.4 Recurring Limitations Across Prior Designs}
        \end{itemize}
    \item \textbf{\hyperref[app: Implementation Details]{C. Implementation Details}}
        \begin{itemize}[labelsep=5pt]
            
            \item \hyperref[app: hyper and settings]{C.1 Hyperparameter and Settings}
            \item \hyperref[app: data process]{C.2 Data Process}
            \item \hyperref[app: encoder data]{C.3 Encoder Pretraining Data}
            \item \hyperref[app:masked-reasoning-augmentation]{C.4 Masked Reasoning Augmentation for Improved Latent Deliberation}
        \end{itemize}
    \item \textbf{\hyperref[app: experiments]{D. Experiment Details}}
        \begin{itemize}[labelsep=5pt]
            \item \hyperref[app: datasets]{D.1 Datasets and Benchmarks}
            \item \hyperref[app: baselines]{D.2 Baselines}
            \item \hyperref[app: metrics]{D.3 Benchmark Metrics}
            \item \hyperref[app: detailed results]{D.4 Detailed Benchmark Results}
            \item \hyperref[app: reasoning cases]{D.5 Reasoning Cases}
            \item \hyperref[app: contamination]{D.6 Data Contamination Analysis}
            \item \hyperref[app: efficiency]{D.7 Training Efficiency Analysis}
        \end{itemize}
    \item \textbf{\hyperref[app: ablations]{E. Extended Ablations}}
        \begin{itemize}[labelsep=5pt]
            \item \hyperref[app: ablation-model-size]{E.1 Effect of Backbone Model Size}
            \item \hyperref[app: ablation-epoches]{E.2 Effect of LLM Adaptation Epochs}
            \item \hyperref[app:ablation-pool]{E.3 Pooling Strategy for Patch Aggregation}
            \item \hyperref[app: max-patch-count]{E.4 Maximum Patch Count}
            \item \hyperref[app: training stage ablation]{E.5 Effect of Training Stages}
            \item \hyperref[app: rebuttal-backbone-prompt]{E.6 Backbone and Prompt-Setting Ablation on GEO-AT}
            \item \hyperref[app: rebuttal-reasoning-bench]{E.7 Reasoning Benchmark Results: ChemIQ and MolecularIQ}
            \item \hyperref[app: rebuttal-same-backbone]{E.8 Same-Backbone Connector Comparison}
            \item \hyperref[app: rebuttal-token-usage]{E.9 Structural Token-Usage by Entity Size}
            \item \hyperref[app: rebuttal-structure-quality]{E.10 Protein Structure Quality Analysis}
            \item \hyperref[app: rebuttal-func-grounding]{E.11 Functional-Group Grounding Accuracy}
            \item \hyperref[app: rebuttal-hyper-sensitivity]{E.12 Hyperparameter Sensitivity Analysis}
        \end{itemize}
    \item \textbf{\hyperref[app:limitation_future_work]{F. Limitations and Future Work}}
        \begin{itemize}[labelsep=5pt]
            \item \hyperref[app:limitation]{F.1 Limitations}
            \item \hyperref[app:future_work]{F.2 Future Work}
        \end{itemize}
    \item \textbf{\hyperref[app:reproducibility]{G. Reproducibility}}
\end{itemize}

\newpage

\section{Mechanistic Analysis}
\label{app:mechanistic}

This appendix provides a mechanism-level analysis of the proposed connector. We focus on two operational properties that are central to the novelty of \modelname. First, Scaling-Aware Patching adaptively allocates structural tokens to instruction-relevant regions, thereby mitigating the fixed-length bottleneck under variable all-atom complexity. Second, Geometry Grounding Adapter exposes geometry-conditioned evidence to the language model, thereby reducing ambiguity when the target depends on structural information that is not recoverable from the sequence alone.

\subsection{Mechanistic Analysis of Adaptive Structural Compression}
\label{app:adaptive-compression}

We first analyze Scaling-Aware Patching as an adaptive structural compression mechanism. The goal of this subsection is to characterize how instruction-conditioned anchor selection and soft patch growth preserve relevant structural information while operating under a constrained modality-token budget.

\paragraph{Setup.}
Fix one graph $g$ with node set $I_g$, coordinates $\{P_i\}_{i\in I_g}$, encoder embeddings
$\{X_i\}_{i\in I_g}$, and gate probabilities
\[
p_i := \mathrm{Softmax}(\ell_{I_g})_i, \qquad i\in I_g.
\]
Let $\pi$ be the permutation such that
\[
p_{\pi_1} \ge p_{\pi_2} \ge \cdots \ge p_{\pi_{|I_g|}},
\]
and define the adaptive anchor count and selected anchor set exactly as in Eq.~(1):
\[
k_g := \min\Bigl\{k \in \{1,\dots,|I_g|\} : \sum_{j=1}^k p_{\pi_j} \ge \rho \Bigr\},
\qquad
S_g := \{\pi_1,\dots,\pi_{k_g}\}.
\]
For each $i\in I_g$ and $a\in S_g$, let $W_{i,a}$ be the soft assignment from Eq.~(2), so that
\[
W_{i,a}\ge 0,
\qquad
\sum_{a\in S_g} W_{i,a} = 1.
\]
Define the normalized pooling weights
\[
\bar W_{j,a} := \frac{W_{j,a}}{\sum_{u\in I_g} W_{u,a}},
\qquad a\in S_g,
\]
the patch token
\[
t_a := \sum_{j\in I_g} \bar W_{j,a} X_j,
\qquad a\in S_g,
\]
and the patch-only reconstruction
\[
\hat X_i := \sum_{a\in S_g} W_{i,a} t_a,
\qquad i\in I_g.
\]
Although the model does not explicitly reconstruct each node, $\hat X_i$ is the natural proxy
for the structural information that remains available after compression into patch tokens.

\begin{proposition}[Minimal-cardinality relevance coverage]
\label{prop:min-cardinality}
The selected anchor set $S_g$ has the following properties:
\begin{align*}
\sum_{j=1}^{k_g} p_{\pi_j} &= \sum_{i\in S_g} p_i \ge \rho, \\
\forall S\subseteq I_g \text{ with } |S|<k_g, \qquad \sum_{i\in S} p_i &< \rho, \\
\forall S\subseteq I_g \text{ with } |S|=k_g, \qquad \sum_{i\in S} p_i &\le \sum_{i\in S_g} p_i.
\end{align*}
Hence $S_g$ is the \emph{smallest-cardinality} set that attains relevance mass at least $\rho$,
and among all sets of size $k_g$, it captures the largest total gate mass.
\end{proposition}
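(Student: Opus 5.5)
The plan is to reduce all three claims to one elementary fact about sorted sequences: for any $S\subseteq I_g$ with $|S|=m$, we have $\sum_{i\in S} p_i \le \sum_{j=1}^{m} p_{\pi_j}$. Before using it, we check that $k_g$ is well defined. Because the $p_i$ are softmax outputs, $\sum_{i\in I_g} p_i = 1$. For $\rho\le 1$ the index set in Eq.~(1) therefore contains $k=|I_g|$, and its minimum exists. We will assume $\rho\in(0,1]$, as the construction implicitly requires.

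\emph{First claim.} The equality $\sum_{j=1}^{k_g} p_{\pi_j} = \sum_{i\in S_g} p_i$ is a reindexing, since $S_g=\{\pi_1,\dots,\pi_{k_g}\}$ and $\pi$ is injective. The inequality $\ge\rho$ holds because $k_g$ belongs to the set whose minimum defines it.

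\emph{Key lemma (top-$m$ maximality).} Fix $m$ and write $T_m=\{\pi_1,\dots,\pi_m\}$. For any $S$ with $|S|=m$, we have $|S\setminus T_m| = |T_m\setminus S|$. Every element of $S\setminus T_m$ has rank greater than $m$, so its $p$-value is at most $p_{\pi_m}$. Every element of $T_m\setminus S$ has $p$-value at least $p_{\pi_m}$. Hence
\[
\sum_{i\in S\setminus T_m} p_i \;\le\; |S\setminus T_m|\,p_{\pi_m} \;\le\; \sum_{i\in T_m\setminus S} p_i .
\]
Adding the common part $\sum_{i\in S\cap T_m} p_i$ to both sides gives the lemma. (An alternative is an exchange argument that swaps elements one at a time, but the counting bound above is shorter.) Taking $m=k_g$ gives the third claim immediately.

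\emph{Second claim.} Let $|S|=m<k_g$. If $m=0$, the sum is $0<\rho$, using $\rho>0$. Otherwise, the lemma gives $\sum_{i\in S} p_i\le \sum_{j=1}^m p_{\pi_j}$. Since $1\le m<k_g$, minimality of $k_g$ in Eq.~(1) forces $\sum_{j=1}^m p_{\pi_j}<\rho$.

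The concluding ``hence'' sentence then follows directly: no set of smaller size reaches mass $\rho$, and $S_g$ does, so $S_g$ has minimal cardinality among such sets and maximal mass among sets of its size.

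The only real subtlety is ties in $p$. Ties make $\pi$, and hence $S_g$, non-unique, so the statement should be read for the particular $\pi$ fixed in the setup. The counting argument in the lemma is unaffected by ties, because it uses only the weak inequalities $p_i\le p_{\pi_m}$ for ranks beyond $m$ and $p_i\ge p_{\pi_m}$ for ranks up to $m$. The $K_{\max}$ cap in Algorithm~\ref{algo:complexity-aware-patching} is ignored here, since the appendix defines $k_g$ exactly as in Eq.~(1). I would add a one-line remark that with the cap, the first property may fail, while the third still holds for the capped size.
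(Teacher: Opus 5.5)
Your proposal is correct and takes essentially the same route as the paper. Both reduce all three claims to one fact: the length-$m$ prefix of the descending order has the largest mass among all size-$m$ subsets. Both then invoke the minimality of $k_g$ in Eq.~(1).

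The differences are in rigor only. The paper treats prefix-maximality as immediate from the sorting, whereas you prove it with the $|S\setminus T_m| = |T_m\setminus S|$ counting bound. You also make explicit side conditions the paper leaves implicit: $\rho\in(0,1]$, needed both for $k_g$ to be well defined and for the empty-set case of the second claim, and the handling of ties.
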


\begin{proof}
The first claim is immediate from the definition of $k_g$.
For the second claim, let $m<k_g$ and let $S\subseteq I_g$ with $|S|=m$.
Since $(p_{\pi_j})_{j=1}^{|I_g|}$ is sorted in descending order,
\[
\sum_{i\in S} p_i \le \sum_{j=1}^m p_{\pi_j}.
\]
But $m<k_g$, and $k_g$ is the smallest index at which the cumulative mass reaches $\rho$,
so
\[
\sum_{j=1}^m p_{\pi_j} < \rho.
\]
This proves the second claim.

For the third claim, let $S\subseteq I_g$ with $|S|=k_g$. Again by the sorting of
$(p_{\pi_j})$,
\[
\sum_{i\in S} p_i \le \sum_{j=1}^{k_g} p_{\pi_j} = \sum_{i\in S_g} p_i.
\]
Therefore $S_g$ is optimal among all subsets of size $k_g$.
\end{proof}

Fig.~\ref{fig:p1_adaptive_compression} illustrates these two components of the adaptive compression analysis.

\begin{figure}[t]
  \begin{center}
    \centerline{\includegraphics[width=0.7\linewidth]{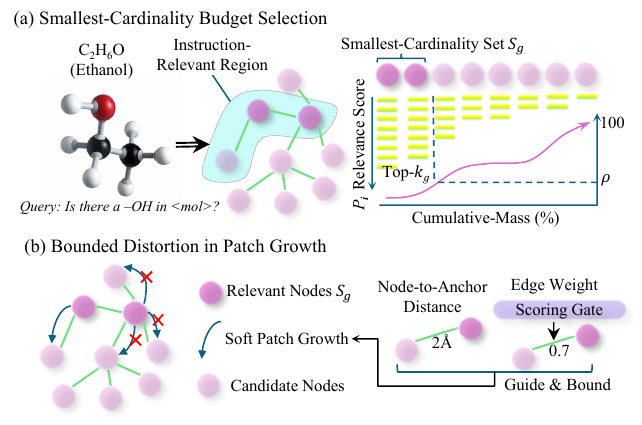}}
    \vskip -5pt
    \caption{
      Illustration of the two components in the adaptive compression analysis.
      (a) Smallest-cardinality budget selection by sorting node relevance scores and choosing the minimal anchor set $S_g$ whose cumulative mass reaches the threshold $\rho$;
      (b) soft patch growth from relevant anchors to nearby candidate nodes, guided by node-to-anchor distance and edge-based scoring.
    }
    \label{fig:p1_adaptive_compression}
  \end{center}
  \vskip -10pt
\end{figure}

\begin{assumption}[Local smoothness of structural signal]
\label{ass:smooth}
There exists a constant $L>0$ such that
\[
\|X_i - X_j\| \le L \|P_i - P_j\|_2
\qquad \text{for all } i,j\in I_g.
\]
\end{assumption}

\begin{theorem}[Instruction-weighted compression distortion bound]
\label{thm:distortion}
Under Assumption~\ref{ass:smooth},
\[
\sum_{i\in I_g} p_i \|X_i - \hat X_i\|
\;\le\;
L \sum_{i\in I_g} p_i \sum_{a\in S_g} W_{i,a}
\Bigl(
\|P_i - P_a\|_2 + r_a
\Bigr),
\]
where the average radius of patch $a$ is
\[
r_a := \sum_{j\in I_g} \bar W_{j,a}\|P_j - P_a\|_2.
\]
Therefore the information loss caused by the patch bottleneck is controlled by two explicit
geometric quantities: the distance from each node to its assigned anchors, and the average
radius of the pooled anchor patches.
\end{theorem}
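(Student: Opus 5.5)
The plan is to bound the per-node error $\|X_i-\hat X_i\|$ pointwise for each $i\in I_g$, then multiply by $p_i\ge 0$ and sum. Summing is trivial once the pointwise bound holds, so all the work is in the pointwise step. Two convex-combination identities drive the argument. First, $\sum_{a\in S_g}W_{i,a}=1$ with $W_{i,a}\ge 0$, so
\[
X_i-\hat X_i=\sum_{a\in S_g}W_{i,a}(X_i-t_a).
\]
Second, $\sum_{j\in I_g}\bar W_{j,a}=1$ with $\bar W_{j,a}\ge 0$, so
\[
t_a=\sum_{j}\bar W_{j,a}X_j .
\]
The second identity needs $\sum_{u}W_{u,a}>0$. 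This holds because $W_{a,a}>0$: every softmax entry in Eq.~(2) is strictly positive.

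\textbf{Step 1 (outer triangle inequality).} By the first identity and the triangle inequality,
\[
\|X_i-\hat X_i\|\le\sum_{a\in S_g}W_{i,a}\|X_i-t_a\| .
\]
Insert the anchor embedding $X_a$, which is legitimate since $a\in S_g\subseteq I_g$ is itself a node. This gives $\|X_i-t_a\|\le\|X_i-X_a\|+\|X_a-t_a\|$. Assumption~\ref{ass:smooth} bounds the first term by $L\|P_i-P_a\|_2$.

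\textbf{Step 2 (patch-radius term).} Write
\[
X_a-t_a=\sum_j\bar W_{j,a}(X_a-X_j)
\]
using the second identity. The triangle inequality then gives $\|X_a-t_a\|\le\sum_j\bar W_{j,a}\|X_a-X_j\|$. Applying the Lipschitz assumption termwise yields $\|X_a-t_a\|\le L\sum_j\bar W_{j,a}\|P_j-P_a\|_2=L\,r_a$.

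\textbf{Step 3 (assemble).} Combining the two steps gives
\[
\|X_i-\hat X_i\|\le L\sum_{a}W_{i,a}\bigl(\|P_i-P_a\|_2+r_a\bigr).
\]
Weighting by $p_i$ and summing over $i\in I_g$ gives exactly the claimed inequality.

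The only point needing care is the choice of intermediate point in Step 1. Comparing $X_i$ directly with $t_a$ would produce $\sum_j\bar W_{j,a}\|P_i-P_j\|$ rather than the stated anchor-centered quantities. Routing through $X_a$ is what splits the error into the node-to-anchor distance and the patch radius $r_a$.

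Everything else is convexity plus the triangle inequality. In particular, Proposition~\ref{prop:min-cardinality} is not needed for the bound itself; it only explains why $S_g$ is the relevant anchor set.
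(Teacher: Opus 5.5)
Your proof is correct and gives exactly the stated bound. It differs from the paper's only in where the anchor is inserted.

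\textbf{The paper's route.} The paper writes $X_i-\hat X_i=\sum_{a\in S_g}\sum_{j\in I_g}W_{i,a}\bar W_{j,a}(X_i-X_j)$ as a single double convex combination. It applies the Lipschitz assumption once, to each pair $(i,j)$, obtaining $\|X_i-\hat X_i\|\le L\sum_{a}\sum_{j}W_{i,a}\bar W_{j,a}\|P_i-P_j\|_2$. Only then does it split $\|P_i-P_j\|_2\le\|P_i-P_a\|_2+\|P_j-P_a\|_2$ in coordinate space. You split through $X_a$ in feature space instead and apply the Lipschitz assumption to the pairs $(i,a)$ and $(a,j)$.

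\textbf{Your closing remark.} It suggests that routing through $X_a$ is what makes the anchor-centered quantities appear. It isn't needed: comparing $X_i$ directly with the pooled nodes $X_j$ produces $\sum_j\bar W_{j,a}\|P_i-P_j\|_2$, and one coordinate-space triangle inequality through $P_a$ then gives the stated bound. That is precisely the paper's proof.

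\textbf{What each route buys.} The paper's route never uses $X_a$, so it would still work if anchors were arbitrary reference points with positions $P_a$ rather than nodes. Its intermediate bound is also at least as tight as the final one. Your route requires $a\in I_g$, which you correctly verify. In exchange, it isolates the within-patch pooling error $\|X_a-t_a\|\le L\,r_a$ as a separately interpretable term. Your check that $\sum_{u}W_{u,a}>0$, so that $\bar W$ is well defined, is a small point of rigor the paper leaves implicit.
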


\begin{proof}
For any $i\in I_g$,
\[
\hat X_i
=
\sum_{a\in S_g} W_{i,a} t_a
=
\sum_{a\in S_g}\sum_{j\in I_g} W_{i,a}\bar W_{j,a} X_j.
\]
Since $\sum_{a\in S_g} W_{i,a}=1$ and $\sum_{j\in I_g}\bar W_{j,a}=1$ for each $a$,
we have
\[
\sum_{a\in S_g}\sum_{j\in I_g} W_{i,a}\bar W_{j,a}=1.
\]
Hence
\begin{align*}
X_i - \hat X_i
&=
\sum_{a\in S_g}\sum_{j\in I_g} W_{i,a}\bar W_{j,a}(X_i - X_j),
\end{align*}
and by the triangle inequality,
\begin{align*}
\|X_i - \hat X_i\|
&\le
\sum_{a\in S_g}\sum_{j\in I_g} W_{i,a}\bar W_{j,a}\|X_i - X_j\| \\
&\le
L \sum_{a\in S_g}\sum_{j\in I_g} W_{i,a}\bar W_{j,a}\|P_i - P_j\|_2.
\end{align*}
Applying the triangle inequality again in coordinate space,
\[
\|P_i - P_j\|_2 \le \|P_i - P_a\|_2 + \|P_j - P_a\|_2,
\]
so
\begin{align*}
\|X_i - \hat X_i\|
&\le
L \sum_{a\in S_g}\sum_{j\in I_g} W_{i,a}\bar W_{j,a}
\Bigl(\|P_i - P_a\|_2 + \|P_j - P_a\|_2\Bigr) \\
&=
L \sum_{a\in S_g} W_{i,a}\|P_i - P_a\|_2
+
L \sum_{a\in S_g} W_{i,a}\sum_{j\in I_g}\bar W_{j,a}\|P_j - P_a\|_2 \\
&=
L \sum_{a\in S_g} W_{i,a}\Bigl(\|P_i - P_a\|_2 + r_a\Bigr).
\end{align*}
Multiplying by $p_i$ and summing over $i\in I_g$ proves the claim.
\end{proof}

\begin{corollary}[Interpretation]
If the selected anchors stay close to instruction-relevant nodes and the resulting patches have small
average radii, then the instruction-weighted distortion of the structural bottleneck is small.
This formalizes the mechanism by which Scaling-Aware Patching preserves fine-grained geometry.
Fig.~\ref{fig:p2_patch_growth} shows a detailed example of this soft patch growth process.
\end{corollary}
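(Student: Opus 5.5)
We derive the corollary directly from Theorem~\ref{thm:distortion}. The plan is to turn the informal hypotheses ("anchors close to instruction-relevant nodes", "small average radii") into two explicit scalar quantities and check that the right-hand side of the theorem is controlled by their sum. Define the instruction-weighted anchor distance
\[
\delta_g := \sum_{i\in I_g} p_i \sum_{a\in S_g} W_{i,a}\,\|P_i-P_a\|_2
\]
and the maximal patch radius $r_g^{\max} := \max_{a\in S_g} r_a$. The corollary will be read as the statement that, under Assumption~\ref{ass:smooth}, the distortion $\sum_i p_i\|X_i-\hat X_i\|$ is at most $L(\delta_g + r_g^{\max})$. Hence it is small whenever both $\delta_g$ and $r_g^{\max}$ are small.

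\textbf{Step 1 (split the bound).} Expand the right-hand side of Theorem~\ref{thm:distortion} into two terms. The first is exactly $L\delta_g$. The second is $L\sum_i p_i\sum_a W_{i,a} r_a$. Bound each $r_a$ by $r_g^{\max}$, and use the normalizations $\sum_{a\in S_g}W_{i,a}=1$ and $\sum_{i\in I_g}p_i=1$ (softmax over $I_g$). Together these show that the weights $p_iW_{i,a}$ form a probability distribution on $I_g\times S_g$, so the second term is at most $L r_g^{\max}$. A sharper variant keeps the averaged radius $\bar r_g := \sum_a(\sum_i p_iW_{i,a})r_a$ in place of the maximum.

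\textbf{Step 2 (tie $\delta_g$ to the selected anchors).} This step gives content to "anchors stay close to instruction-relevant nodes". Split the sum defining $\delta_g$ into $i\in S_g$ and $i\notin S_g$. For $i\notin S_g$, bound $\|P_i-P_a\|_2$ by the diameter $\mathrm{diam}_g$ of the structure. By Proposition~\ref{prop:min-cardinality}, $\sum_{i\notin S_g}p_i\le 1-\rho$, so this part contributes at most $(1-\rho)\,\mathrm{diam}_g$. For $i\in S_g$, each anchor is its own nearest anchor at distance zero. The remaining contribution comes only from the soft mass $W_{i,a}$ that $i$ assigns to other anchors $a\neq i$, weighted by their distances. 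This mass is small when anchors are well separated relative to the spread of the logits $\ell_a$, by the exponential form of Eq.~(2). Combining the two parts gives
\[
\sum_{i}p_i\|X_i-\hat X_i\|\le L\Bigl((1-\rho)\,\mathrm{diam}_g + \sum_{i\in S_g}p_i\sum_{a\ne i}W_{i,a}\|P_i-P_a\|_2 + r_g^{\max}\Bigr),
\]
which is the quantitative form of the corollary.

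\textbf{Main obstacle.} The delicate point is the in-anchor cross term in Step~2. A large logit $\ell_a$ lets a distant anchor absorb mass, so this term does not vanish automatically. My first attempt would bound $W_{i,a}\le \exp(-\|P_i-P_a\|_2^2+\ell_a-\ell_i)$, using the fact that the $a=i$ term appears in the denominator. Then $\|P_i-P_a\|_2e^{-\|P_i-P_a\|_2^2}$ is small once inter-anchor distances exceed the logit gaps. If this proves too technical, I would simply state the corollary with $\delta_g$ as the hypothesis, since Step~1 alone already establishes the claimed interpretation rigorously.
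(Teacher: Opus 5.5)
Your Step~1 is the paper's argument: the paper gives no separate proof of this corollary, which is simply a reading of the two terms on the right-hand side of Theorem~\ref{thm:distortion}, and your observation that $\sum_{i,a}p_iW_{i,a}=1$ makes that reading precise as the bound $L(\delta_g+r_g^{\max})$. Step~2 is correct but optional, and its $(1-\rho)\,\mathrm{diam}_g$ term ignores the closeness hypothesis entirely and is nearly vacuous at the default $\rho=0.1$, so stating the corollary with $\delta_g$ as the hypothesis, as you suggest, is the better reading.
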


\begin{figure}[t]
  \begin{center}
    \centerline{\includegraphics[width=0.7\linewidth]{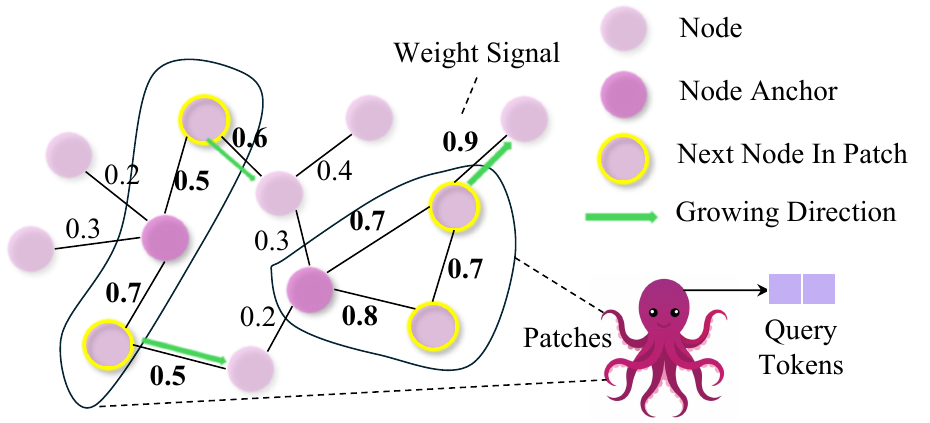}}
    \vskip -5pt
    \caption{
      Detailed example of soft patch growth from selected node anchors. Edge weights indicate the growth signal between neighboring nodes, yellow circles mark the next nodes added into each patch, green arrows indicate the expansion direction, and the resulting patches are mapped to query tokens.
    }
    \label{fig:p2_patch_growth}
  \end{center}
  \vskip -10pt
\end{figure}

\begin{theorem}[Lower bound for fixed-length representative compression]
\label{thm:fixedk}
Let $v_1,\dots,v_M\in\mathbb{R}^d$ denote $M$ instruction-relevant region descriptors, and assume
they are pairwise separated:
\[
\|v_m - v_{m'}\| > 2\varepsilon
\qquad \text{for all } m\neq m'.
\]
For any fixed set of $K<M$ representative tokens $c_1,\dots,c_K\in\mathbb{R}^d$, define the
covering distortion
\[
\mathcal D(c_1,\dots,c_K)
:=
\frac{1}{M}\sum_{m=1}^M \min_{1\le k\le K}\|v_m - c_k\|.
\]
Then
\[
\mathcal D(c_1,\dots,c_K) > \varepsilon\Bigl(1-\frac{K}{M}\Bigr).
\]
\end{theorem}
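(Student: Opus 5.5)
The proof is a pigeonhole argument: each fixed representative token can serve as a close approximation for at most one of the separated descriptors, so at least $M-K$ descriptors are poorly covered. Fix arbitrary $c_1,\dots,c_K\in\mathbb{R}^d$ with $K<M$. For each $m$, pick a nearest representative $k(m)\in\arg\min_{k}\|v_m-c_k\|$ (ties broken arbitrarily). Call descriptor $v_m$ \emph{well-covered} if $\min_k\|v_m-c_k\|\le\varepsilon$, and \emph{poorly covered} otherwise.

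The key step is to show that no representative $c_k$ can be within distance $\varepsilon$ of two distinct descriptors. Suppose $\|v_m-c_k\|\le\varepsilon$ and $\|v_{m'}-c_k\|\le\varepsilon$ with $m\neq m'$. The triangle inequality then gives
\[
\|v_m-v_{m'}\|\le \|v_m-c_k\|+\|c_k-v_{m'}\|\le 2\varepsilon,
\]
which contradicts the pairwise separation hypothesis $\|v_m-v_{m'}\|>2\varepsilon$. Consequently, the map $m\mapsto k(m)$, restricted to the well-covered descriptors, is injective into $\{1,\dots,K\}$. So there are at most $K$ well-covered descriptors and at least $M-K\ge 1$ poorly covered ones.

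To conclude, split the sum defining $\mathcal D$ into the two classes. Each poorly covered term is strictly greater than $\varepsilon$, and each well-covered term is nonnegative. Since $M-K\ge1$, there is at least one strict term, so
\[
\sum_{m=1}^M\min_k\|v_m-c_k\| > (M-K)\,\varepsilon.
\]
Dividing by $M$ yields $\mathcal D(c_1,\dots,c_K)>\varepsilon(1-K/M)$, as claimed.

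The only delicate point is keeping the final inequality strict. This is why the hypothesis $K<M$ matters: it guarantees the set of poorly covered descriptors is nonempty. No earlier result from the paper is needed; the argument is self-contained.
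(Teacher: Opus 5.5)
Your proof is correct and follows essentially the same route as the paper: the triangle inequality shows that each $c_k$ lies within $\varepsilon$ of at most one $v_m$, so at least $M-K$ descriptors contribute more than $\varepsilon$ each. Your explicit injectivity of the nearest-representative map, and your remark that $K<M$ is what keeps the inequality strict, make this slightly more careful than the paper's write-up.
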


\begin{proof}
A single representative $c_k$ can lie within distance $\varepsilon$ of at most one descriptor $v_m$.
Indeed, if for some $m\neq m'$,
\[
\|v_m - c_k\|\le \varepsilon
\qquad \text{and} \qquad
\|v_{m'} - c_k\|\le \varepsilon,
\]
then by the triangle inequality,
\[
\|v_m - v_{m'}\|
\le
\|v_m - c_k\| + \|c_k - v_{m'}\|
\le
2\varepsilon,
\]
contradicting the pairwise separation assumption.

Therefore at most $K$ of the $M$ descriptors can be covered within distance $\varepsilon$ by
the $K$ representatives. The remaining $M-K$ descriptors must each satisfy
\[
\min_{1\le k\le K}\|v_m - c_k\| > \varepsilon.
\]
Hence
\[
\mathcal D(c_1,\dots,c_K)
=
\frac{1}{M}\sum_{m=1}^M \min_{1\le k\le K}\|v_m - c_k\|
>
\frac{M-K}{M}\varepsilon
=
\varepsilon\Bigl(1-\frac{K}{M}\Bigr).
\]
\end{proof}

\begin{corollary}[Mechanistic meaning]
Theorem~\ref{thm:fixedk} does \emph{not} claim a lower bound for every possible neural connector.
It establishes a lower bound for fixed-length \emph{representative-set compression} under a standard
covering-distortion surrogate. Under this surrogate, once the number of well-separated relevant
regions exceeds the fixed token budget $K$, a nonzero compression error is unavoidable.
Fig.~\ref{fig:p3_bottleneck_comparison} illustrates the contrast between adaptive and fixed-length compression strategies.
\end{corollary}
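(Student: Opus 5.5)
The plan is to derive the corollary directly from Theorem~\ref{thm:fixedk}, reading its conclusion as a statement about the covering-distortion surrogate $\mathcal D$. The corollary makes two assertions: a positive one, that nonzero error is unavoidable once $M>K$, and a delimiting one, that the claim concerns only representative-set compression. I would handle them separately.

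\textbf{Positive claim.} Fix any $M$ instruction-relevant descriptors $v_1,\dots,v_M$ that are pairwise $2\varepsilon$-separated with $\varepsilon\ge 0$, and any fixed budget $K<M$. For arbitrary representatives $c_1,\dots,c_K$, Theorem~\ref{thm:fixedk} gives
\[
\mathcal D(c_1,\dots,c_K)>\varepsilon\bigl(1-K/M\bigr).
\]
Since $K<M$, the factor $1-K/M$ is strictly positive, so the right-hand side is at least $0$. The inequality is strict, so $\mathcal D>0$ for every choice of representatives. This already gives a nonzero error, and the strictness is what covers the degenerate case $\varepsilon=0$. Because the bound holds uniformly over all $(c_k)$, it also passes to the infimum, giving
\[
\inf_{c}\mathcal D\ge\varepsilon(1-K/M).
\]
This lower bound is quantitatively positive whenever $\varepsilon>0$.

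\textbf{Contrast with adaptive budgets.} I would show that the bound is tight in the regime where the budget matches the number of regions. If $K\ge M$, choosing $c_m=v_m$ for $m\le M$ gives $\mathcal D=0$. Hence the obstruction comes exactly from $K$ being smaller than the number of well-separated regions. This is what motivates letting $k_g$ grow with $M$, as Proposition~\ref{prop:min-cardinality} does through the mass threshold $\rho$.

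\textbf{Delimiting claim.} This part is a scoping remark rather than a theorem. I would justify it by pointing out that the hypotheses of Theorem~\ref{thm:fixedk} only constrain the output to be $K$ points measured under the min-distance surrogate. A general neural connector, for example one whose tokens encode several regions jointly, is not forced into this form. So no lower bound for such connectors follows from the argument.

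\textbf{Main obstacle.} The only delicate step is the degenerate case $\varepsilon=0$, where the stated bound gives no quantitative slack. I would rely on the strict inequality in Theorem~\ref{thm:fixedk} to still obtain $\mathcal D>0$, since the descriptors are pairwise distinct and the pigeonhole argument forces at least one descriptor to miss every representative.
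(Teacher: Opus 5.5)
Your proposal is correct and matches the paper, which gives no separate proof and treats this corollary as an immediate reading of Theorem~\ref{thm:fixedk}: $K<M$ makes $\varepsilon(1-K/M)$ positive, and the scoping remark just restates the theorem's hypotheses. Your extras are fine, with two small caveats: the $\varepsilon=0$ case is not really delicate, because finitely many distinct descriptors are $2\varepsilon'$-separated for some $\varepsilon'>0$, which even yields a uniform positive bound on $\inf_c\mathcal D$; and the link to Proposition~\ref{prop:min-cardinality} is motivation only, since $k_g$ is set by the gate mass and capped at $K_{\max}$, not by $M$.
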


\begin{figure}[t]
  \begin{center}
    \centerline{\includegraphics[width=0.5\linewidth]{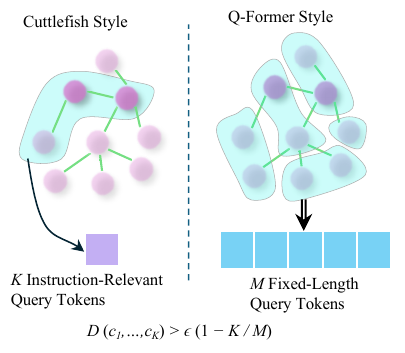}}
    \vskip -5pt
    \caption{
      Comparison between adaptive instruction-relevant query construction and fixed-length query compression.
    }
    \label{fig:p3_bottleneck_comparison}
  \end{center}
  \vskip -10pt
\end{figure}

\subsection{Geometry Grounding Reduces Irreducible Ambiguity}
\label{app:geometry-grounding}

Let $S$ denote the sequence-level input, $G$ the geometry, $Z$ the instruction, and $Y$ the target answer.
Let
\[
T = T_\theta(S,G,Z)
\]
denote the geometry-grounded tokens produced after the Geometry Grounding Adapter and injected into
the LLM.

\begin{theorem}[Bayes-risk reduction under log loss]
\label{thm:bayes}
Under log loss, the Bayes-optimal sequence-only risk is
\[
\mathcal R_{\mathrm{seq}}^* = H(Y\mid S,Z),
\]
whereas the Bayes-optimal geometry-grounded risk is
\[
\mathcal R_{\mathrm{geom}}^* = H(Y\mid S,T,Z).
\]
Consequently,
\[
\mathcal R_{\mathrm{geom}}^* \le \mathcal R_{\mathrm{seq}}^*,
\]
and equality holds if and only if
\[
I(Y;T\mid S,Z)=0.
\]
Equivalently,
\[
\mathcal R_{\mathrm{seq}}^* - \mathcal R_{\mathrm{geom}}^*
=
I(Y;T\mid S,Z).
\]
\end{theorem}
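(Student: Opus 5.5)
The plan has two parts. First, identify the Bayes-optimal risk under log loss as a conditional entropy. Then use the chain rule for mutual information to compare the two risks.

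\textbf{Step 1: Bayes risk equals conditional entropy.} For any predictor $q(\cdot\mid U)$ that is a measurable function of the available information $U$, the expected log loss is
\[
\mathbb{E}[-\log q(Y\mid U)] = H(Y\mid U) + \mathbb{E}_U\bigl[\mathrm{KL}(p(\cdot\mid U)\,\|\,q(\cdot\mid U))\bigr].
\]
This follows by adding and subtracting $\log p(Y\mid U)$. Since the KL term is nonnegative and vanishes exactly at $q=p(\cdot\mid U)$, the infimum over predictors is $H(Y\mid U)$, and it is attained by the true posterior.

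\textbf{Step 2: Apply Step 1 twice.} Taking $U=(S,Z)$ gives $\mathcal R_{\mathrm{seq}}^*=H(Y\mid S,Z)$. Taking $U=(S,T,Z)$ gives $\mathcal R_{\mathrm{geom}}^*=H(Y\mid S,T,Z)$.

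I would stress one modeling convention here. The geometry-grounded predictor conditions on $(S,T,Z)$, not on $(S,G,Z)$. This choice is what makes the theorem about the adapter's output $T$ rather than about raw geometry. Because $T=T_\theta(S,G,Z)$ is a deterministic function, $(S,T,Z)$ is a well-defined random variable and everything is well posed.

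\textbf{Step 3: The difference is a conditional mutual information.} By the definition of conditional mutual information,
\[
I(Y;T\mid S,Z)=H(Y\mid S,Z)-H(Y\mid S,T,Z).
\]
This is exactly the claimed identity $\mathcal R_{\mathrm{seq}}^*-\mathcal R_{\mathrm{geom}}^*=I(Y;T\mid S,Z)$. Nonnegativity of conditional mutual information then yields $\mathcal R_{\mathrm{geom}}^*\le\mathcal R_{\mathrm{seq}}^*$. Nonnegativity holds because $I(Y;T\mid S,Z)$ is an expected KL divergence between $p(y,t\mid s,z)$ and $p(y\mid s,z)\,p(t\mid s,z)$.

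\textbf{Step 4: The equality case.} Equality holds if and only if $I(Y;T\mid S,Z)=0$, which is immediate from the identity in Step 3. One can also restate it as conditional independence $Y\perp T\mid (S,Z)$, since a KL divergence vanishes only when its two arguments coincide almost surely.

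\textbf{Expected obstacle.} The main difficulty is not computational but one of rigor and assumptions. The entropies must be finite, which is automatic for a discrete $Y$ such as token sequences over a finite vocabulary with bounded length. If $T$ is continuous, conditional mutual information has to be defined via KL divergence rather than via differential entropies of $T$. I would handle this by working throughout with conditional entropies of the discrete $Y$, which avoids differential entropy of $T$ entirely. I would also assume the regular conditional distributions exist, which holds in standard Borel spaces.
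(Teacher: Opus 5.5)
Your proposal is correct and follows essentially the same route as the paper: identify each Bayes risk with the corresponding conditional entropy, then read off the gap as $I(Y;T\mid S,Z)$ from the definition of conditional mutual information. Your inequality comes from nonnegativity of $I(Y;T\mid S,Z)$, which the paper phrases as ``conditioning cannot increase entropy''; your KL decomposition in Step 1 and your finiteness remarks simply make explicit what the paper asserts without proof.
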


\begin{proof}
For log loss, the Bayes-optimal predictor is the true conditional distribution, and the minimum
expected loss equals the corresponding conditional entropy. Therefore
\[
\mathcal R_{\mathrm{seq}}^* = H(Y\mid S,Z),
\qquad
\mathcal R_{\mathrm{geom}}^* = H(Y\mid S,T,Z).
\]
Since conditioning cannot increase entropy,
\[
H(Y\mid S,T,Z)\le H(Y\mid S,Z).
\]
Moreover, by the definition of conditional mutual information,
\[
I(Y;T\mid S,Z)
=
H(Y\mid S,Z)-H(Y\mid S,T,Z).
\]
Rearranging yields the stated equality and the inequality.
\end{proof}

\begin{corollary}[Interpretation]
Whenever geometry-grounded tokens $T$ contain answer-relevant information that is not identifiable
from sequence alone, explicit geometry grounding strictly decreases the irreducible uncertainty of the task.
This gives a precise uncertainty-level explanation for why geometry grounding can reduce hallucination.
Fig.~\ref{fig:p4_geometry_grounding} illustrates this argument under three input conditions.
\end{corollary}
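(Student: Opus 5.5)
The plan is to make the informal hypothesis precise and then read the conclusion off Theorem~\ref{thm:bayes}. I will formalize ``$T$ contains answer-relevant information that is not identifiable from sequence alone'' as the statement that $Y$ and $T$ are \emph{not} conditionally independent given $(S,Z)$. Concretely, there is a set of $(s,z)$ values of positive probability on which the posterior $P(Y\mid S=s,T,Z=z)$ differs from $P(Y\mid S=s,Z=z)$ with positive probability over $T$. Under this reading, the corollary reduces to showing $I(Y;T\mid S,Z)>0$. Once that holds, the identity $\mathcal R_{\mathrm{seq}}^*-\mathcal R_{\mathrm{geom}}^*=I(Y;T\mid S,Z)$ from Theorem~\ref{thm:bayes} gives the strict inequality $\mathcal R_{\mathrm{geom}}^*<\mathcal R_{\mathrm{seq}}^*$.

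The key step is to write the conditional mutual information as an expected KL divergence:
\[
I(Y;T\mid S,Z)=\mathbb E_{S,T,Z}\Bigl[\mathrm{KL}\bigl(P(Y\mid S,T,Z)\,\big\|\,P(Y\mid S,Z)\bigr)\Bigr].
\]
The integrand is nonnegative, and by Gibbs' inequality it vanishes only where the two posteriors coincide. Under the hypothesis, the integrand is strictly positive on a set of positive measure, so the expectation is strictly positive. Conversely, if the posteriors agree almost surely, then $I(Y;T\mid S,Z)=0$, which recovers the equality case already stated in Theorem~\ref{thm:bayes}. So the corollary is exactly the ``strict'' direction of that iff.

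I would also add one remark linking the hypothesis to the architecture. Because $T=T_\theta(S,G,Z)$ is a deterministic function of $(S,G,Z)$, the data-processing inequality gives $I(Y;T\mid S,Z)\le I(Y;G\mid S,Z)$. Hence geometry can help only if it is itself informative beyond sequence, and only to the extent that the adapter preserves that information. This connects the corollary to the distortion control in Theorem~\ref{thm:distortion}.

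The main obstacle is not the calculation but the hypothesis itself. ``Not identifiable from sequence alone'' must be stated in measure-theoretic terms, as conditional dependence on a positive-probability set, rather than pointwise. In the continuous case I would handle this through regular conditional distributions. The link to hallucination should then be presented as interpretation rather than as a proved claim.
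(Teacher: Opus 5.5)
Your proposal is correct and follows the paper's own route: the corollary is simply the strict case of the equality condition in Theorem~\ref{thm:bayes}, with the hypothesis read as $I(Y;T\mid S,Z)>0$. Your expected-KL/Gibbs argument makes explicit why conditional dependence of $Y$ and $T$ given $(S,Z)$ forces this, tacitly assuming (as the paper also does) that $H(Y\mid S,Z)<\infty$ so the risk difference is well defined. Your data-processing remark $I(Y;T\mid S,Z)\le I(Y;G\mid S,Z)$ is a valid addition, but its link to Theorem~\ref{thm:distortion} should stay heuristic, since a norm-distortion bound on patch reconstructions does not by itself control conditional mutual information.
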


\begin{figure}[t]
  \begin{center}
    \centerline{\includegraphics[width=\linewidth]{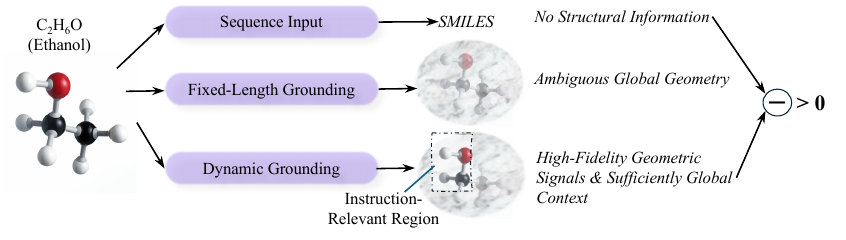}}
    \vskip -5pt
    \caption{
      Illustration of the geometry-grounding argument under three input conditions. The top branch uses sequence alone, the middle branch uses fixed-length grounding, and the bottom branch uses dynamic grounding over the instruction-relevant region, showing the different levels of structural information retained before prediction.
    }
    \label{fig:p4_geometry_grounding}
  \end{center}
  \vskip -10pt
\end{figure}

\subsection{Definitions on Scaling and Modality Scope}
\label{app:clarification}

\paragraph{Scaling Definition (Entity Complexity).}
In this paper, \textbf{scaling denotes all-atom entity complexity scaling, defined directly by the atom count $N_{\text{atom}}$ of the input entity.}
We do not study backbone scaling (e.g., parameter count, depth) and do not attribute scaling effects to increasing LLM capacity. Scaling arises from larger structural entities containing more atoms and thus richer local environments and long-range relations. When structural inputs are tokenized, the resulting structural token count is a consequence of $N_{\text{atom}}$, not the definition of scaling.

\paragraph{Scope of ``Molecules''.}
In this work, the term \textbf{molecules refers to small-molecule chemical entities}, including drug-like compounds, ligands, metabolites, and related low–molecular-weight species. These entities are characterized by moderate atom counts and well-defined stereochemistry. Large macromolecules are explicitly excluded from this category; in particular, proteins and nucleic acids are not treated as molecules for scaling or evaluation purposes in this paper.

\paragraph{Scope of ``Proteins''.}
The \textbf{protein modality encompasses a broad granularity range}, including short peptides, partial protein sequences or domains, and full-length functional proteins. Scaling within this modality reflects variations in amino-acid length and corresponding all-atom representations, while remaining within the biochemical definition of proteins.

\paragraph{DNA and RNA Structures.}
For nucleic acids, we consider only \textbf{sequence-derived all-atom structures} that are directly displayed from linear DNA or RNA sequences. Higher-order folding phenomena occurring in the cellular nucleus or cytoplasm are not modeled. This design choice is motivated by: (i) the extreme complexity and context-dependence of in vivo nucleic-acid folding, which is beyond the scope of unified all-atom learning; (ii) the objective of maintaining a consistent structural abstraction across modalities; and (iii) the empirical observation that nucleic acids are not the primary performance driver of the proposed framework. The modest yet consistent improvements observed on DNA/RNA benchmarks support the feasibility of this simplified representation, while full nucleic-acid folding is left for future investigation.

\section{Related Literature Review}
\label{app: related work}

\begin{table*}[t]
\centering
\small
\caption{Comparison of All-Atom Models for Protein Structure and Molecular Design. Representative works are categorized by their architectural intent, including structural representation, generative design, and LLM-guided optimization.}
\resizebox{\linewidth}{!}{
\begin{tabular}{p{0.4\textwidth} p{0.35\textwidth} p{0.25\textwidth}}
\toprule
\textbf{Model / Paper} & \textbf{What it does} & \textbf{Category} \\
\midrule
AlphaFold1~\citep{senior2020alphafold1} & End-to-end protein structure prediction & Structural ML \\
AlphaFold2~\citep{jumper2021alphafold2} & Near-experimental protein folding at scale & Structural ML \\
AlphaFold3~\citep{abramson2024alphafold3} & Joint prediction of biomolecular interactions & Structural foundation model \\
ATOMICA~\citep{fang2025atomica} & Universal atom-level interaction representations & Structural representation model \\
ODesign~\citep{zhang2025odesign} & World-model for interaction-driven design & Structural generative model \\
BoltzGen~\citep{stark2025boltzgen} & Generative universal protein binder design & Binder generative model \\
All-atom Diffusion Transformers~\citep{joshi2025allatomtransformer} & Unified atomistic diffusion modeling & Structural foundation model \\
MolAct~\citep{yang2025molact} & Agent-based molecular editing and optimization & Molecular agent model \\
CIDD~\citep{gaocidd} & LLM-guided structure-based drug design & Structure-grounded LLM \\
\bottomrule
\end{tabular}
}

\label{tab:related_work_all_atom}
\end{table*}

This appendix complements the main-text related work and summarizes it as Tab.~\ref{tab:related_work_all_atom} and Tab.~\ref{tab:related_work_multimodal}. This section emphasizes two threads: structure-grounded biology LLMs organized by fusion strategy, and recent atom-level structural foundation and generative models that demonstrate feasible all-atom scaling.

\subsection{Structure LLMs: From Sequence Supervision to Structure Grounding}
\label{app:rw_structure_llms}

\paragraph{Modality scope.}
Prior biology LLMs span molecules, proteins, and nucleic acids, with parallel development patterns across modalities. Molecular systems cover representation alignment, captioning, reasoning, and instruction following; protein systems extend from sequence alignment to open-ended dialogue and structure-conditioned interaction; nucleic-acid systems include unified DNA and RNA modeling and RNA-focused instruction tuning.

\paragraph{Integration strategies.}
Existing approaches can be organized by how structural information is exposed to the language model.
\textbf{(i) Sequence-only modeling.}
Several baselines rely on sequence inputs without explicit structural grounding, including MolT5~\citep{edwards2022molT5} for molecules, ProtST~\citep{xu2023protst} for proteins, RNA-GPT~\citep{xiao2024rnagpt} for nucleic acids, and ChatNT~\citep{de2025chatnt} via sequence concatenation across complex. This line prioritizes language modeling convenience but leaves geometry implicit.

\textbf{(ii) Embedding-only alignment.}
Contrastive co-embedding methods align molecular structure and text for retrieval and editing (MoleculeSTM~\citep{liu2023moleculestm}, MoMu~\citep{su2022momu}), and similar embedding-only modeling appears for drug target interaction (LLM3-DTI~\citep{zhang2025llm3DTI}). These methods provide strong alignment primitives yet do not directly instantiate token-level structural reasoning.

\textbf{(iii) Fixed-length projection and feature injection.}
A common design projects encoder outputs into a fixed token budget, as in InstructMol~\citep{cao2025instructmol} and Chem3DLLM~\citep{jiang2025chem3dllm}, or injects encoder features into prompts, as in ProteinGPT~\citep{guo2023proteinchat}. These designs are simple and effective, but they enforce constant-capacity structure summaries.

\textbf{(iv) Q-Former-style query connectors and variants.}
Many models adopt fixed-length query tokens that attend to encoder representations, producing a small set of modality anchors. This connector pattern appears in molecular settings (GIT-Mol~\citep{liu2023gitmol}, 3D-MoLM~\citep{li20243D-MoLM}, UniMoT~\citep{guo2025unimolt}, MolCA~\citep{liu2023molca}), protein dialogue settings (ProtChatGPT~\citep{wang2024protchatgpt}), and materials QA (MatterChat~\citep{tang2025matterchat}), with instruction-conditioned variants (Prot2Chat~\citep{wang2025prot2chat}, LLaMo~\citep{park2024llamo}) and multi-stream extensions. In parallel, cross-attention bridges can be framed as tokenizing structural embeddings through attention, as in Graph2Token~\citep{wang2025graph2token}. DeepMolTex~\citep{yan2025deepmoltex} extends projection fusion with multi-scale projectors spanning atom, motif, and molecule representations, still under a fixed-length fusion regime. 

\begin{table}[t]
\centering
\small
\vskip 10 pt

\caption{Comparison of Structure-Grounded Foundation Models categorized by Domain and Fusion Strategy. The models are organized into five primary domains based on the chemical entity they represent: Molecular LLMs, Protein LLMs, Nucleic Acid LLMs, and Materials LLMs focus on generative or reasoning tasks, while Embedding Models prioritize latent-space alignment for retrieval. 
}

\label{tab:related_work_multimodal}
\resizebox{\textwidth}{!}{
\begin{tabular}{p{0.25\linewidth} p{0.35\linewidth} p{0.14\linewidth} p{0.18\linewidth}}
\toprule
\textbf{Model} & \textbf{Core Idea} & \textbf{Category} & \textbf{Fusion} \\
\midrule
MoleculeSTM~\citep{liu2023moleculestm} & Contrastive alignment between molecular structures and text for retrieval and text-guided molecular editing & Molecular Embedding & Embedding only \\
MoMu~\citep{su2022momu} & Contrastive graph--text alignment supporting retrieval and molecular captioning & Molecular LLM & Fixed length projection fusion \\
MolT5~\citep{pei20243dmolt5} & Sequence based molecular text generation without explicit structural grounding & Molecular LLM & Sequence only \\
InstructMol~\citep{cao2025instructmol} & Instruction tuned molecular reasoning with graph encoders aligned to text & Molecular LLM & Fixed length projection fusion \\
GIT-Mol~\citep{liu2023gitmol} & Adapter-based instruction following over sequence, graph, and image modalities & Molecular LLM & Fixed-length Q-Former style connector \\
3D-MoLM~\citep{li20243D-MoLM} & Q-Former queries attending to 3D molecular representations from structure encoders & Molecular LLM & Fixed length Q-Former style connector \\
UniMoT~\citep{guo2025unimolt} & Unified molecular tokenizer for sequences with Q-Former based structural grounding & Molecular LLM & Fixed length Q-Former style connector \\
Graph2Token~\citep{wang2025graph2token} & Graph--text bridge using cross attention to convert graph embeddings into token representations & Molecular LLM & Fixed length cross attention fusion \\
MolCA~\citep{liu2023molca} & Joint modeling of molecular sequence and structure via Q-Former alignment & Molecular LLM & Fixed length Q-Former style connector \\
LLaMo~\citep{park2024llamo} & Multi-stream integration of sequence, graph tokens, and instructions for molecular reasoning & Molecular LLM & Fixed length multi-stream Q-Former connector \\
DeepMolTex~\citep{yan2025deepmoltex} & Multi-scale graph projectors aligning atom, motif, and molecule representations with text & Molecular LLM & Fixed length multi-stream projection fusion \\
ProtST~\citep{xu2023protst} & Contrastive alignment between protein sequences and textual descriptions & Protein LLM & Sequence only \\
ProtLLM~\citep{zhuo2024protllm} & Protein sequence encoding followed by text fusion for language modeling & Protein LLM & Sequence concatenation fusion \\
Prot2Text-V2~\citep{fei2025prot2textv2} & Encoder-based protein sequence representation fused with text for generation & Protein LLM & Sequence concatenation fusion \\
ProteinGPT~\citep{xiao2024proteingpt} & Injection of protein sequence and structure features into LLM prompts with instruction tuning & Protein LLM & Encoder feature injection \\
ProtChatGPT~\citep{wang2024protchatgpt} & Instruction guided protein chat with structural embedding grounding & Protein LLM & Fixed length Q-Former style connector \\
Chem3DLLM~\citep{jiang2025chem3dllm} & Structure--text fusion for protein reasoning using learned projections & Protein LLM & Fixed length projection fusion \\
Prot2Chat~\citep{wang2025prot2chat} & Instruction guided adapters integrating protein structure and text & Protein LLM & Fixed length instruction conditioned Q-Former connector \\
ChatNT~\citep{de2025chatnt} & Unified grounding of DNA, RNA, and protein sequences for language modeling & Nucleic Acid LLM & Sequence concatenation fusion \\
RNA-GPT~\citep{xiao2024rnagpt} & Instruction-tuned RNA sequence modeling without explicit structural input & Nucleic Acid LLM & Sequence only \\
LLM3-DTI~\citep{zhang2025llm3DTI} & Multimodal modeling of topology graphs and text for drug--target interaction & Embedding Model & Embedding only \\
DrugAgent~\citep{inoue2025drugagent} & Multi-agent LLM framework for drug--target interaction with planning and tool use & Agent System & LLM as planner and evaluator \\
MatterChat~\citep{tang2025matterchat} & Atom resolved structure grounded question answering for materials science & Materials LLM & Fixed length Q-Former style connector \\
\bottomrule
\end{tabular}
}
\vskip -10 pt
\end{table}

\subsection{The Development of All-Atom Modeling}
\label{app:rw_all_atom}

A distinct, rapidly maturing line demonstrates that atom-resolved modeling can scale to realistic biomolecular complexity (Tab.~\ref{tab:related_work_all_atom}). Structural ML and structural foundation models progress from protein folding at scale (AlphaFold~\citep{senior2020alphafold1}, AlphaFold2~\cite{jumper2021alphafold2}) to joint biomolecular interaction prediction (AlphaFold3~\citep{abramson2024alphafold3}). Atom-level representation and generative modeling further expand the capability envelope, including universal atom-level interaction representations (ATOMICA~\citep{fang2025atomica}), interaction-driven world-model design (ODesign~\citep{zhang2025odesign}), binder design (BoltzGen~\cite{stark2025boltzgen}), and unified atomistic diffusion modeling (All-atom Diffusion Transformers). Complementary directions connect atom-level structure to downstream design and optimization pipelines, including agent-based molecular editing (MolAct~\citep{yang2025molact}), LLM-guided structure-based drug design (CIDD~\citep{gaocidd}).

Collectively, these results motivate an all-atom perspective for language grounding: atom-resolved representations are now practical, and generative models can operate at atomistic resolution, suggesting a viable path to move beyond residue-level abstractions in structure-grounded LLMs.

\subsection{Q-Former Development and Widespread Use}
\label{app:rw_qformer}

Q-Former-style connectors operationalize cross-modal grounding by introducing a fixed set of learnable query tokens that attend to modality encoder outputs, producing a compact token sequence consumable by an LLM~\citep{li2023blip2}. In the biology and chemistry literature summarized in Tab.~\ref{tab:related_work_multimodal}, this mechanism is repeatedly adopted as a general-purpose fusion layer, spanning molecules (GIT-Mol~\citep{liu2023gitmol}, 3D-MoLM~\citep{li20243D-MoLM}, UniMoT~\citep{uniprot2019uniprot}, MolCA~\citep{liu2023molca}, LLaMo~\citep{park2024llamo}), proteins (ProtChatGPT~\citep{wang2024protchatgpt}, Prot2Chat~\citep{wang2025prot2chat}), and even materials (MatterChat~\citep{tang2025matterchat}). The prevalence of this pattern reflects two practical advantages: architectural modularity and compatibility with instruction tuning while maintaining an LLM-native token interface.

\subsection{Recurring Limitations Across Prior Designs}
\label{app:rw_limits}

The models in Tab.~\ref{tab:related_work_multimodal} expose three recurring limitations that align with the main-text discussion.

\paragraph{Sequence-only under-specifies geometry.}
Sequence-only and sequence-concatenation approaches do not explicitly represent 3D structure, limiting structure-faithful understanding and generation when geometry is essential (MolT5~\citep{edwards2022molT5}, ProtST~\citep{xu2023protst}, RNA-GPT~\citep{xiao2024rnagpt}, ChatNT~\citep{de2025chatnt}).

\paragraph{Fixed-capacity connectors induce structural bottlenecks.}
Projection fusion and Q-Former-style connectors compress variable-size structures into a constant token budget (InstructMol~\citep{cao2025instructmol}, Chem3DLLM~\citep{jiang2025chem3dllm}, Prot2Chat~\citep{wang2025prot2chat}, MatterChat~\citep{tang2025matterchat}). This design choice constrains scalability with entity size and atom-level detail.

\paragraph{Tokenization bridges mitigate but do not remove capacity mismatch.}
Cross-attention tokenization, exemplified by Graph2Token~\citep{wang2025graph2token}, reduces the gap between structural embeddings and language tokens, but it remains bounded by a fixed-length interface under increasing structural complexity.

These limitations motivate all-atom language grounding mechanisms that preserve fine geometry while allocating token budget adaptively with structural complexity, consistent with the framing in the main related-work section.  Note that Agent works such as DrugAgent~\citep{inoue2025drugagent}, LLMs usually behave as a judge or planner, which is not in our context.

\subsection{Connections to Graph Domain Adaptation, Prior Work, and Generative Modeling}
\label{app:rw_connections}

\paragraph{Graph learning and cross-distribution alignment.}
Graph representation and domain adaptation research provides foundational insights for \modelname's connector design. Structure-preserving graph learning~\citep{fang2022structure} established that topology-aware encoding respects structural invariants, a property that carries forward into domain adaptation, where attribute-driven~\citep{fang2025benefits}, homophily-enhanced~\citep{fang2025homophily}, homophily-agnostic~\citep{fang2026graph}, and multiview structural aggregation~\citep{fangsaga} methods address cross-distribution alignment under structural shift. These works motivate preserving instruction-relevant structural signals across the graph-to-language modality boundary.

\paragraph{Fine-grained molecular graph representation.}
Fine-grained molecular representation is a prerequisite for structure-grounded reasoning. MolGraph-xLSTM~\citep{sun2025molgraph} demonstrates that multi-scale graph encoding with mixture-of-experts attention yields interpretable, granular representations, motivating the patch-based aggregation in \modelname rather than flat pooling.

\paragraph{Relation to our prior molecular LLM work.}
\modelname extends our prior entropy-guided molecular LLM~\citep{jing2026entropy}, which used a Q-Former-style fixed-length connector for single-modality molecular understanding. The present work replaces the fixed anchor with an instruction-conditioned gate that expands dynamically into soft patches, enabling budget scaling across modalities from small molecules to large nucleic acids. Fine-tuning the LLM is also necessary here: 3D Cartesian coordinates lie outside LLM pre-training distributions, and no connector design alone can substitute for exposure to coordinate-grounded supervision across the full range of all-atom entity sizes.

\paragraph{Efficiency in multimodal LLM reasoning.}
Efficiency is a first-class concern when incorporating additional modalities into LLMs. Dynamic parallel tree search~\citep{ding2025dpts} demonstrates that structured inference-time computation can dramatically reduce reasoning cost without sacrificing output quality, and difficulty-aware knowledge distillation~\citep{he2025da-kd} shows that model compression guided by sample difficulty yields compact yet capable models. Both works underscore that unchecked complexity growth — whether from search overhead or model size — limits practical deployment. This motivates the adaptive token budget in \modelname: if structural token counts scaled linearly with entity size, the added modality would impose prohibitive context costs, making the system impractical for large biomolecules. Keeping the token budget sublinear preserves the efficiency that makes multimodal LLM reasoning viable at scale.

\paragraph{Generative AI for molecules and proteins as experimental context.}
The survey of generative AI for molecular and protein design~\citep{sun2025generative} contextualizes the interdisciplinary scope of \modelname's evaluation. Molecules and proteins together constitute a natural testbed for structure-conditioned language understanding, and the connector advances demonstrated here are intended to generalize to broader domain-specific scientific modeling settings.

\section{Implementation Details}
\label{app: Implementation Details}

This section details \modelname implementation, covering architecture and optimization hyperparameters, modality-specific atom-level preprocessing with aligned 3D coordinates plus filtering and sequence-only fallbacks on rare structure failures, and masked reasoning augmentation that injects teacher-generated latent deliberation spans while applying loss only to final-answer tokens.

\subsection{Hyperparameter and Settings}
\label{app: hyper and settings}
\begin{table}[t]
\caption{Hyperparameters and configuration settings for \modelname. The parameters specify the architectural dimensions for the graph encoder, the multimodal fusion blocks, and the spatial logic for the all-atom anchor assignment mechanism.}
\vskip -5 pt
\label{tab:model-hparams}
\centering
\small
\setlength{\tabcolsep}{6pt}
\resizebox{\textwidth}{!}{
\begin{tabular}{ll ll ll}
\toprule
Parameter & Value & Parameter & Value & Parameter & Value \\
\midrule
Graph encoder hidden size & 256
& Graph encoder depth & 8
& Graph encoder dropout & 0.1 \\

Coordinate updates & Enabled
& Encoder LayerNorm & Enabled
& Number of RBF bases & 32 \\

RBF cutoff distance & 10.0
& Fusion block count & 8
& Attention head count & 32 \\

Fusion model width & 4096
& Fusion MLP intermediate size & 16384
& Fusion dropout & 0.1 \\

Max anchors per graph & 2048
& Max nodes per anchor & None (soft pooling)
& Mass-based anchor fraction & 0.1 \\

Assignment distance scale & 1.0
& Assignment temperature & 0.1
& Gate MLP hidden size & 256 \\
\bottomrule
\end{tabular}
}
\end{table}

\textbf{Model Configuration.} Table \ref{tab:model-hparams} summarizes the core architecture hyperparameters of \modelname, covering the graph encoder, the instruction-conditioned patching/assignment module, and the cross-attention fusion with the frozen LLM. The graph encoder hidden size and graph encoder depth specify the representation width and number of message-passing (equivariant) layers used to embed molecular/protein/nucleic-acid structures, with graph encoder dropout controlling regularization within this stack. Coordinate updates indicate whether the encoder performs equivariant coordinate refinement alongside feature updates, and encoder LayerNorm denotes the use of normalization for stabilizing deep message passing. The encoder’s geometric featurization is parameterized by the number of Radial Basis Function (RBF) bases and RBF cutoff distance, which define the resolution and range of radial basis embeddings for interatomic distances. For multimodal integration, fusion block count, attention head count, fusion model width, and fusion MLP intermediate size define the capacity of the cross-attention transformer blocks that align graph-derived tokens with the LLM embedding space, with fusion dropout providing additional regularization. Finally, the patching/assignment module is governed by max anchors per graph and max nodes per anchor, which control how graph nodes are grouped into anchor-based patches (with ``None'' indicating soft pooling without a hard cap), and mass-based anchor fraction, assignment distance scale, and assignment temperature, which shape the softness and selectivity of node-to-anchor assignments. Gate MLP hidden size specifies the width of the gating network that modulates connector behavior, enabling adaptive routing of structural information into the fusion pathway.

\begin{table}[t]
\caption{Summary of training parameters used for modality alignment tuning. This includes the learning rate schedules, gradient accumulation strategies, etc., employed to optimize the multimodal integration between structural representations and the language model.}
\label{tab:train-hparams}
\vskip -5 pt
\centering
\small
\setlength{\tabcolsep}{6pt}
\resizebox{\linewidth}{!}{
\begin{tabular}{ll ll ll}
\toprule
Parameter & Value & Parameter & Value & Parameter & Value \\
\midrule
Base language model checkpoint & Llama-3.1-8B-Instruct
& Optimizer & AdamW
& Learning rate & 1e-4 \\

Training epochs & 4
& Per-device train batch size & 1
& Gradient accumulation steps & 16 \\

Warmup steps & 100
& Gradient clipping (max norm) & 1.0
& Weight decay & 0.01 \\

Adam beta1 & 0.9
& Adam beta2 & 0.999
& Adam epsilon & 1e-8 \\

Mixed precision (bf16) & Enabled
& Gradient checkpointing & Enabled
& Evaluation split ratio & 0.1 \\

Evaluation frequency (steps) & 100
& Logging frequency (steps) & 10
& DataLoader workers & 8 \\
\bottomrule
\end{tabular}
}
\end{table}
\textbf{Training Configuration.} Table \ref{tab:train-hparams} reports the main optimization and data-loading hyperparameters used in our two-stage tuning pipeline. In the modality alignment stage, we train the multimodal connector components with AdamW using a base learning rate of $1\times10^{-4}$, warmup, gradient accumulation, and gradient clipping for stability, together with bf16 and gradient checkpointing for memory efficiency. In the subsequent LLM adaptation stage, we keep the overall setup the same for fair comparison, but use a smaller learning rate ($1\times10^{-5}$) than modality alignment to safely adapt the language model while avoiding destabilizing previously aligned multimodal representations.

\subsection{Data Process}
\label{app: data process}
\begin{figure}[t]
  \begin{center}
    \centerline{\includegraphics[width=\columnwidth]{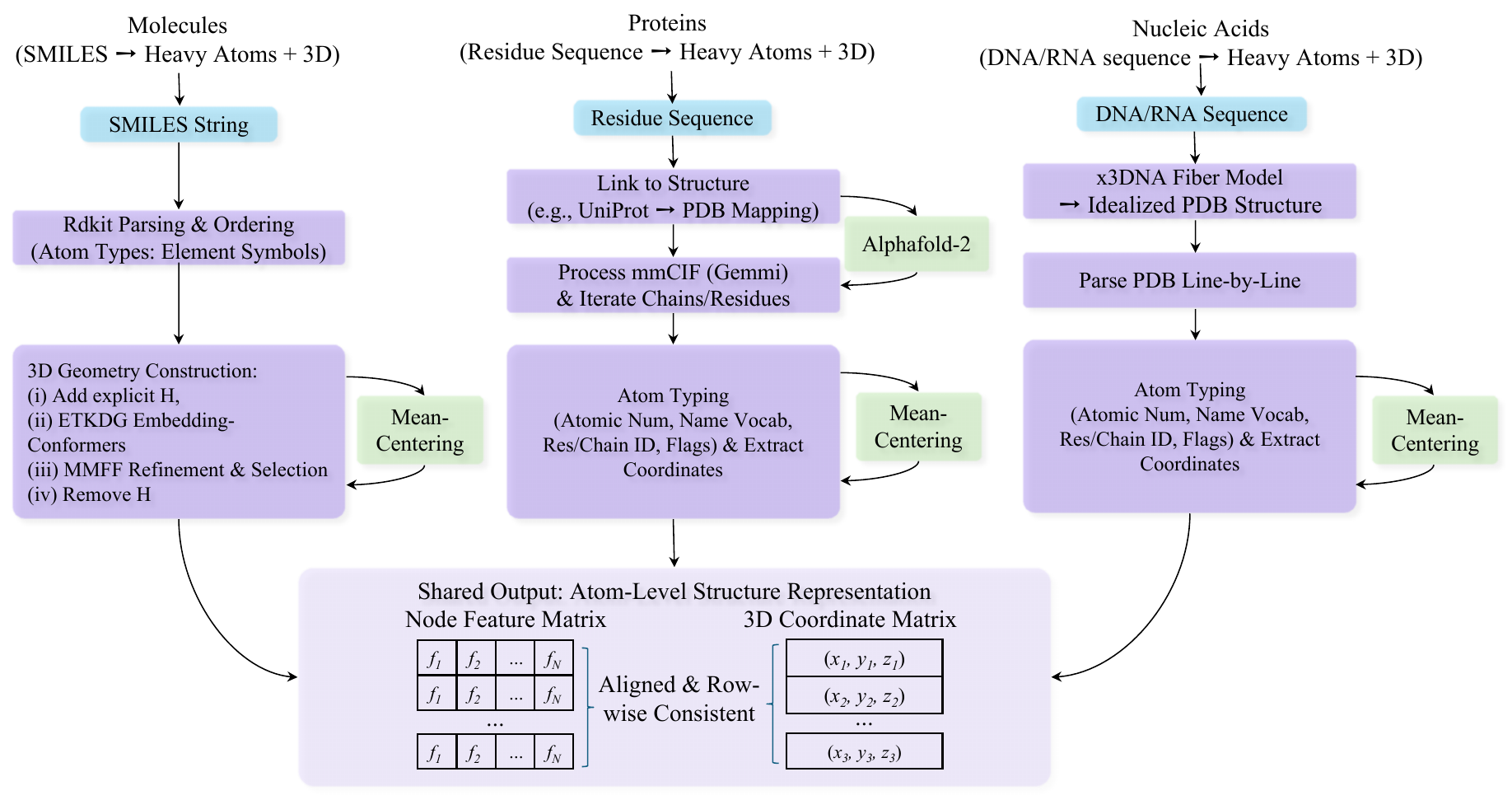}}
    \vskip -5 pt
     \caption{
     Multimodal structure generation and atom-level extraction. Three specialized streams process molecules (SMILES), proteins (residue sequences), and nucleic acids into a unified format. The pipeline ensures strict row-wise consistency between the atom-ordered node feature matrix and the aligned 3D coordinate matrix.
     }
    \label{fig:struct_flow}
  \end{center}
  \vskip -15 pt
\end{figure}

\paragraph{Structure generation and atom-level extraction.}
All modalities output (i) an atom-ordered node feature matrix and (ii) an aligned coordinate matrix $\mathbf{P}\in\mathbb{R}^{N\times 3}$ in \AA, with strict row-wise consistency between atom typing, graph construction, and Cartesian positions. Fig.~\ref{fig:struct_flow} illustrates the structure generation flow, described in the following paragraphs:

\paragraph{Molecules (SMILES $\rightarrow$ heavy atoms + 3D).}
SMILES parsed into an RDKit~\citep{landrum2013rdkit} ``\texttt{Mol}''; atom types defined as chemical element symbols in RDKit internal atom ordering. 3D geometry construction: (i) explicit hydrogen addition for embedding robustness, (ii) one or multiple conformer candidates via RDKit distance-geometry embedding (ETKDG), (iii) MMFF force-field refinement, best-conformer selection (typically lowest-energy converged). Post-selection hydrogen removal; returned $\mathbf{P}\in\mathbb{R}^{N\times 3}$ corresponds to the heavy-atom set and preserves the same atom order used for typing and graph building.

\paragraph{Proteins (residue sequence $\rightarrow$ matched PDB/mmCIF $\rightarrow$ atoms + 3D).}
Residue sequence linked to an experimental structure (e.g., UniProt$\leftrightarrow$PDB mapping or sequence search); corresponding mmCIF processed with Gemmi. Polymer chains and residues iterated; per-atom records extracted in parser-defined order. Atom typing encodes (1) atomic number from the element field and (2) atom-name vocabulary (e.g., N, CA, C, O, sidechain heavy atoms; unknown$\rightarrow$``misc''), augmented by parent residue identity (3-letter residue vocabulary), chain index, residue index, and binary flags (e.g., \texttt{is\_backbone}, \texttt{is\_CA}). Coordinates taken directly from mmCIF atom positions (\AA); optional mean-centering by subtracting the global coordinate mean. Feature rows and $\mathbf{P}$ share identical atom ordering.

\begin{wraptable}{l}{0.50\linewidth}
\centering
\vskip -15 pt
\small
\setlength{\tabcolsep}{12pt} 
\caption{Summary of proposed dataset \datasetname: sources and counts by training stages and four modalities.}
\vskip -3 pt
\label{tab:data-by-stage}
\resizebox{\linewidth}{!}{
\begin{tabular}{c l l r}
\toprule
Stage & Modality & Source & Count \\
\midrule
\multirow{4}{*}{Encoder} &
Molecule & PubChem & 1M \\
& Protein & UniProt & 300K \\
& DNA & DNA-Chat & 300K \\
& RNA & RNA-QA & 200K \\
\midrule
\multirow{4}{*}{End-to-End} &
Molecule & MolLlama-Instruct & 10K \\
& Protein & UniProt+PDB & 10K \\
& DNA & DNA-Chat & 5K \\
& RNA & RNA-QA & 5K \\
\bottomrule
\end{tabular}
}
\vskip -10 pt
\end{wraptable}
\paragraph{Nucleic acids (DNA/RNA sequence $\rightarrow$ X3DNA fiber PDB $\rightarrow$ atoms + 3D).}
Sequence deterministically mapped to an idealized 3D structure using X3DNA fiber, written as PDB, then parsed line-by-line to collect atom metadata (atom name, residue name, chain id, residue id, element) and Cartesian coordinates. Atom typing uses a nucleic-acid-specific schema: atomic number (from element, with fallback inference from atom name), atom-name vocabulary spanning sugar--phosphate and base atoms (e.g., P/OP1/OP2/O5\textquotesingle\ldots plus base atoms), nucleotide identity vocabulary (DNA: A/C/G/T; RNA: A/C/G/U), chain and residue indices, and binary backbone and phosphate-group indicators. RNA follows the same pipeline, with alphabet $\{A,C,G,U\}$ and optional single-strand generation (default). Output coordinates $\mathbf{P}\in\mathbb{R}^{N\times 3}$ mean-centered by subtracting the coordinate mean; one-to-one alignment maintained between feature rows and coordinate rows.

\textbf{Data filtering and fallbacks.}
Entities with invalid primary inputs were removed, including SMILES strings not parsable by RDKit~\citep{landrum2013rdkit} and sequences failing modality-specific parsing. When sequence inputs were valid but 3D structure generation failed, samples were retained as sequence-only inputs to preserve coverage and ensure consistent, fair evaluation across models. This fallback occurred in fewer than $0.5\%$ of cases and did not materially affect reported conclusions, including claims about structure-conditioned reasoning.

\subsection{Encoder Pretraining Data}
\label{app: encoder data}

Following MuMo~\citep{jing2025mumo}, samples are limited to the hundred-thousand scale, which suffices for representation convergence and feature alignment. For molecules, 1M samples from PubChem~\citep{kim2023pubchem} are filtered by sequence length to cover structural complexity. 300K proteins from UniProt~\citep{uniprot2019uniprot} across key model organisms (Human, Mouse, Rat, A. thaliana, S. cerevisiae) are sampled. We include sequences with multiple conformers to capture structural flexibility.
For nucleic acids, 300K DNA and 200K RNA samples from
DNA-Chat~\citep{de2025chatnt} and RNA-QA~\citep{xiao2024rnagpt}. Data spans 30 functional categories across human and mouse genomes to ensure biological significance.

\subsection{Masked Reasoning Augmentation for Improved Latent Deliberation}
\label{app:masked-reasoning-augmentation}

\paragraph{Motivation.}
Instruction tuning corpora for scientific QA and structured reasoning often provide only a final response label, without explicit intermediate rationales.
Directly optimizing for short answer templates (for example, ``The answer is A'') can bias training toward format memorization, reduce linguistic diversity, and degrade latent deliberation capacity.
To mitigate this issue while avoiding the need for human reasoning annotations, masked reasoning augmentation is adopted.

\paragraph{Construction.}
As shown in Algo.~\ref{algo:masked-reasoning-augmentation}, for each training instance with instruction $x$ and gold final answer $a$, a teacher model (OpenAI GPT-5) produces a harmless and general reasoning span $\tilde{r}$.
The span length is variable and may include placeholder style statements that indicate reasoning progress without encoding task-specific private labels.
The final training target sequence is constructed as
\begin{equation}
y \;=\; \big[\langle\texttt{think}\rangle,\; \tilde{r},\; \langle/\texttt{think}\rangle,\; a\big],
\label{eq:aug-target}
\end{equation}
where $\langle\texttt{think}\rangle$ and $\langle/\texttt{think}\rangle$ are retained to delimit the latent deliberation region.

\paragraph{Masked objective.}
Let $y_{1:T}$ be the tokenization of $y$, and let $m_t\in\{0,1\}$ denote a loss mask.
The mask is set to exclude all tokens inside the reasoning span, while retaining supervision on the final answer tokens (and optionally the delimiter tokens):
\begin{equation}
m_t \;=\; \mathbbm{1}\!\left[y_t \in a\right]
\quad
\text{(optionally } m_t \leftarrow 1 \text{ for delimiter tokens).}
\label{eq:loss-mask}
\end{equation}
Training minimizes a masked negative log likelihood:
\begin{equation}
\mathcal{L}(\theta)
\;=\;
-\mathbb{E}_{(x,a)\sim\mathcal{D}}
\left[
\sum_{t=1}^{T} m_t \log p_\theta\!\left(y_t \mid x, y_{<t}\right)
\right].
\label{eq:masked-nll}
\end{equation}
Since the teacher forces conditions on the entire prefix $y_{<t}$, the model can attend to $\tilde{r}$ as privileged context during optimization, while gradients are applied only to the answer prediction.
This reduces incentives to overfit to short answer templates and preserves expressive capacity, because the model is not penalized for mismatching the teacher-generated reasoning text.

\paragraph{Usage at inference.}
At inference time, decoding can either (i) omit the think region by prompting for answers only, or (ii) allow a think delimited region for internal deliberation, depending on the evaluation protocol.
The core training mechanism remains the same, namely, reasoning tokens are never used as direct supervision targets.

\begin{algorithm}[t]
\caption{Masked Reasoning Augmentation}
\label{algo:masked-reasoning-augmentation}
\begin{algorithmic}[1]
\REQUIRE Dataset $\mathcal{D}=\{(x^{(i)}, a^{(i)})\}_{i=1}^{N}$, teacher generator $\mathcal{G}$ (GPT-5), model $p_\theta$, delimiter tokens $\langle\texttt{think}\rangle,\langle/\texttt{think}\rangle$
\ENSURE Updated parameters $\theta$
\FOR{each minibatch $\mathcal{B}\subset\mathcal{D}$}
    \FOR{each $(x,a)\in\mathcal{B}$}
        \STATE $\tilde{r}\gets \mathcal{G}(x)$ \hfill \COMMENT{harmless, general, variable length, may include placeholders}
        \STATE $y \gets [\langle\texttt{think}\rangle,\tilde{r},\langle/\texttt{think}\rangle,a]$
        \STATE Tokenize $y\mapsto (y_1,\dots,y_T)$
        \STATE Build mask $m_{1:T}$ where $m_t=0$ for tokens in $\tilde{r}$ and $m_t=1$ for tokens in $a$ \hfill \COMMENT{optionally include delimiter tokens}
    \ENDFOR
    \STATE $\mathcal{L}\gets -\sum_{(x,a)\in\mathcal{B}}\sum_{t=1}^{T} m_t \log p_\theta(y_t\mid x,y_{<t})$
    \STATE $\theta \gets \theta - \eta \nabla_\theta \mathcal{L}$
\ENDFOR
\end{algorithmic}
\end{algorithm}

\paragraph{Why Masked Reasoning Augmentation Preserves Reasoning Ability.}
We justify that masked reasoning augmentation (Sec.~\ref{app:masked-reasoning-augmentation}) is a rational and effective surrogate for missing chain-of-thought (CoT) supervision: it avoids the degeneracy induced by short-answer-only tuning while not forcing imitation of teacher rationales, thereby preserving latent deliberation capacity.

\paragraph{Setup.}
Let $p_0$ denote the pre-trained (or instruction-tuned) initialization and $p_\theta$ the fine-tuned model.
For each example $(x,a)\sim\mathcal{D}$, construct $y=[r,a]$ where $r=\langle\texttt{think}\rangle\,\tilde r\,\langle/\texttt{think}\rangle$ and $a$ is the gold answer (Eq.~\ref{eq:aug-target}).
Let $A$ be the set of answer-token positions, and $R$ the set of reasoning-token positions in the tokenization $y_{1:T}$.
The training loss is the masked NLL (Eq.~\ref{eq:masked-nll})
\begin{equation}
\mathcal{L}(\theta)
=
-\mathbb{E}_{(x,a)}
\Bigg[
\sum_{t\in A}\log p_\theta(y_t\mid x,y_{<t})
\Bigg],
\label{eq:mask-loss-A}
\end{equation}
(optionally adding delimiter tokens to $A$ does not change the arguments below).

\paragraph{Claim 1 (Masked augmentation reduces template overfitting).}
Assume a decoder factorization $p_\theta(y\mid x)=\prod_{t=1}^T p_\theta(y_t\mid x,y_{<t})$.
Consider the answer-only tuning objective without any reasoning prefix:
\begin{equation}
\mathcal{L}_{\text{ans-only}}(\theta)
=
-\mathbb{E}_{(x,a)}\Big[\log p_\theta(a\mid x)\Big],
\label{eq:ans-only}
\end{equation}
where the model learns $p_\theta(a\mid x)$ under a short and highly repetitive target format.
If the target answers share a narrow template family (e.g., ``The answer is A''), then $\mathcal{L}_{\text{ans-only}}$ admits low-loss solutions that allocate probability mass to a small set of high-frequency surface forms, yielding a biased decoder that collapses stylistic and intermediate-generation degrees of freedom.

Under masked augmentation, the supervised tokens $t\in A$ are predicted conditioned on a longer, variable prefix $r$:
\begin{equation}
p_\theta(a\mid x,r)
=
\prod_{t\in A} p_\theta(y_t\mid x,y_{<t}),
\qquad y_{<t}\supseteq r.
\label{eq:cond-answer}
\end{equation}
Since $r$ is variable-length and lexically diverse, the conditional context distribution seen during training has higher entropy than in Eq.~\ref{eq:ans-only}.
This breaks the spurious shortcut of mapping $x\mapsto$ a fixed answer template independent of preceding generation state, because the answer head must remain stable across a family of diverse prefixes.
Operationally, the model is optimized to output the same $a$ across many distinct $r$-contexts, which regularizes against brittle template memorization and maintains flexibility in generation dynamics before the answer.

\paragraph{Claim 2 (No-imitation property: the method does not teach or enforce teacher CoT).}
By construction, the mask sets $m_t=0$ for all $t\in R$ (Eq.~\ref{eq:loss-mask}), hence the gradient decomposes as
\begin{equation}
\nabla_\theta \mathcal{L}(\theta)
=
-\mathbb{E}_{(x,a)}
\left[
\sum_{t\in A}\nabla_\theta \log p_\theta(y_t\mid x,y_{<t})
\right],
\qquad
\frac{\partial \mathcal{L}}{\partial \log p_\theta(y_t\mid\cdot)}=0 \;\; \forall t\in R.
\label{eq:no-grad-on-R}
\end{equation}
Therefore, masked augmentation cannot directly push $p_\theta$ to reproduce $\tilde r$ or match the teacher's reasoning distribution; it only uses $\tilde r$ as conditioning noise/augmentation for predicting $a$.
This avoids two common failure modes of distillation-style CoT training: (i) copying teacher-specific artifacts and (ii) penalizing valid alternative reasoning trajectories.

\paragraph{Claim 3 (Latent-deliberation preservation via prefix invariance).}
Define the desired robustness property: for a given instruction $x$, correct answering should be invariant to benign internal deliberation text $r$ that does not leak the label.
Formally, for a distribution $\mathcal{R}(x)$ over harmless reasoning prefixes, we want
\begin{equation}
p_\theta(a\mid x,r)\approx p_\theta(a\mid x,r'), \qquad r,r'\sim \mathcal{R}(x),
\label{eq:prefix-invariance}
\end{equation}
while still allowing $r$ to carry computational substrate during generation.
Masked augmentation is exactly empirical risk minimization over random $r$-contexts:
\begin{equation}
\min_\theta \;\; \mathbb{E}_{(x,a)}\; \mathbb{E}_{r\sim \mathcal{R}(x)}
\Big[-\log p_\theta(a\mid x,r)\Big].
\label{eq:erm-over-r}
\end{equation}
Optimizing Eq.~\ref{eq:erm-over-r} encourages $p_\theta$ to be stable under variations of the pre-answer hidden-state trajectory induced by diverse prefixes.
Intuitively, the model is trained to remain competent after spending tokens in a think region, rather than being trained to emit the answer immediately.
This directly targets the failure case of answer-only tuning, where decoding before the answer becomes a low-utility region and the model learns to shortcut.

\paragraph{Claim 4 (Connection to dropout-style regularization over prefixes).}
Masked augmentation can be viewed as a structured data augmentation on the conditioning context:
$r$ is a sampled perturbation of the prefix distribution that leaves the label $a$ unchanged.
As in classic augmentation, the supervised signal forces the predictor to rely on task-relevant information (the instruction $x$ and structural tokens) rather than fragile surface regularities of a fixed answer template.
Unlike standard dropout, this perturbation operates at the sequence level and explicitly exercises the model's long-context attention and state evolution before producing $a$.

\paragraph{Practical implication.}
Claims 1--4 imply the method is rational under missing-CoT conditions: it (i) prevents collapse to short answer templates by training under diverse pre-answer prefixes, (ii) preserves freedom of internal reasoning because no gradients are applied to $\tilde r$, and (iii) encourages answer stability after variable-length latent deliberation, which is the minimal property needed to maintain reasoning-like behavior when explicit reasoning corpora are unavailable.

\section{Experiment Details}
\label{app: experiments}

\subsection{Datasets and Benchmarks}
\label{app: datasets}
\textbf{Mol-Instructions} (molecule and protein)~\citep{fang2023molinstructions}. Mol-Instructions constructs instruction tuning data by aggregating licensed biomolecular sources and converting them into a unified instruction format, with additional task description diversification via human-written seeds expanded by an LLM and then manually reviewed, followed by explicit quality control to ensure reliability. It is organized into \textbf{three components:} molecule-oriented instructions covering molecular captioning, reactions, and design with 148.4K instructions across six tasks, protein-oriented instructions spanning five task categories for structure, function, activity, and design with 505K instructions, and biomolecular text instructions for information extraction and QA style NLP tasks with 53K instructions.
 
\textbf{DNA-Chat}~\citep{de2025chatnt}. ChatNT converts existing DNA sequence benchmarks into instruction data by curating many English question answer templates per task and sampling a template per input sequence. For the Nucleotide Transformer benchmark, it converts 18 binary or multilabel classification datasets into diverse instructions while keeping the original train-test splits. Data correctness and evaluation consistency are supported by retaining original dataset splits, using different questions in train versus test to probe English generalization, and reporting each task metric as defined in the corresponding benchmark, typically on up to 5,000 sampled test instances per task. The resulting instructions corpus is large, reported as 605M DNA tokens and 273M English tokens, and model training over the 27 task set is described as 7.8M samples over 2B tokens.

\textbf{RNA-QA}~\citep{xiao2024rnagpt}. RNA-QA is constructed from RNAcentral~\citep{bateman2011rnacentral} by filtering RNAs with associated literature and then scraping and summarizing relevant papers to produce per RNA annotations, with a divide and conquer summarization pipeline that applies topic modeling to group papers and then summarizes each group before a final aggregation summary. The paper reports an initial filter of roughly 420,000 literature-linked RNAs, refined to 407,616 RNAs, with sequences truncated or filtered to a length of at most 1024 nucleotides to fit the sequence encoder input limit, and each RNA is paired with an abstract style description. For instruction tuning, the dataset decomposes these annotations into more targeted QA pairs using an LLM, producing multiple QA pairs per RNA and explicitly noting 5 to 14 QA pairs per RNA in the collection pipeline description.

\textbf{\datasetname.} \datasetname is a unified all-atom instruction dataset constructed for connector alignment and structure conditioned LLM adaptation under end-to-end finetuning, prioritizing compactness and high supervision density. It aggregates three modalities: (i) molecules, using 10K chemical to text captioning pairs from Mol-Llama-Instruct~\citep{kim2025molllama}; (ii) proteins, using 10K high confidence UniProt~\citep{uniprot2019uniprot} entries (annotation score 5/5) with verified PDB~\citep{burley2017pdb} 3D structures, excluding multi conformer sequences to preserve unambiguous structural supervision; and (iii) nucleic acids, using 10K instruction following samples from ChatNT (5K DNA, 5K RNA)~\citep{de2025chatnt}. For instances lacking experimentally resolved 3D structures, geometries are generated via the pipeline in App.~\ref{app: data process}. Each modality reserves 200 held-out test samples, with the remaining examples used for training. All the baselines for this benchmark are tuned on this dataset. Table~\ref{tab:data-by-stage} summarizes data sources and counts across training stages and modalities.


\subsection{Baselines}
\label{app: baselines}
Baselines include general instruction-tuned LLMs, domain-adapted chemistry and biomolecule LLMs, and explicit reasoning variants. General chat and backbone references include Alpaca~\citep{taori2023alpaca}, Baize~\citep{xu2023baize}, ChatGLM~\citep{glm2024chatglm}, LLaMA-2~\citep{touvron2023llama2}, Vicuna-v1.5~\citep{Zheng2023vicuna}, Qwen2.5~\citep{bai2025qwen2}, Llama-3~\citep{grattafiori2024llama3}, Mistral-3~\citep{liu2026ministral3}, GLM-4-9B~\citep{glm2024chatglm}, and Qwen3-8B~\cite{yang2025qwen3}, providing broad instruction-following capacity without structural specialization. Science and chemistry-oriented baselines include Galactica-6.7B~\citep{taylor2022galactica}, Text+Chem T5~\citep{christofidellis2023textchemt5}, and MolT5~\citep{pei20243dmolt5}, which incorporate scientific or molecular pretraining objectives to improve chemistry-text alignment. Biomolecular instruction baselines include Mol-Instructions~\cite{fang2023molinstructions}, which fine-tune Llama backbones on Mol-Instructions datasets, and Mol-LLama~\citep{kim2025molllama}, which further targets molecule-centric alignment. Reasoning-focused general baselines include DeepSeek-R1-D-Qwen-7B~\citep{guo2025deepseek} and Mistral-3-8B-Reasoning~\citep{liu2026ministral3}, used to assess gains attributable to explicit reasoning post-training rather than modality grounding. For protein-specific models, ProtLLM~\citep{zhuo2024protllm}, ProteinGPT~\citep{xiao2024proteingpt}, and ProtChatGPT~\citep{wang2024protchatgpt} are selected. 

\textbf{Rationale of Selection.} Baselines are selected to isolate gains from structure grounding under realistic compute and model-size constraints. General LLMs are restricted to recent, open-source instruction-tuned backbones, covering both reasoning and non-reasoning variants, with parameters largely aligned to the 6–8B regime to match the available training and evaluation budget. Modality-specific baselines prioritize strong, recent domain models that natively accept molecular or biomolecular structural inputs or molecule-aware tokenizations, and that report competitive performance on closely related instruction or QA benchmarks, enabling a fair comparison between sequence-only adaptation and explicit structure-conditioned modeling.

\subsection{Benchmark Metrics}
\label{app: metrics}

\paragraph{Text generation quality.}
For free-form descriptions and open-ended instruction responses, we report n-gram overlap, sequence-level recall, and semantic matching metrics. \textbf{BLEU-2/4} (\(\uparrow\)) quantify modified n-gram precision with a brevity penalty. \textbf{ROUGE-1/2/L} (\(\uparrow\)) measure unigram/bigram overlap and longest-common-subsequence (LCS) similarity, capturing content coverage and ordering robustness. \textbf{METEOR} (\(\uparrow\)) emphasizes alignment with stemming/synonymy and includes a fragmentation penalty, improving sensitivity to paraphrases. For embedding-level semantic similarity, we report \textbf{BERTScore} (\(\uparrow\)), computed via token embedding cosine matches between hypothesis and reference:
\begin{equation}
\mathrm{BERTScore}(y,\hat{y}) \;=\; \frac{1}{|y|}\sum_{t \in y} \max_{s \in \hat{y}} \cos\!\big(\mathbf{e}_t,\mathbf{e}_s\big),
\end{equation}
with precision/recall/F1 variants reported per benchmark convention. We use these metrics when multiple valid phrasings exist and surface overlap and semantic faithfulness jointly.

\paragraph{Regression error.}
For scalar property prediction, we use \textbf{MAE} (\(\downarrow\)), the mean absolute error:
\begin{equation}
\mathrm{MAE} \;=\; \frac{1}{N}\sum_{i=1}^{N}\left| \hat{z}_i - z_i \right|,
\end{equation}
where \(z_i\) and \(\hat{z}_i\) denote the ground-truth and predicted targets.

\paragraph{Classification.}
For DNA tasks with class imbalance, we report \textbf{Matthews correlation coefficient (MCC)} (\(\uparrow\)), which correlates predictions and labels and remains informative under skewed marginals:
\begin{equation}
\mathrm{MCC} \;=\; \frac{TP\cdot TN - FP\cdot FN}{\sqrt{(TP+FP)(TP+FN)(TN+FP)(TN+FN)}}.
\end{equation}

\paragraph{Exact-match and edit distance.}
For structured-string outputs (e.g., canonical answers or sequence-like targets), we report \textbf{Exact} (\(\uparrow\)) accuracy and character-level \textbf{Levenshtein} distance (\(\downarrow\)):
\begin{equation}
\mathrm{Exact} \;=\; \mathbbm{1}[\hat{y}=y], 
\qquad
\mathrm{Lev}(y,\hat{y}) \;=\; \min_{\pi \in \mathcal{E}(y\rightarrow \hat{y})} |\pi|,
\end{equation}
where \(\mathcal{E}\) denotes the set of edit sequences (insert/delete/substitute) transforming \(y\) into \(\hat{y}\).

\paragraph{Molecule validity and similarity (structure-aware).}
For molecule generation, we evaluate both chemical validity and structure similarity. \textbf{Validity} (\(\uparrow\)) is the fraction of outputs parsable into valid molecules (e.g., via RDKit). To quantify similarity between generated and reference molecules, we compute fingerprint Tanimoto similarity (FTS, \(\uparrow\)) using multiple representations, including \textbf{RDKit} fingerprints, \textbf{MACCS} keys, and \textbf{Morgan} fingerprints:
\begin{equation}
\mathrm{FTS}(m,\hat{m}) \;=\; \mathrm{Tan}\!\big(\phi(m),\phi(\hat{m})\big)
\;=\; \frac{\langle \phi(m),\phi(\hat{m})\rangle}{\|\phi(m)\|_1+\|\phi(\hat{m})\|_1-\langle \phi(m),\phi(\hat{m})\rangle},
\end{equation}
where \(\phi(\cdot)\in\{0,1\}^d\) is a binary fingerprint and \(\mathrm{Tan}\) denotes Tanimoto similarity. We additionally report \textbf{BLEU} (\(\uparrow\)) and \textbf{Levenshtein} (\(\downarrow\)) for string-level similarity when the target is provided as a SMILES-like representation, but prioritize validity and fingerprint similarity to reflect chemically meaningful agreement.

\subsection{Detailed Benchmark Results}
\label{app: detailed results}
This section reports the full Mol-Instructions~\citep{fang2023molinstructions} benchmark results. The main paper Tab.~\ref{tab:mol-instructions} is restricted to at most two metrics per task for layout compactness, whereas the appendix exposes the complete metric suite. Across tasks, \modelname attains top rank on essentially all reported metrics, indicating consistent gains rather than metric-specific overfitting.

\begin{table}[t]
\centering
\tiny
\setlength{\aboverulesep}{1pt}
\setlength{\belowrulesep}{1pt}
\setlength{\tabcolsep}{6pt}
\caption{Detailed results on molecular captioning and property prediction tasks from the Mol-Instruction benchmark. Official splits are used for testing \modelname, Qwen, and Mol-Llama models; others are taken from the benchmark leaderboard.}
\label{tab:mol-desc-property}
\resizebox{\linewidth}{!}{
\begin{tabular}{l c c c c c c c}
\toprule
Model
& \multicolumn{6}{c}{Molecular Description}
& \multicolumn{1}{c}{Property} \\
\cmidrule(lr){2-7}\cmidrule(lr){8-8}
& BLEU-2$\uparrow$
& BLEU-4$\uparrow$
& ROUGE-1$\uparrow$
& ROUGE-2$\uparrow$
& ROUGE-L$\uparrow$
& METEOR$\uparrow$
& MAE$\downarrow$ \\
\midrule
Alpaca-7B          & 0.068 & 0.014 & 0.178 & 0.041 & 0.136 & 0.107 & 322.109 \\
Baize-7B           & 0.064 & 0.015 & 0.189 & 0.053 & 0.148 & 0.106 & 261.343 \\
LLaMA-2-7B         & 0.059 & 0.014 & 0.164 & 0.066 & 0.148 & 0.184 & 5.553 \\
Vicuna-v1.5-13B    & 0.052 & 0.011 & 0.151 & 0.055 & 0.130 & 0.168 & 860.051 \\
Galactica-6.7B      & 0.024 & 0.008 & 0.074 & 0.015 & 0.063 & 0.065 & 0.568 \\
Qwen2.5-7B         & 0.435 & 0.356 & 0.581 & 0.491 & 0.572 & 0.538 & 0.182 \\
Mol-Ins.-Llama-2-7B    & 0.217 & 0.143 & 0.337 & 0.196 & 0.291 & 0.291 & 0.013 \\
Mol-Ins.-Llama-3.1-8.3B  & 0.419 & 0.361 & 0.719 & 0.646 & 0.709 & 0.637 & 15.059 \\
Mol-LLaMA-2-7B         & \second 0.478 & 0.425 & 0.761 & \second 0.698 & 0.750 & 0.701 & \second 0.0035 \\
Mol-LLaMA-3.1-8.3B       & 0.476 & \second 0.426 & \second 0.767 & \best 0.708 & \second 0.759 & \second 0.707 & 0.0039 \\
\midrule
\modelname-8.3B        & \best 0.481 & \best 0.442 & \best 0.785 & 0.681 & \best 0.766 & \best 0.715 & \best 0.0030 \\
\bottomrule
\end{tabular}
}
\end{table}

\textbf{Molecular description and property} (Tab.~\ref{tab:mol-desc-property}). \modelname achieves best performance across the lexical overlap metrics for molecular captioning and simultaneously yields the lowest error on property regression, indicating joint improvements in structure-conditioned text and property grounding. The only non-leading captioning metric is a mid-order ROUGE variant, where \modelname remains competitive.

\begin{table}[t]
\centering
\tiny
\setlength{\aboverulesep}{1pt}
\setlength{\belowrulesep}{1pt}
\setlength{\tabcolsep}{4pt}
\caption{Detailed results on the description-guided molecule design task from the Mol-Instructions benchmark. Official splits are used for testing \modelname, Qwen, and Mol-LLaMA models; other baselines are taken from the benchmark leaderboard.}
\label{tab:mol-instructions-design-detailed}
\resizebox{\linewidth}{!}{
\begin{tabular}{l c c c c c c c}
\toprule
Model & Exact$\uparrow$ & BLEU$\uparrow$ & Levenshtein$\downarrow$ & RDKit FTS$\uparrow$ & MACC FTS$\uparrow$ & Morgan FTS$\uparrow$ & Validity$\uparrow$ \\
\midrule
Alpaca-7B            & 0.000 & 0.004 & 51.088  & 0.006 & 0.029 & 0.000 & 0.002 \\
Baize-7B             & 0.000 & 0.006 & 53.796  & 0.000 & 0.000 & 0.000 & 0.002 \\
ChatGLM-6B           & 0.000 & 0.004 & 53.157  & 0.005 & 0.000 & 0.000 & 0.005 \\
LLaMA-2-7B           & 0.000 & 0.003 & 59.864  & 0.005 & 0.000 & 0.000 & 0.003 \\
Vicuna-v1.5-13B      & 0.000 & 0.006 & 60.356  & 0.006 & 0.001 & 0.000 & 0.001 \\
Galactica-6.7B        & 0.000 & 0.192 & 44.152  & 0.135 & 0.238 & 0.088 & \second 0.992 \\
Text+Chem T5         & \best 0.097 & \second 0.508 & 41.819 & 0.352 & 0.474 & \best 0.353 & 0.721 \\
MolT5                & \second 0.112 & 0.546 & \second 38.276 & \second 0.400 & \second 0.538 & \second 0.295 & 0.773 \\
Mol-Ins.-Llama-2-7B  & 0.002 & 0.345 & 41.367  & 0.231 & 0.412 & 0.147 & \best 1.000 \\
Mol-Ins.-Llama-3.1   & \best 0.025 & 0.521 & 38.742  & 0.358 & 0.520 & 0.221 & \best 1.000 \\
Mol-LLaMA-3.1-8.3B   & 0.012 & 0.638 & 18.917  & 0.392 & 0.534 & 0.220 & 0.876 \\
Qwen2.5-7B           & 0.001 & 0.434 & 306.144 & 0.207 & 0.348 & 0.121 & 0.772 \\
\midrule
\modelname-8.3B      & 0.019 & \best 0.668 & \best 16.029 & \best 0.422 & \best 0.583 & 0.287 & \best 1.000 \\
\bottomrule
\end{tabular}
}
\end{table}

\textbf{Description-guided molecule design} (Tab.~\ref{tab:mol-instructions-design-detailed}). \modelname dominates generation quality and edit-distance metrics, and also leads across multiple fingerprint similarity measures, suggesting improved alignment between instruction semantics and generated molecular structure. Validity is saturated at the top tier, indicating that gains are driven by chemical plausibility and target matching rather than trivial validity filtering. The only top-2 outcome occurs on one fingerprint similarity, plausibly reflecting descriptor-specific bias toward particular scaffold statistics.

\begin{table}[t]
\centering
\tiny
\setlength{\aboverulesep}{1pt}
\setlength{\belowrulesep}{1pt}
\setlength{\tabcolsep}{4pt}
\caption{Detailed results on the forward reaction prediction task from the Mol-Instructions benchmark. Official splits are used for testing \modelname, Qwen, and Mol-LLaMA models; other baselines are taken from the benchmark leaderboard.}
\label{tab:mol-instructions-forward-detailed}
\resizebox{\linewidth}{!}{
\begin{tabular}{l c c c c c c c}
\toprule
Model & Exact$\uparrow$ & BLEU$\uparrow$ & Levenshtein$\downarrow$ & RDKit FTS$\uparrow$ & MACC FTS$\uparrow$ & Morgan FTS$\uparrow$ & Validity$\uparrow$ \\
\midrule
Alpaca-7B            & 0.000 & 0.065 & 41.989  & 0.004 & 0.024 & 0.008 & 0.138 \\
Baize-7B             & 0.000 & 0.044 & 41.500  & 0.004 & 0.025 & 0.009 & 0.097 \\
ChatGLM-6B           & 0.000 & 0.183 & 40.008  & 0.050 & 0.100 & 0.044 & 0.108 \\
LLaMA-2-7B           & 0.000 & 0.020 & 42.002  & 0.001 & 0.002 & 0.001 & 0.039 \\
Vicuna-v1.5-13B      & 0.000 & 0.057 & 41.690  & 0.007 & 0.016 & 0.006 & 0.059 \\
Galactica-6.7B        & 0.000 & 0.468 & 35.021  & 0.156 & 0.257 & 0.097 & 0.946 \\
Text+Chem T5         & 0.239 & 0.782 & 20.413  & 0.705 & 0.789 & 0.652 & 0.762 \\
Mol-Ins.-Llama-2-7B  & 0.045 & 0.654 & 27.262  & 0.313 & 0.509 & 0.262 & \best 1.000 \\
Mol-Ins.-Llama-3.1   & \second 0.503 & \second 0.883 & \best 13.410 & \second 0.756 & \second 0.863 & \second 0.708 & \best 1.000 \\
Mol-LLaMA-3.1-8.3B   & 0.440 & 0.912 & \second 17.120 & 0.724 & 0.859 & 0.665 & \best 1.000 \\
Qwen2.5-7B           & 0.000 & 0.542 & 140.000 & 0.163 & 0.240 & 0.090 & \second 0.953 \\
\midrule
\modelname-8.3B      & \best 0.551 & \best 0.964 & 19.280 & \best 0.792 & \best 0.879 & \best 0.722 & \best 1.000 \\
\bottomrule
\end{tabular}
}
\end{table}

\textbf{Forward reaction prediction} (Tab.~\ref{tab:mol-instructions-forward-detailed}). \modelname attains the best exact match and overall generation quality, while also leading on structure-based similarity and maintaining perfect validity, supporting improved mechanistic consistency in reaction outcome generation.

\begin{table}[t]
\centering
\tiny
\setlength{\aboverulesep}{1pt}
\setlength{\belowrulesep}{1pt}
\setlength{\tabcolsep}{4pt}
\caption{Detailed results on the reagent prediction task from the Mol-Instructions benchmark. Official splits are used for testing \modelname, Qwen, and Mol-LLaMA models; other baselines are taken from the benchmark leaderboard.}
\label{tab:mol-instructions-reagent-detailed}
\resizebox{\linewidth}{!}{
\begin{tabular}{l c c c c c c c}
\toprule
Model & Exact$\uparrow$ & BLEU$\uparrow$ & Levenshtein$\downarrow$ & RDKit FTS$\uparrow$ & MACC FTS$\uparrow$ & Morgan FTS$\uparrow$ & Validity$\uparrow$ \\
\midrule
Alpaca-7B            & 0.000 & 0.026 & 29.037  & 0.029 & 0.016 & 0.001 & 0.186 \\
Baize-7B             & 0.000 & 0.051 & 30.628  & 0.022 & 0.018 & 0.004 & 0.099 \\
ChatGLM-6B           & 0.000 & 0.019 & 29.169  & 0.017 & 0.006 & 0.002 & 0.074 \\
LLaMA-2-7B           & 0.000 & 0.003 & 28.040  & 0.037 & 0.001 & 0.001 & 0.001 \\
Vicuna-v1.5-13B      & 0.000 & 0.010 & 27.948  & 0.038 & 0.002 & 0.001 & 0.007 \\
Galactica-6.7B        & 0.000 & 0.141 & 30.760  & 0.036 & 0.127 & 0.051 & \second 0.995 \\
Text+Chem T5         & 0.000 & 0.225 & 49.323  & 0.039 & 0.186 & 0.052 & 0.313 \\
Mol-Ins.-Llama-2-7B  & 0.044 & 0.224 & \second 23.167 & 0.237 & 0.364 & 0.213 & \best 1.000 \\
Mol-Ins.-Llama-3.1   & 0.101 & \best 0.648 & \best 18.326 & \second 0.412 & \second 0.521 & \second 0.375 & \best 1.000 \\
Mol-LLaMA-3.1-8.3B   & \second 0.132 & 0.495 & 49.230  & 0.411 & \second 0.521 & 0.361 & \best 1.000 \\
Qwen2.5-7B           & 0.039 & 0.428 & 64.420  & 0.258 & 0.378 & 0.257 & 0.955 \\
\midrule
\modelname-8.3B      & \best 0.149 & \second 0.640 & 46.991 & \best 0.509 & \best 0.549 & \best 0.450 & \best 1.000 \\
\bottomrule
\end{tabular}
}
\end{table}

\textbf{Reagent prediction} (Tab.~\ref{tab:mol-instructions-reagent-detailed}). \modelname ranks top-1 on exact match and all structure-based similarity metrics while maintaining perfect validity, indicating superior condition inference and chemically consistent reagent set generation. A single text-overlap metric is top-2, consistent with synonymy and ordering non-identifiability in reagent lists that weakly correlate with correctness under n-gram scoring.

\begin{table}[t]
\centering
\tiny
\setlength{\aboverulesep}{1pt}
\setlength{\belowrulesep}{1pt}
\setlength{\tabcolsep}{4pt}
\caption{Detailed results on the retrosynthesis task from the Mol-Instructions benchmark. Official splits are used for testing \modelname, Qwen, and Mol-LLaMA models; other baselines are taken from the benchmark leaderboard.}
\label{tab:mol-instructions-retro-detailed}
\resizebox{\linewidth}{!}{
\begin{tabular}{l c c c c c c c}
\toprule
Model & Exact$\uparrow$ & BLEU$\uparrow$ & Levenshtein$\downarrow$ & RDKit FTS$\uparrow$ & MACC FTS$\uparrow$ & Morgan FTS$\uparrow$ & Validity$\uparrow$ \\
\midrule
Alpaca-7B            & 0.000 & 0.063 & 46.915  & 0.005 & 0.023 & 0.007 & 0.160 \\
Baize-7B             & 0.000 & 0.095 & 44.714  & 0.025 & 0.050 & 0.023 & 0.112 \\
ChatGLM-6B           & 0.000 & 0.117 & 48.365  & 0.056 & 0.075 & 0.043 & 0.046 \\
LLaMA-2-7B           & 0.000 & 0.036 & 46.844  & 0.018 & 0.029 & 0.017 & 0.010 \\
Vicuna-v1.5-13B      & 0.000 & 0.057 & 46.877  & 0.025 & 0.030 & 0.021 & 0.017 \\
Galactica-6.7B        & 0.000 & 0.452 & 34.940  & 0.167 & 0.274 & 0.134 & 0.984 \\
Text+Chem T5         & 0.141 & 0.765 & \second 24.043 & 0.685 & 0.765 & 0.585 & 0.698 \\
Mol-Ins.-Llama-2-7B  & 0.009 & 0.705 & 31.227  & 0.283 & 0.487 & 0.230 & \best 1.000 \\
Mol-Ins.-Llama-3.1   & 0.333 & 0.842 & \best 17.642 & \second 0.704 & \second 0.815 & \second 0.646 & \best 1.000 \\
Mol-LLaMA-3.1-8.3B   & \second 0.340 & \best 0.877 & 38.324  & 0.708 & 0.822 & 0.601 & \best 1.000 \\
Qwen2.5-7B           & 0.000 & 0.579 & 144.000 & 0.168 & 0.291 & 0.112 & \second 0.939 \\
\midrule
\modelname-8.3B      & \best 0.395 & \second 0.854 & 26.310 & \best 0.747 & \best 0.839 & \best 0.681 & \best 1.000 \\
\bottomrule
\end{tabular}
}
\end{table}

\textbf{Retrosynthesis} (Tab.~\ref{tab:mol-instructions-retro-detailed}). \modelname achieves the best exact match and fingerprint similarities with saturated validity, suggesting improved decomposition of target products into plausible precursors rather than purely string-level matching. BLEU is top-2, plausibly driven by multiple valid precursor sets and canonicalization variability that penalize lexical overlap despite chemically aligned outputs.

\begin{table}[t]
\centering
\small
\setlength{\aboverulesep}{1pt}
\setlength{\belowrulesep}{1pt}
\setlength{\tabcolsep}{20pt}
\caption{Detailed results on the OpenQA task from the Mol-Instructions benchmark. Official splits are used for testing \modelname, Qwen, and Mol-LLaMA models; other baselines are taken from the benchmark leaderboard.}
\label{tab:mol-instructions-openqa-detailed}
\begin{tabular}{l c c c}
\toprule
Model & BLEU$\uparrow$ & ROUGE-1$\uparrow$ & BertScore$\uparrow$ \\
\midrule
Alpaca-7B            & 0.003 & 0.088 & 0.824 \\
Baize-7B             & 0.005 & 0.100 & 0.811 \\
ChatGLM-6B           & 0.003 & 0.090 & 0.795 \\
LLaMA-2-7B           & 0.003 & 0.100 & 0.814 \\
Vicuna-v1.5-13B      & 0.004 & 0.097 & 0.814 \\
Galactica-6.7B        & 0.000 & 0.039 & 0.794 \\
PMC\_LLaMA           & 0.007 & \best 0.788 & 0.625 \\
Mol-Ins.-Llama-2-7B  & 0.024 & 0.221 & 0.837 \\
Mol-Ins.-Llama-3.1   & 0.010 & 0.198 & \second 0.846 \\
Mol-LLaMA-3.1-8.3B   & 0.024 & 0.134 & 0.812 \\
Qwen2.5-7B           & \best 0.793 & 0.204 & \second 0.846 \\
\midrule
\modelname-8.3B      & \second 0.117 & \second 0.342 & \best 0.884 \\
\bottomrule
\end{tabular}
\end{table}

\textbf{OpenQA} (Tab.~\ref{tab:mol-instructions-openqa-detailed}). This task is predominantly text-only, lacking a specific molecular input, so structure-conditioning advantages are expected to be attenuated. Nevertheless, \modelname remains competitive on lexical overlap and achieves the strongest semantic similarity, indicating preserved general instruction-following and paraphrase robustness under multimodal training. This pattern suggests that structure alignment acts as a complementary inductive bias that improves biomolecular semantics without sacrificing text-only competence.

\subsection{Completion and Reasoning Cases}
\label{app: reasoning cases}
\begin{figure}[t]
  \begin{center}
    \centerline{\includegraphics[width=\columnwidth]{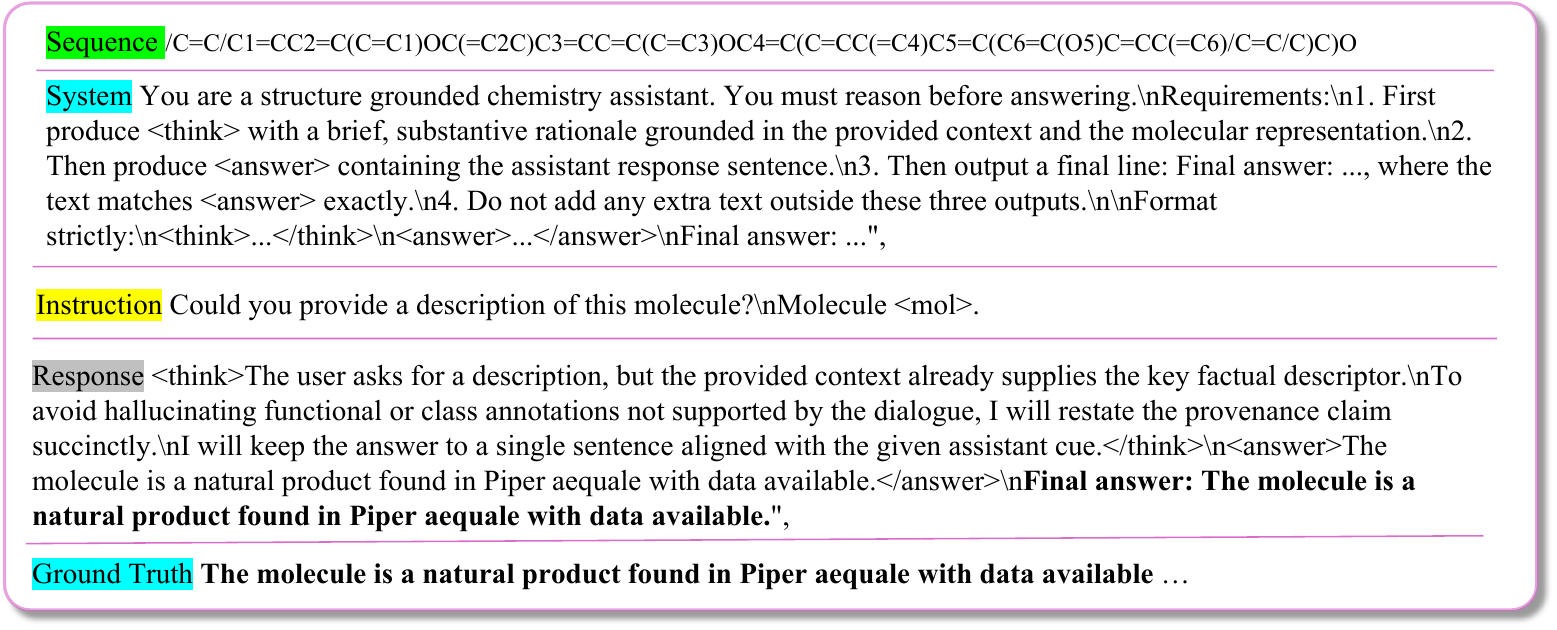}}
    \vskip -5 pt
    \caption{
      Reasoning example on the molecular captioning task. The model accepts a molecular input (along with its structure) and the instructions. The output will contain the reasoning part for the question.
    }
    \label{fig:sample_captioning}
  \end{center}
  \vskip -15 pt
\end{figure}

\begin{figure}[t]
  \begin{center}
    \centerline{\includegraphics[width=\columnwidth]{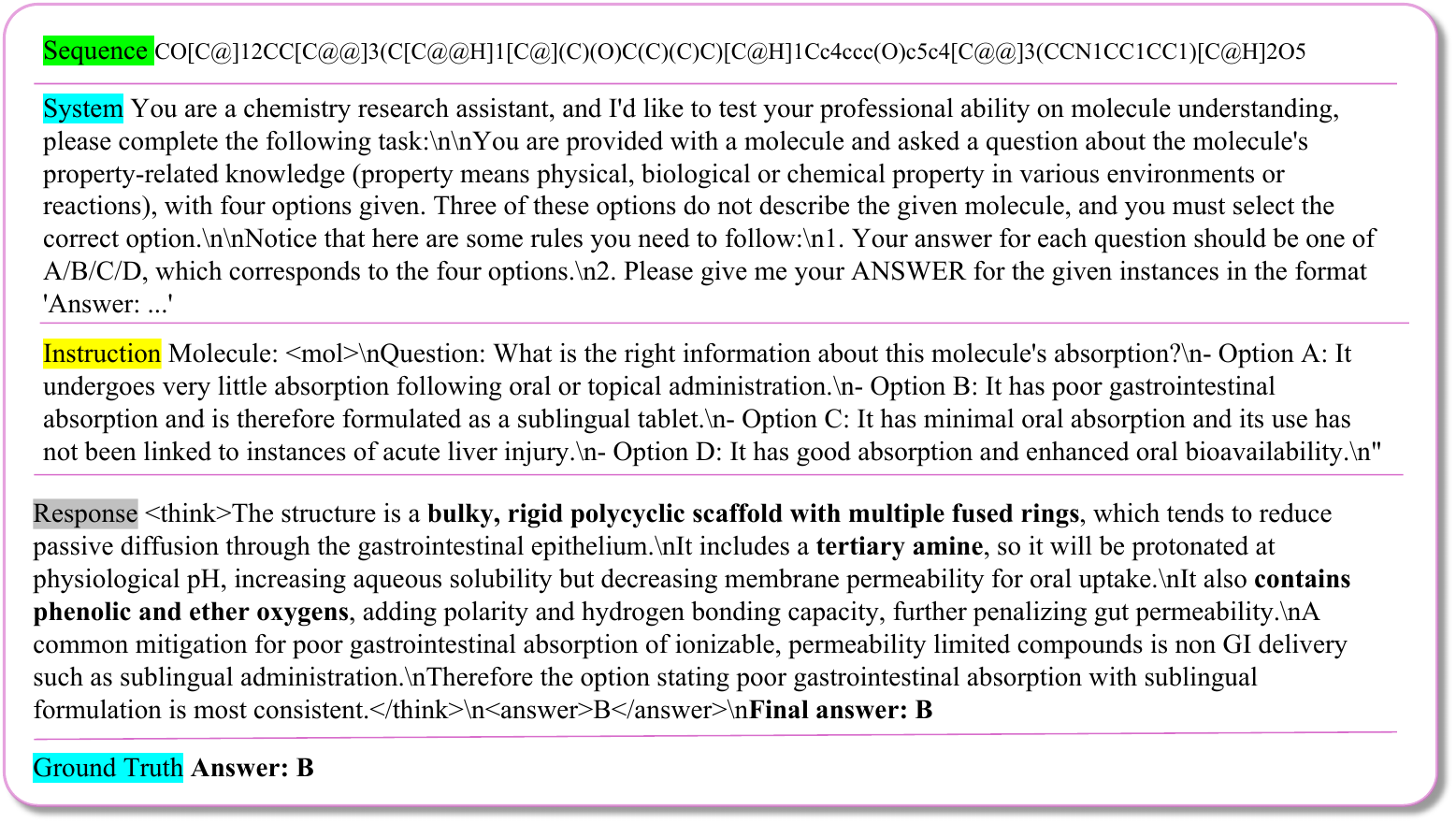}}
    \vskip -5 pt
    \caption{
      Reasoning example on the molecular QA task. The model accepts a molecular input (along with its structure) and the instructions. The output will contain the reasoning part for the question.
    }
    \label{fig:sample_qa}
  \end{center}
  \vskip -15 pt
\end{figure}

\begin{figure}[t]
  \begin{center}
    \centerline{\includegraphics[width=\columnwidth]{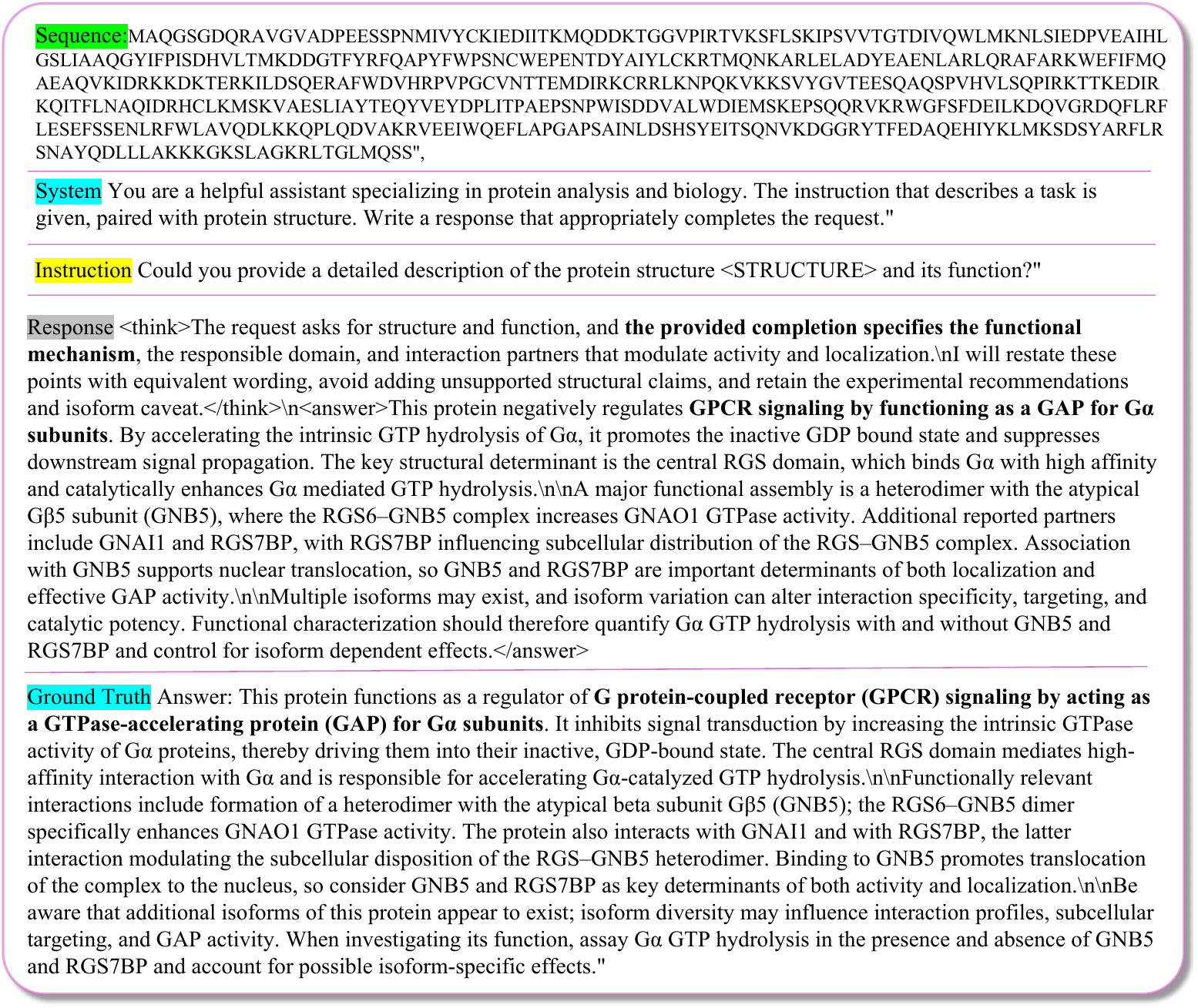}}
    \vskip -5 pt
    \caption{
      Reasoning example on the \datasetname protein captioning task. The model accepts a protein and its structure, and the instructions as input. The output will contain the reasoning part before the final captioning answer.
    }
    \label{fig:sample_protein}
  \end{center}
  \vskip -25 pt
\end{figure}

\begin{figure}[t]
  \begin{center}
    \centerline{\includegraphics[width=\columnwidth]{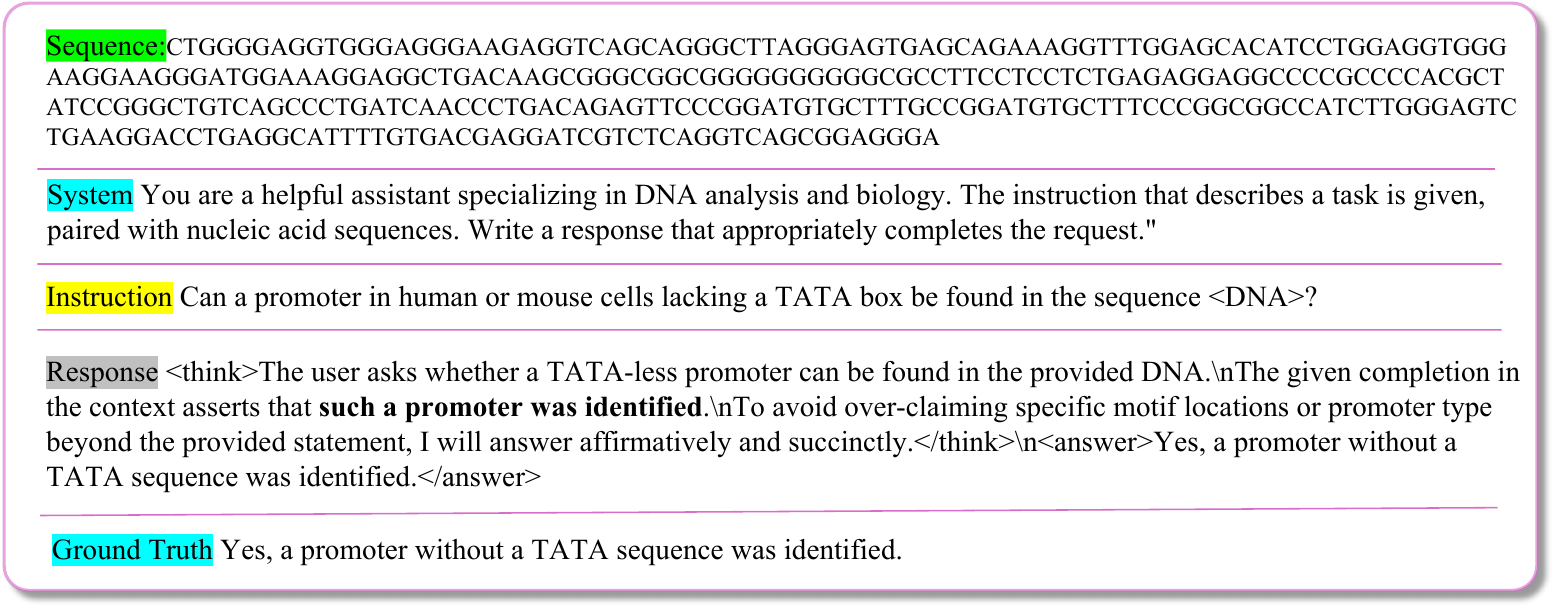}}
    \vskip -5 pt
    \caption{
      Reasoning example from benchmark DNA-Chat. The model accepts a DNA input with its 3D structure and the question. The output will contain the reasoning part before giving the final answer.
    }
    \label{fig:sample_dna}
  \end{center}
  \vskip -15 pt
\end{figure}

\begin{figure}[t]
  \begin{center}
    \centerline{\includegraphics[width=\columnwidth]{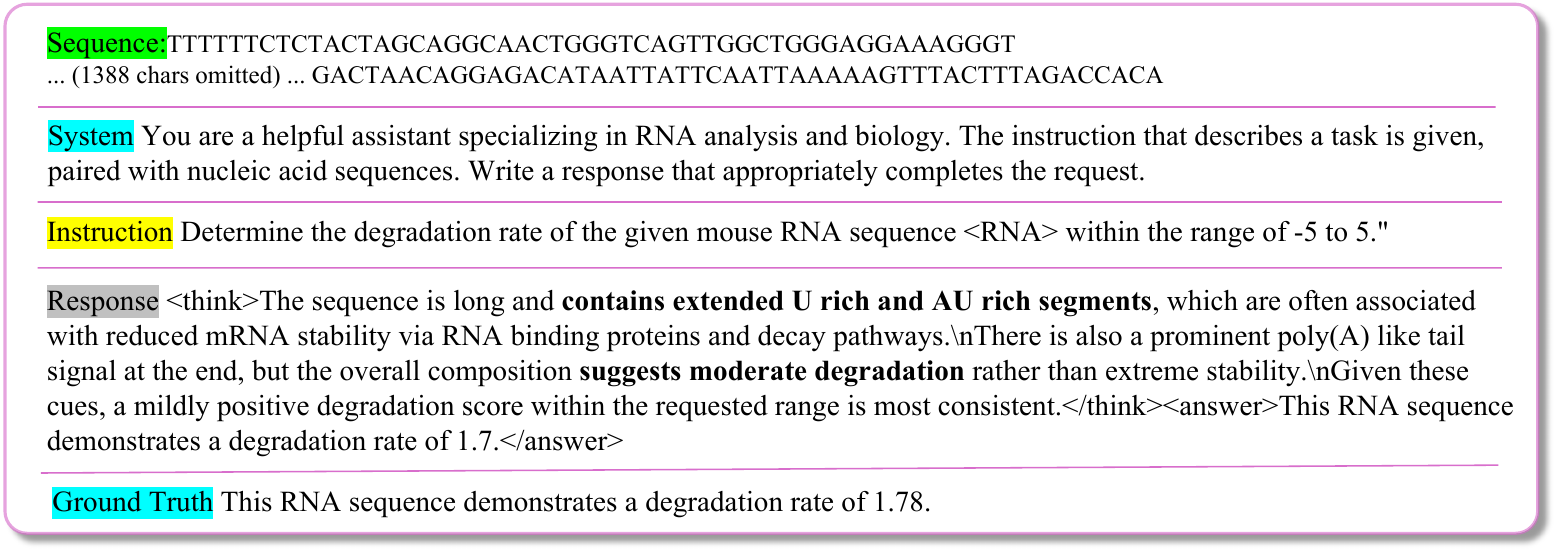}}
    \vskip -5 pt
    \caption{
      Reasoning example from benchmark RNA-QA. The model accepts an RNA input with its 3D structure and the instructions. The output will contain the reasoning part for the question.
    }
    \label{fig:sample_rna}
  \end{center}
  \vskip -15 pt
\end{figure}

\textbf{Examples of Reasoning.} Figs.~\ref{fig:sample_captioning} –~\ref{fig:sample_rna} present qualitative reasoning examples across molecule, protein, DNA, and RNA tasks, illustrating how \modelname integrates all-atom structural evidence into intermediate reasoning. In molecular captioning and molecular QA (Figs. ~\ref{fig:sample_captioning} and Fig.~\ref{fig:sample_qa}), the model explicitly conditions its predictions on geometric and physicochemical cues (such as steric bulk, ionization state, and functional group-driven permeability) rather than relying on sequence-level correlations. This behavior reflects the effect of \noveltyA, which exposes verifiable geometric information to the LLM.

For macromolecular modalities, \modelname exhibits domain-aware and task-specific reasoning grounded in structure. In protein captioning (Fig.~\ref{fig:sample_protein}), functional conclusions are tied to structurally localized domains and interaction partners, avoiding unsupported mechanistic extrapolation. Similarly, in DNA and RNA (Figs.~\ref{fig:sample_dna} and Fig.~\ref{fig:sample_rna}), the model produces conservative, evidence-aligned judgments (e.g., promoter presence) without hallucinating precise motif locations or regulatory mechanisms. These examples indicate that geometry-grounded tokens encourage faithful abstraction, thereby preventing hallucinations.

Across all cases, the reasoning traces also demonstrate the scalability benefits of \noveltyB. Complex all-atom inputs are selectively summarized through instruction-conditioned patching, enabling the model to retain structurally salient regions while suppressing redundant context. As a result, \modelname maintains coherent reasoning across heterogeneous sequence lengths and structural complexities, supporting its effectiveness as a unified structure-aware LLM.

\begin{figure}[t]
  \begin{center}
    \centerline{\includegraphics[width=\columnwidth]{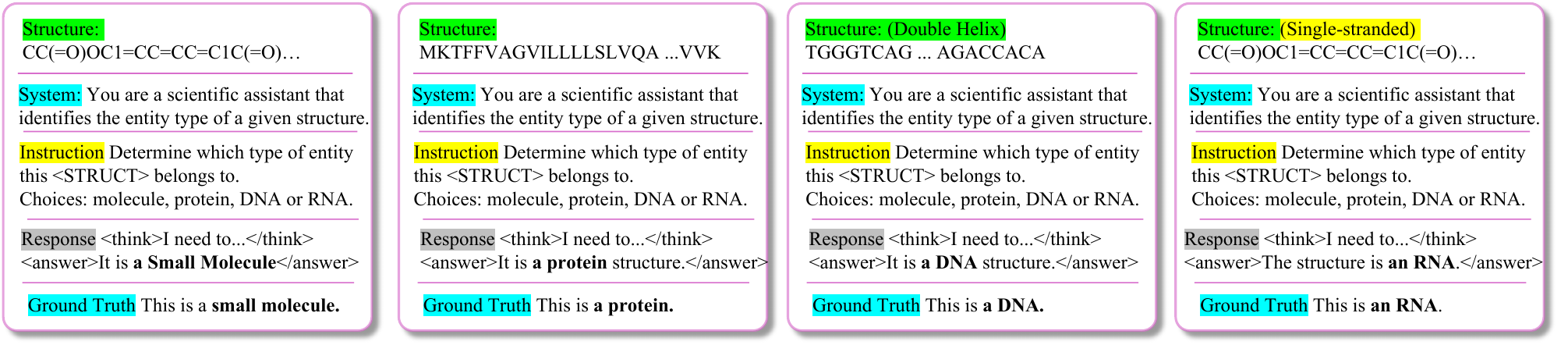}}
    \vskip -5 pt
    \caption{
      Examples of \modelname that recognize entities solely based on structure (non-modality sequence inputs) of all-atom modalities include molecule, protein, DNA, and RNA. 
    }
    \label{fig:sample_entity}
  \end{center}
  \vskip -15 pt
\end{figure}

\textbf{Entity type recognition.} Fig.~\ref{fig:sample_entity} illustrates representative structure-only prompts demonstrating that the model reliably infers entity type without any auxiliary textual cues. Given only structural inputs, \modelname correctly classifies the underlying entity as a small molecule, protein, DNA, or RNA. Notably, DNA and RNA are disambiguated solely from structural signals, without explicit sequence information. These examples validate that the model learns modality-specific structural priors and can perform accurate entity-type recognition from structure alone, establishing a prerequisite for downstream structure-grounded reasoning.

\begin{figure}[t]
  \begin{center}
    \centerline{\includegraphics[width=\columnwidth]{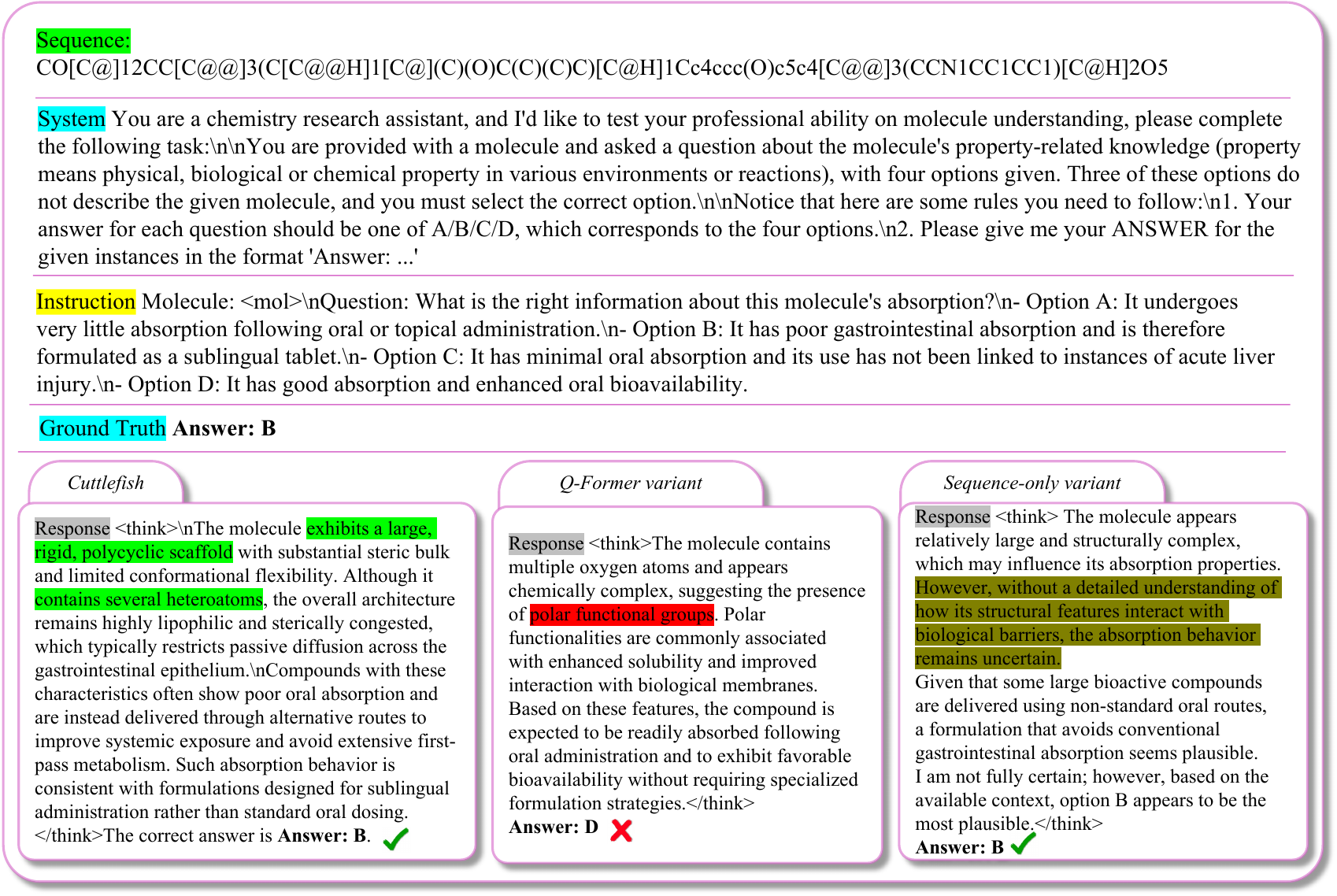}}
    \vskip -5 pt
    \caption{
      Qualitative comparison of three \modelname model variants on a molecular absorption multiple-choice question. The figure shows (top) the input small-molecule SMILES, the system prompt and instruction with four options, and the ground-truth label, and (bottom) the corresponding responses from \modelname, a Q-Former variant, and a sequence-only variant, with highlighted text spans indicating the correctness of rationale cues.
    }
    \label{fig:sample_compare}
  \end{center}
  \vskip -15 pt
\end{figure}
\textbf{Structure ensures faithful reasoning.} Fig.~\ref{fig:sample_compare} illustrates a reliability gap driven by structural grounding. \modelname extracts salient structure-conditioned cues and produces a coherent property-level rationale aligned with the correct absorption choice, whereas the Q-Former variant overgeneralizes from superficial motifs and thus selects an incorrect option, reflecting connector-bottleneck–induced distortion or hallucination. 
The sequence-only variant cannot access fine-grained structural evidence, leading to low-confidence reasoning that guesses the answer but lacks a verifiable structure-based justification, highlighting the importance of detailed structure for faithful inference.

\subsection{Data Contamination Analysis}
\label{app: contamination}
\begin{table}[t]
\centering
\small
\setlength{\tabcolsep}{6pt}
\caption{Entity-level and 13-gram overlap rates across 3 types of tasks from benchmark Mol-Instructions (before filtered).}
\label{tab:overlap_rates}
\begin{tabular}{lcc}
\toprule
Dataset / Task & Entity-Level Overlap Rate & 13-gram Overlap Rate \\
\midrule
\rowcolor[HTML]{EFEFEF}
\multicolumn{3}{l}{\textit{Molecule-oriented tasks}} \\
Molecule description generation & 5.57\% & 6.85\% \\
Description-guided molecule design & 0.78\% & 0.88\% \\
Forward reaction prediction & 5.02\% & 2.59\% \\
Retrosynthesis & 5.07\% & 2.61\% \\
Reagent prediction & 1.34\% & 0.71\% \\
Property prediction & 3.47\% & 2.21\% \\
\midrule
\rowcolor[HTML]{EFEFEF}
\multicolumn{3}{l}{\textit{Protein-oriented tasks}} \\
Protein design & 5.68\% & 3.59\% \\
Protein functional description & 4.97\% & 3.20\% \\
Catalytic activity prediction & 5.58\% & 3.30\% \\
Domain/motif prediction & 1.81\% & 1.32\% \\
Protein structural prediction & 2.14\% & 1.79\% \\
\midrule
\rowcolor[HTML]{EFEFEF}
\multicolumn{3}{l}{\textit{Biomolecular text (NLP) tasks}} \\
True or False question & 3.79\% & 1.95\% \\
Multiple-choice question & 2.86\% & 1.94\% \\
Chemical entity recognition & 1.61\% & 1.07\% \\
Chemical-disease interaction extraction & 2.74\% & 2.24\% \\
Chemical-protein interaction extraction & 2.96\% & 2.09\% \\
Open question & 3.66\% & 2.64\% \\
\bottomrule
\end{tabular}
\end{table}

Table~\ref{tab:overlap_rates} quantifies potential train--test contamination between Mol-Instructions~\citep{fang2023molinstructions} and the training corpus of \datasetname using both entity-level overlap and an n-gram lexical criterion. Concretely, for each benchmark task with test set $\mathcal{D}*{\text{test}}$ and \datasetname training set $\mathcal{D}*{\text{train}}$, we measure the 13-gram overlap rate as
\begin{equation}
\mathrm{OR}*{13};=;\frac{1}{|\mathcal{D}*{\text{test}}|}\sum_{x\in \mathcal{D}*{\text{test}}}\mathbbm{1}!\left(\mathcal{G}*{13}(x)\cap \mathcal{G}*{13}(\mathcal{D}*{\text{train}})\neq \varnothing\right),
\end{equation}
where $\mathcal{G}*{13}(x)$ denotes the set of contiguous 13-token substrings (13-grams) extracted from the text fields of example $x$ (e.g., instruction, input prompt, and/or target text, following the benchmark format), and $\mathcal{G}*{13}(\mathcal{D}*{\text{train}})=\bigcup*{x'\in\mathcal{D}*{\text{train}}}\mathcal{G}*{13}(x')$~\citep{brown2020gpt3}. We adopt 13-grams because exact long-span matches are unlikely to occur by chance, making $\mathrm{OR}_{13}$ a conservative indicator of near-duplicate textual leakage beyond exact entity identity. Entity-level overlap complements this by detecting cases where the same underlying biological or chemical entity appears in both training and test splits.

Overall, the overlap remains low across tasks in Tab.~\ref{tab:overlap_rates}. Protein-oriented tasks show relatively higher entity-level overlap, which is expected since both \datasetname and several Mol-Instructions datasets source proteins from UniProt~\citep{uniprot2019uniprot}. To ensure a fair comparison, we perform explicit decontamination: before fine-tuning, we remove from $\mathcal{D}_{\text{train}}$ any example that overlaps with a benchmark test instance under either criterion (entity match or 13-gram match). This uniform filtering ensures that reported performance reflects generalization rather than memorization or benchmark leakage.

\subsection{Training Efficiency Analysis}
\label{app: efficiency}
\begin{figure}[t]
  \begin{center}
    \centerline{\includegraphics[width=\columnwidth]{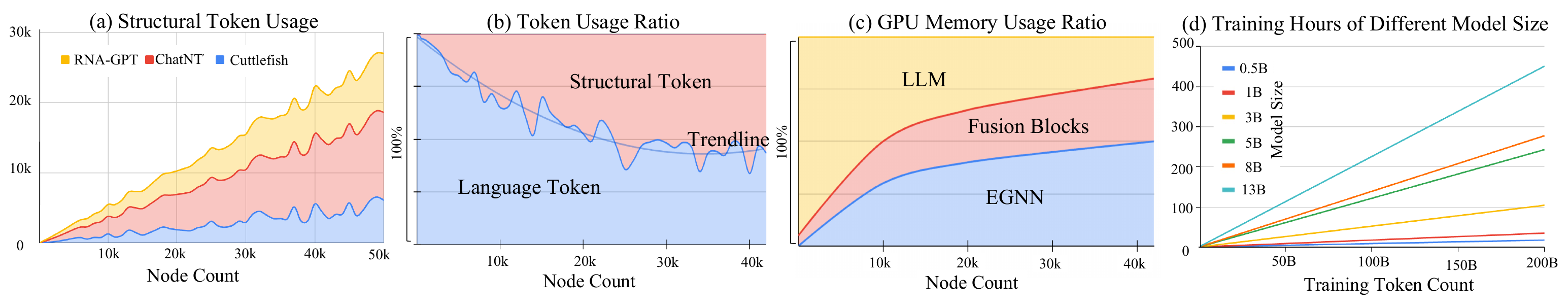}}
    \vskip -5 pt
    \caption{
        Training efficiency analysis of \modelname versus ChatNT and RNA-GPT. (a) Structural token usage versus node count. (b) Structural token usage ratio (structural vs. language tokens) versus node count, with a fitted trendline. (c) GPU memory usage ratio decomposed into EGNN, fusion blocks, and LLM components as node count increases. (d) Training hours versus training token count for backbones from 0.5B to 13B parameters, with the y-axis regularized to H100-80G GPU hours (LoRA applied). 
    }
    \label{fig:training_efficiency}
  \end{center}
  \vskip -15 pt
\end{figure}

Figure~\ref{fig:training_efficiency} (a–d) characterizes how \modelname improves scalability relative to sequence-centric baselines ChatNT~\citep{de2025chatnt} and RNA-GPT~\citep{xiao2024rnagpt} as structural size and model scale increase. (a) shows that baseline structural token usage grows near-linearly with node count, reflecting weakly adaptive interfaces that directly expose expanding structures to the LLM context. In contrast, \modelname maintains a substantially lower structural budget at large entities by activating \noveltyB to select instruction-relevant anchors and grow to patches. (b) Further indicates that the structural-to-language token composition is acceptable as node count increases, with a clear controllable trend in the token usage ratio.

The remaining panels link token efficiency to practical computing. (c) decomposes GPU memory usage into EGNN encoder, fusion blocks, and the LLM, showing that memory growth is dominated by the LLM backbone, while the incremental cost of structure processing and fusion grows sublinearly with node count under \modelname’s adaptive patching. This contrasts with baselines where increased structural tokens directly inflate LLM-side activation and attention costs. (d) reports training hours as a function of training token count for different backbone sizes (1B–13B), illustrating the superlinear time growth as tokens increase and highlighting why controlling structural tokenization is critical for feasible scaling. Together, (a–d) support that \noveltyA and \noveltyB jointly decouple structural resolution from LLM context length, yielding more stable memory footprints and improved end-to-end training efficiency.

\section{Extended Ablations}
\label{app: ablations}
\begin{table}[t]
\centering
\small
\setlength{\aboverulesep}{1pt}
\setlength{\belowrulesep}{1pt}
\setlength{\tabcolsep}{4pt}

\begin{minipage}{0.48\linewidth}
\centering
\caption{Ablation on different model sizes. Qwen-2.5 is used as the base model as it has the widest size coverage. Each variant is evaluated on \datasetname with BERTScore reported.}
\label{tab:ablation-model-size}
\begin{tabular}{l c c c c}
\toprule
Model Size & Molecule & Protein & DNA\&RNA & Average \\
\midrule
0.5B  & 0.453 & 0.422 & 0.382 & 0.419 \\
1.5B  & 0.586 & 0.577 & 0.544 & 0.569 \\
3B    & 0.608 & 0.578 & 0.621 & 0.602 \\
7B    & \second 0.863 & \second 0.816 & \second 0.0.832 & \second 0.840 \\
14B   & \best 0.897 & \best 0.861 & \best 0.834 & \best 0.864 \\
\bottomrule
\end{tabular}
\end{minipage}
\hfill
\begin{minipage}{0.48\linewidth}
\centering
\caption{Effect of training epochs on LLM adaptation tuning stage. \modelname-8B is evaluated on \datasetname, four tasks with BERTScore reported.}
\label{tab:ablation-epoch}
\begin{tabular}{c c c c c}
\toprule
Epochs & Molecule & Protein & DNA\&RNA & Average \\
\midrule
2   & 0.875 & \best 0.896 & \second 0.816 & \second 0.862 \\
3   & \best 0.881 & \second 0.886 & \best 0.820 & \best 0.863 \\
4   & \second 0.877 & 0.856 & 0.803 & 0.845 \\
5   & 0.848 & 0.823 & 0.789 & 0.820 \\
10  & 0.758 & 0.770 & 0.739 & 0.756 \\
\bottomrule
\end{tabular}
\end{minipage}
\vskip -10 pt
\end{table}

\begin{table}[t]
\centering
\small
\setlength{\aboverulesep}{1pt}
\setlength{\belowrulesep}{1pt}
\setlength{\tabcolsep}{6pt}

\begin{minipage}{0.48\linewidth}
\centering
\caption{Ablation on four pooling methods (fine-designed weighted pooling, mean/max/min pooling), evaluated on \datasetname.}
\label{tab:ablation-pooling}
\resizebox{\linewidth}{!}{
\begin{tabular}{l c c c c c}
\toprule
Pooling & Molecule & Protein & DNA\&RNA & Average & Avg. Drop \\
\midrule
Weighted & \best 0.875 & \best 0.896 & \best 0.816 & \best 0.862 & 0.00\% \\
Mean     & 0.848 & 0.812 & \second 0.802 & \second 0.821 & 4.14\% \\
Max      & 0.831 & \second 0.821 & 0.796 & 0.816 & 4.60\% \\
Min      & \second 0.851 & 0.777 & 0.786 & 0.804 & 5.76\% \\
\bottomrule
\end{tabular}
}
\end{minipage}
\hfill
\begin{minipage}{0.48\linewidth}
\centering
\caption{Ablation on maximum patch count from 256 to 4096, \modelname is evaluated on four tasks from \datasetname.}
\label{tab:ablation-max-patch-count}
\resizebox{\linewidth}{!}{
\begin{tabular}{c c c c c c}
\toprule
Max Patch Count & Molecule & Protein & DNA\&RNA & Average & Avg. Drop \\
\midrule
2048 & \best 0.875 & \second 0.896 & \best 0.816 & \best 0.862 & 0.00\% \\
4096 & \best 0.875 & \best 0.897 & \second 0.806 & \second 0.859 & 0.29\% \\
1024 & \best 0.875 & 0.798 & 0.750 & 0.808 & 5.45\% \\
512  & \second 0.853 & 0.776 & 0.748 & 0.792 & 6.98\% \\
256  & 0.842 & 0.659 & 0.693 & 0.731 & 13.08\% \\
\bottomrule
\end{tabular}
}
\end{minipage}
\vskip -23 pt
\end{table}

\subsection{Effect of Backbone Model Size}
\label{app: ablation-model-size}

Tab.~\ref{tab:ablation-model-size} analyzes the impact of LLM backbone size using Qwen-2.5~\citep{bai2025qwen2} variants from 0.5B to 14B parameters. Performance improves consistently with scale across all modalities, indicating that \modelname effectively leverages increased language capacity without saturating early. Gains are most pronounced from 0.5B to 3B, while improvements beyond 7B become incremental, suggesting diminishing returns once sufficient language expressivity is reached. Importantly, strong DNA and RNA performance at 7B–14B confirms that geometry-grounded tokens remain effective at scale rather than being overshadowed by larger text priors.

\subsection{Effect of LLM Adaptation Epochs}
\label{app: ablation-epoches}
Tab.~\ref{tab:ablation-epoch} studies the number of epochs used in the LLM adaptation stage. Performance peaks at 2–3 epochs across modalities, while additional training leads to consistent degradation. This trend indicates that moderate adaptation is sufficient for aligning the frozen LLM with structure-grounded tokens, whereas over-training induces overfitting and harms generalization. These results motivate the use of early stopping in the adaptation phase and validate the efficiency of \noveltyA in rapidly injecting geometric information.

\subsection{Pooling Strategy for Patch Aggregation}
\label{app:ablation-pool}
Tab.~\ref{tab:ablation-pooling} compares pooling strategies used to aggregate node-level features within each patch. The proposed weighted pooling consistently outperforms mean/max/min pooling, yielding the highest average score and zero performance drop. This confirms that learned, instruction-conditioned weighting is critical for preserving structurally salient signals, whereas uniform or extreme-value pooling discards fine-grained geometric cues required for accurate reasoning.

\subsection{Maximum Patch Count}
\label{app: max-patch-count}
Tab.~\ref{tab:ablation-max-patch-count} evaluates the effect of the maximum allowable patch count. Performance is stable between 2048 and 4096 patches, indicating that \noveltyB can effectively utilize additional capacity when available. However, reducing the budget below 1024 patches causes sharp degradation, particularly for proteins and nucleic acids, where structural complexity is higher. This demonstrates a clear trade-off between efficiency and representational sufficiency, and validates the default choice of 2048 as a balanced operating point for scaling-aware patching.

\subsection{Effect of Training Stages}
\label{app: training stage ablation}
\begin{wraptable}{l}{0.50\linewidth} 
\vspace{-10 pt}                      
\centering
\small
\setlength{\aboverulesep}{1pt}
\setlength{\belowrulesep}{1pt}
\setlength{\tabcolsep}{2pt}
\caption{Ablation on three training stages, evaluated on \datasetname.}
\label{tab:ablation-training-stage}
\resizebox{\linewidth}{!}{
\begin{tabular}{p{0.42\linewidth} c c c c c}
\toprule
Training Stage & Molecule & Protein & DNA\&RNA & Average & Avg. Drop \\
\midrule
\modelname & \best 0.875 & \best 0.896 & \second 0.816 & \best 0.862 & 0 \\
w/o LLM Adaptation & \second 0.854 & \second 0.889 & \best 0.817 & \second 0.853 & 0.88\% \\
w/o Alignment Tuning & 0.742 & 0.787 & 0.801 & 0.777 & 8.54\% \\
Random Encoder & 0.376 & 0.382 & 0.565 & 0.441 & 42.11\% \\
\bottomrule
\end{tabular}
}
\vspace{-10 pt}                      
\end{wraptable}

Tab.~\ref{tab:ablation-training-stage} indicates that performance is dominated by structure-side representation learning and alignment. Replacing the pretrained encoder with a random encoder collapses the performance, confirming the necessity of modality pretraining.
Removing alignment tuning yields a large drop as well, showing that initialized alignment is critical. LLM adaptation provides a smaller gain, suggesting it mainly refines instruction following after structural grounding is established.

\subsection{Backbone and Prompt-Setting Ablation on GEO-AT}
\label{app: rebuttal-backbone-prompt}
\begin{table*}[t]
\centering
\small
\setlength{\aboverulesep}{1pt}
\setlength{\belowrulesep}{1pt}
\setlength{\tabcolsep}{4pt}
\caption{Comparison of \modelname against general LLM backbones on GEO-AT under reasoning and non-reasoning settings. Results are reported as the average METEOR and BERTScore (higher is better) across GEO-AT evaluation tasks, along with the corresponding absolute improvements over the backbone-only baseline. \textbf{B.}: backbone-only; \textbf{C.}: \modelname.}
\label{tab:rebuttal_daHq_R1}
\resizebox{\linewidth}{!}{
\begin{tabular}{l c cc cc c}
\toprule
Backbone & Reasoning
  & B.\ METEOR$\uparrow$ & B.\ BERT-S$\uparrow$
  & C.\ METEOR$\uparrow$ & C.\ BERT-S$\uparrow$
  & ($\Delta$METEOR / $\Delta$BERT-S)$\uparrow$ \\
\midrule
Qwen-2.5-7B-Instruct        & No  & 0.143 & 0.653 & 0.330 & 0.840 & +0.187 / +0.187 \\
Mistral-3-8B-Instruct-2512  & No  & 0.172 & 0.683 & 0.310 & 0.750 & +0.138 / +0.067 \\
GLM-4-9B-0414               & No  & 0.155 & 0.621 & 0.367 & 0.727 & +0.212 / +0.106 \\
Qwen3-8B                    & Yes & 0.107 & 0.674 & \best{0.428} & \best{0.876} & +0.321 / +0.202 \\
DeepSeek-R1-D-Qwen-7B       & Yes & 0.169 & 0.692 & 0.369 & 0.803 & +0.200 / +0.111 \\
Mistral-3-8B-Reasoning-2512 & Yes & \best{0.176} & \best{0.744} & 0.335 & \best{0.876} & +0.159 / +0.132 \\
\bottomrule
\end{tabular}
}
\vskip -20 pt
\end{table*}

Tab.~\ref{tab:rebuttal_daHq_R1} quantifies the absolute improvement \modelname brings over six backbone-only baselines across reasoning and non-reasoning settings on \datasetname. Gains are consistently positive for every backbone, confirming that structural grounding is complementary to language capacity regardless of inference mode. Reasoning-capable backbones benefit more: Qwen3-8B achieves the largest METEOR gain (+0.321), suggesting that structural tokens synergize with chain-of-thought deliberation.

\begin{table*}[t]
\centering
\small
\setlength{\aboverulesep}{1pt}
\setlength{\belowrulesep}{1pt}
\setlength{\tabcolsep}{4pt}
\caption{Prompt-setting ablation on GEO-AT using the same reasoning-capable backbones under non-reasoning and reasoning inference settings. Results are reported as average METEOR and BERTScore (higher is better), together with the relative improvements brought by \modelname under each prompt setting. \textbf{B.}: backbone-only; \textbf{C.}: \modelname.}
\label{tab:rebuttal_daHq_R3}
\resizebox{\linewidth}{!}{
\begin{tabular}{l l rr rr r}
\toprule
Backbone & Prompt setting
  & B.\ METEOR$\uparrow$ & B.\ BERT-S$\uparrow$
  & C.\ METEOR$\uparrow$ & C.\ BERT-S$\uparrow$
  & Improvement ($\Delta$METEOR / $\Delta$BERT-S)$\uparrow$ \\
\midrule
Qwen3-8B                    & Non-reasoning & 0.051 & 0.634 & 0.055 & 0.689 & 7.84\% / 8.68\% \\
Qwen3-8B                    & Reasoning     & 0.107 & 0.674 & \best{0.428} & \best{0.876} & 300.00\% / 29.97\% \\
DeepSeek-R1-D-Qwen-7B       & Non-reasoning & 0.201 & 0.649 & 0.321 & 0.703 & 59.70\% / 8.32\% \\
DeepSeek-R1-D-Qwen-7B       & Reasoning     & \best{0.227} & 0.662 & 0.369 & 0.803 & 62.56\% / 21.30\% \\
Mistral-3-8B-Reasoning-2512 & Non-reasoning & 0.113 & 0.687 & 0.298 & 0.703 & 163.72\% / 2.33\% \\
Mistral-3-8B-Reasoning-2512 & Reasoning     & 0.176 & \best{0.744} & 0.335 & 0.776 & 90.34\% / 4.30\% \\
\midrule
\textbf{Average gain} & \textbf{Non-reasoning} & & & & & \textbf{77.09\% / 6.44\%} \\
\textbf{Average gain} & \textbf{Reasoning}     & & & & & \textbf{150.97\% / 18.52\%} \\
\bottomrule
\end{tabular}
}
\end{table*}

Tab.~\ref{tab:rebuttal_daHq_R3} further probes the same reasoning-capable backbones under both non-reasoning and reasoning inference settings. \modelname brings substantial relative METEOR gains in both modes, with an average of 77.09\% under non-reasoning prompts and 150.97\% under reasoning prompts. The larger gains under reasoning prompts indicate that structural tokens synergize with chain-of-thought: geometric priors guide multi-step reasoning, amplifying the benefit of both structural grounding and deliberation simultaneously.

\subsection{Reasoning Benchmark Results: ChemIQ and MolecularIQ}
\label{app: rebuttal-reasoning-bench}
\begin{table*}[t]
\centering
\small
\setlength{\aboverulesep}{1pt}
\setlength{\belowrulesep}{1pt}
\setlength{\tabcolsep}{4pt}
\caption{Results on the reasoning-oriented benchmarks ChemIQ and MolecularIQ. Non-reasoning and reasoning backbones are compared before and after adding Cuttlefish, and relative gains are reported for each benchmark. Metrics are success rate and accuracy (higher is better) for ChemIQ and MolecularIQ, respectively.}
\label{tab:rebuttal_daHq_R2}
\resizebox{\linewidth}{!}{
\begin{tabular}{l l ccc ccc}
\toprule
Backbone & Type
  & ChemIQ Base$\uparrow$ & ChemIQ \modelname$\uparrow$ & Gain$\uparrow$
  & MolecularIQ Base$\uparrow$ & MolecularIQ \modelname$\uparrow$ & Gain$\uparrow$ \\
\midrule
Qwen-2.5-7B-Instruct        & Non-reasoning & 0.051 & 0.055 & 7.84\%  & 0.101 & 0.105 & 3.96\% \\
Llama-3.1-8B-Instruct       & Non-reasoning & 0.077 & 0.081 & 5.19\%  & 0.077 & 0.081 & 5.19\% \\
Qwen3-8B                    & Reasoning     & 0.066 & 0.091 & \best{37.88\%} & 0.155 & 0.188 & 21.29\% \\
DeepSeek-R1-D-Qwen-7B       & Reasoning     & \best{0.232} & \best{0.312} & 34.48\% & \best{0.234} & \best{0.277} & 18.38\% \\
Mistral-3-8B-Reasoning-2512 & Reasoning     & 0.112 & 0.134 & 19.64\% & 0.123 & 0.163 & \best{32.52\%} \\
\midrule
\multicolumn{2}{l}{\textbf{Avg.\ Gain (non-reasoning)}} & & & \textbf{6.50\%} & & & \textbf{4.58\%} \\
\multicolumn{2}{l}{\textbf{Avg.\ Gain (reasoning)}}     & & & \textbf{30.67\%} & & & \textbf{24.06\%} \\
\bottomrule
\end{tabular}
}
\end{table*}

Tab.~\ref{tab:rebuttal_daHq_R2} reports performance on ChemIQ (chemical problem-solving success rate) and MolecularIQ (molecular property reasoning accuracy). \modelname yields consistent gains on both benchmarks, with reasoning backbones benefiting substantially more than non-reasoning ones (average gain of 30.67\% vs.\ 6.50\% on ChemIQ). This asymmetry suggests that structural grounding amplifies the model's ability to leverage extended deliberation for chemistry tasks, where geometric cues guide multi-step reasoning.

\subsection{Same-Backbone Connector Comparison}
\label{app: rebuttal-same-backbone}
\begin{table*}[t]
\centering
\small
\setlength{\aboverulesep}{1pt}
\setlength{\belowrulesep}{1pt}
\setlength{\tabcolsep}{10pt}
\caption{Same-backbone comparison on GEO-AT using Llama-3.1-8B-Instruct. METEOR and BERTScore are reported (higher is better) by modality and as overall averages.}
\label{tab:rebuttal_GtMW_R1}
\resizebox{\linewidth}{!}{
\begin{tabular}{l l ccccc}
\toprule
Method & Backbone
  & Molecule$\uparrow$ & Protein$\uparrow$ & DNA$\uparrow$ & RNA$\uparrow$ & Average$\uparrow$ \\
\rowcolor[HTML]{EFEFEF}
\multicolumn{7}{l}{\textit{METEOR}} \\
Backbone Only              & Llama-3.1-8B-Instruct & 0.229 & 0.178 & 0.175 & 0.175 & 0.189 \\
Mol-Instructions connector & Llama-3.1-8B-Instruct & 0.291 & 0.315 & 0.364 & 0.330 & 0.325 \\
Mol-LLaMA connector        & Llama-3.1-8B-Instruct & 0.319 & 0.328 & 0.381 & 0.306 & 0.334 \\
\modelname                 & Llama-3.1-8B-Instruct & \best{0.391} & \best{0.417} & \best{0.529} & \best{0.403} & \best{0.435} \\
\midrule
\rowcolor[HTML]{EFEFEF}
\multicolumn{7}{l}{\textit{BERTScore}} \\
Backbone Only              & Llama-3.1-8B-Instruct & 0.778 & 0.742 & 0.658 & 0.646 & 0.706 \\
Mol-Instructions connector & Llama-3.1-8B-Instruct & 0.789 & 0.789 & 0.786 & 0.738 & 0.775 \\
Mol-LLaMA connector        & Llama-3.1-8B-Instruct & 0.830 & 0.758 & 0.754 & 0.731 & 0.768 \\
\modelname                 & Llama-3.1-8B-Instruct & \best{0.875} & \best{0.896} & \best{0.816} & \best{0.868} & \best{0.864} \\
\bottomrule
\end{tabular}
}
\end{table*}

Tab.~\ref{tab:rebuttal_GtMW_R1} provides a controlled comparison by fixing the backbone to Llama-3.1-8B-Instruct and varying only the connector, directly isolating the architectural contribution of \modelname. \modelname outperforms both Mol-Instructions and Mol-LLaMA connectors on every modality under both METEOR and BERTScore. The margins are most pronounced on DNA and RNA, where the adaptive token budget better accommodates longer and more repetitive nucleic acid sequences that challenge fixed-length connectors.

\subsection{Structural Token-Usage by Entity Size}
\label{app: rebuttal-token-usage}
\begin{table*}[t]
\centering
\small
\setlength{\aboverulesep}{1pt}
\setlength{\belowrulesep}{1pt}
\setlength{\tabcolsep}{5pt}
\caption{Structural token-usage comparison by entity size on GEO-AT. Variants include sequence-only, fixed-token Q-Former, sequence-exposed structural-token, and Cuttlefish, reporting the atom-count range and the corresponding structural token-usage range for each method.}
\label{tab:rebuttal_GtMW_R2}
\begin{tabular}{l l l l}
\toprule
Model & Token policy & Atom-count range & Token-usage range \\
\midrule
Sequence-only baseline  & No structural tokens               & {[1, 256)}   & 0 \\
Q-Former variant        & Fixed                              & {[1, 256)}   & 256 \\
Q-Former variant        & Fixed                              & {[256, 1K)}  & 256 \\
Q-Former variant        & Fixed                              & {[1K, 8K)}   & 256 \\
Q-Former variant        & Fixed                              & {[8K, 30K)}  & 256 \\
ChatNT-style baseline   & Sequence-exposed structural tokens & {[1, 256)}   & (0, 75) \\
ChatNT-style baseline   & Sequence-exposed structural tokens & {[256, 1K)}  & {[75, 250)} \\
ChatNT-style baseline   & Sequence-exposed structural tokens & {[1K, 8K)}   & {[250, 2000)} \\
ChatNT-style baseline   & Sequence-exposed structural tokens & {[8K, 30K)}  & {[2000, 7500)} \\
\textbf{\modelname}     & Adaptive                           & {[1, 256)}   & (0, 17) \\
\textbf{\modelname}     & Adaptive                           & {[256, 1K)}  & {[18, 85)} \\
\textbf{\modelname}     & Adaptive                           & {[1K, 8K)}   & {[86, 816)} \\
\textbf{\modelname}     & Adaptive                           & {[8K, 30K)}  & {[217, 2116)} \\
\bottomrule
\end{tabular}
\end{table*}

Tab.~\ref{tab:rebuttal_GtMW_R2} contrasts structural token consumption across entity size bins for four approaches. The sequence-only baseline uses no structural tokens; Q-Former enforces a rigid 256-token budget at all sizes; ChatNT-style connectors scale linearly with entity size, quickly consuming thousands of tokens for large structures; \modelname adapts its budget, using far fewer tokens than ChatNT for small entities while scaling gracefully for larger ones. This demonstrates that the adaptive policy avoids over-allocation for simple structures while preserving representational capacity for complex ones.

\subsection{Protein Structure Quality Analysis}
\label{app: rebuttal-structure-quality}
\begin{table}[t]
\centering
\sc
\setlength{\aboverulesep}{1pt}
\setlength{\belowrulesep}{1pt}
\setlength{\tabcolsep}{15pt}
\caption{Structure-source analysis on the GEO-AT protein tasks. ROUGE-L (higher is better) is reported for protein function (PF), functional description (FD), catalytic activity (CA), and domain or motif prediction (DM).}
\label{tab:rebuttal_w8Lq_R4}
\begin{tabular}{l ccccc}
\toprule
Structure source & PF$\uparrow$ & FD$\uparrow$ & CA$\uparrow$ & DM$\uparrow$ & Average$\uparrow$ \\
\midrule
PDB                    & \best{0.486} & \best{0.520} & 0.551        & \best{0.495} & \best{0.513} \\
AlphaFold2             & 0.481        & 0.498        & \best{0.577} & 0.448        & 0.501 \\
Delta (PDB $-$ AF2)    & 0.005        & 0.022        & $-$0.027     & 0.047        & 0.012 \\
\bottomrule
\end{tabular}
\end{table}

Tab.~\ref{tab:rebuttal_w8Lq_R4} compares performance when protein structures are sourced from experimental PDB entries versus AlphaFold2 predictions. The average delta is only 0.012 ROUGE-L, with PDB marginally better on most sub-tasks and AF2 slightly better on catalytic activity. This parity confirms that \modelname does not overfit to crystallographic data quality and remains fully applicable to proteins that lack experimental structures.

\begin{table}[t]
\centering
\sc
\setlength{\aboverulesep}{1pt}
\setlength{\belowrulesep}{1pt}
\setlength{\tabcolsep}{5pt}
\caption{AlphaFold2 confidence analysis on the GEO-AT protein tasks. AlphaFold2-derived structures are stratified by pLDDT confidence bins, and ROUGE-L (higher is better) is reported for protein function (PF), functional description (FD), catalytic activity (CA), and domain or motif prediction (DM), together with the average.}
\label{tab:rebuttal_w8Lq_R5}
\begin{tabular}{l ccccc}
\toprule
AF2 confidence bin (pLDDT) & PF$\uparrow$ & FD$\uparrow$ & CA$\uparrow$ & DM$\uparrow$ & Average$\uparrow$ \\
\midrule
$\geq 90$ & \best{0.486} & \best{0.520} & 0.551        & 0.495        & 0.513 \\
80--90    & 0.479        & 0.491        & \best{0.576} & \best{0.537} & \best{0.521} \\
70--80    & 0.434        & 0.439        & 0.552        & 0.458        & 0.471 \\
$< 70$    & 0.354        & 0.394        & 0.461        & 0.364        & 0.393 \\
\bottomrule
\end{tabular}
\end{table}

Tab.~\ref{tab:rebuttal_w8Lq_R5} further stratifies AF2-derived structures by pLDDT confidence bins. Performance degrades gracefully from 0.513 at pLDDT$\geq$90 to 0.393 at pLDDT$<$70. Notably, the 80--90 bin slightly outperforms the highest-confidence bin on catalytic activity and domain/motif prediction, suggesting moderate structural flexibility can benefit activity-related reasoning. Together, these results confirm \modelname remains effective across both structure sources and confidence levels, substantially broadening its practical scope.

\subsection{Functional-Group Grounding Accuracy}
\label{app: rebuttal-func-grounding}
\begin{table}[t]
\centering
\sc
\setlength{\tabcolsep}{6pt}
\caption{Functional-group grounding evaluation on GEO-AT. Top-1 and Top-3 accuracy (higher is better) are reported.}
\label{tab:rebuttal_daHq_R4}
\begin{tabular}{l cc}
\toprule
Method & Top-1 Acc.$\uparrow$ & Top-3 Acc.$\uparrow$ \\
\midrule
Random Selection & 0.02 & 0.05 \\
Gate w/o Softmax & 0.46 & 0.85 \\
Our Gate         & \best{0.49} & \best{0.93} \\
\bottomrule
\end{tabular}
\end{table}

Tab.~\ref{tab:rebuttal_daHq_R4} evaluates the instruction-conditioned anchor gate by its ability to concentrate attention on relevant functional groups. The full gate achieves Top-1 accuracy of 0.49 and Top-3 accuracy of 0.93, substantially above both random selection (0.02 / 0.05) and the ablated variant without softmax normalization (0.46 / 0.85). This confirms that softmax normalization is essential for calibrating gate weights, not merely the presence of gating.

\subsection{Hyperparameter Sensitivity Analysis}
\label{app: rebuttal-hyper-sensitivity}
\begin{table}[t]
\centering
\sc
\setlength{\aboverulesep}{1pt}
\setlength{\belowrulesep}{1pt}
\setlength{\tabcolsep}{10pt}
\caption{Sensitivity analysis of the maximum injected structural-token budget on GEO-AT. BERTScore is reported (higher is better).}
\label{tab:rebuttal_w8Lq_R2}
\resizebox{\linewidth}{!}{
\begin{tabular}{l cccc c}
\toprule
Max injected structural tokens
  & Molecule$\uparrow$ & Protein$\uparrow$ & DNA\&RNA$\uparrow$ & Average$\uparrow$ & Avg.\ Drop$\downarrow$ \\
\midrule
2048 & \best{0.875} & 0.896          & \best{0.816} & \best{0.862} & \best{0.00\%} \\
4096 & \best{0.875} & \best{0.897}   & 0.806        & 0.859        & 0.29\% \\
1024 & \best{0.875} & 0.798          & 0.750        & 0.808        & 5.45\% \\
512  & 0.853        & 0.776          & 0.748        & 0.792        & 6.98\% \\
256  & 0.842        & 0.659          & 0.693        & 0.731        & 13.08\% \\
\bottomrule
\end{tabular}
}
\end{table}

Tab.~\ref{tab:rebuttal_w8Lq_R2} evaluates sensitivity to the maximum injected structural-token budget. Performance is stable between 2048 and 4096 tokens (average drop $\leq$0.29\%), but degrades sharply below 1024 (5.45\%--13.08\%). This confirms that 2048 is a well-chosen default providing sufficient representational capacity without unnecessary overhead.

\begin{table}[t]
\centering
\sc
\setlength{\aboverulesep}{1pt}
\setlength{\belowrulesep}{1pt}
\setlength{\tabcolsep}{15pt}
\caption{Sensitivity analysis of the anchor-selection hyperparameters on GEO-AT. The table varies the mass threshold $\rho$ and maximum anchor count $K_{\max}$ in Scaling-Aware Patching, and reports BERTScore (higher is better) on molecule, protein, and DNA\&RNA tasks, together with the overall average.}
\label{tab:rebuttal_w8Lq_R8}
\resizebox{\linewidth}{!}{
\begin{tabular}{l l cccc}
\toprule
Knob & Setting & Molecule$\uparrow$ & Protein$\uparrow$ & DNA\&RNA$\uparrow$ & Average$\uparrow$ \\
\midrule
Mass threshold $\rho$ & 0.05            & 0.850        & 0.805        & 0.782        & 0.812 \\
Mass threshold $\rho$ & 0.10 (default)  & 0.875        & 0.896        & 0.836        & 0.869 \\
Mass threshold $\rho$ & 0.2             & 0.826        & 0.918        & 0.780        & 0.841 \\
Mass threshold $\rho$ & 0.3             & 0.852        & 0.903        & 0.763        & 0.839 \\
\midrule
$K_{\max}$            & 256             & 0.820        & 0.903        & 0.734        & 0.819 \\
$K_{\max}$            & 512             & 0.824        & \best{0.929} & 0.710        & 0.821 \\
$K_{\max}$            & 1024            & 0.773        & 0.878        & 0.752        & 0.801 \\
$K_{\max}$            & 2048 (default)  & 0.875        & 0.896        & 0.836        & 0.869 \\
$K_{\max}$            & 4096            & \best{0.915} & 0.871        & \best{0.866} & \best{0.884} \\
\bottomrule
\end{tabular}
}
\end{table}

Tab.~\ref{tab:rebuttal_w8Lq_R8} varies the mass threshold $\rho$ and maximum anchor count $K_{\max}$ in Scaling-Aware Patching. Both knobs show stable performance around their defaults ($\rho{=}0.10$, $K_{\max}{=}2048$). For $\rho$, performance peaks at 0.10 and degrades symmetrically on either side. For $K_{\max}$, increasing to 4096 provides a marginal gain (average 0.884 vs.\ 0.869), while reducing below 1024 causes degradation especially on DNA\&RNA.

\begin{table}[t]
\centering
\sc
\setlength{\aboverulesep}{1pt}
\setlength{\belowrulesep}{1pt}
\setlength{\tabcolsep}{10pt}
\caption{Sensitivity analysis of soft patch growth and adapter-side hyperparameters on GEO-AT. BERTScore (higher is better) on four tasks is reported, together with the overall average.}
\label{tab:rebuttal_w8Lq_R9}
\resizebox{\linewidth}{!}{
\begin{tabular}{l l cccc}
\toprule
Knob & Setting & Molecule$\uparrow$ & Protein$\uparrow$ & DNA\&RNA$\uparrow$ & Average$\uparrow$ \\
\midrule
Assignment temperature $\tau$ & 0.05            & 0.843        & \best{0.915} & 0.827        & 0.862 \\
Assignment temperature $\tau$ & 0.10 (default)  & 0.875        & 0.896        & 0.836        & \best{0.869} \\
Assignment temperature $\tau$ & 0.2             & \best{0.897} & 0.869        & 0.830        & 0.865 \\
\midrule
Assignment distance scale $s$ & 0.5             & 0.845        & 0.828        & \best{0.873} & 0.849 \\
Assignment distance scale $s$ & 1.0 (default)   & 0.875        & 0.896        & 0.836        & \best{0.869} \\
Assignment distance scale $s$ & 2               & 0.817        & 0.881        & 0.846        & 0.848 \\
\midrule
Fusion blocks $L_f$            & 4               & \best{0.897} & 0.862        & 0.827        & 0.862 \\
Fusion blocks $L_f$            & 8 (default)     & 0.875        & 0.896        & 0.836        & \best{0.869} \\
Fusion blocks $L_f$            & 12              & 0.887        & 0.818        & 0.820        & 0.842 \\
\bottomrule
\end{tabular}
}
\end{table}

Tab.~\ref{tab:rebuttal_w8Lq_R9} evaluates sensitivity to the assignment temperature $\tau$, distance scale $s$, and fusion block count $L_f$. All three knobs produce stable performance within a neighborhood of their defaults ($\tau{=}0.10$, $s{=}1.0$, $L_f{=}8$). Across all three tables, the default configurations are near-optimal within reasonable ranges, confirming that \modelname does not require fine-grained hyperparameter tuning.

\section{Limitation and Future Work}
\label{app:limitation_future_work}

\subsection{Limitations}
\label{app:limitation}

\modelname integrates (i) geometry-grounded cross-modal alignment and (ii) instruction-conditioned scaling-aware patching to expose all-atom structure to a frozen LLM under an adaptive token budget. Despite improved structural faithfulness, \modelname remains limited by extremely large entities, upstream structure quality, and weak reasoning supervision (Tab.~\ref{tab:limitations_future}).
Beyond the three core limitations retained in the table, we note two additional constraints that are currently out of scope: (a) we do not explicitly model DNA folding or higher-order genome organization (e.g., chromatin-scale 3D ensembles), since most available all-atom nucleic-acid benchmarks emphasize local, structure-conditioned tasks; and (b) residual non-structural LLM hallucinations (factual/interpretive) may still occur in open-ended generation, even when structural hallucination is reduced.

\begin{table}[t]
\centering
\small
\setlength{\tabcolsep}{5pt}
\caption{Key limitations of \modelname and corresponding future directions (three most reviewer-critical items).}
\begin{tabular}{p{0.30\linewidth} p{0.38\linewidth} p{0.28\linewidth}}
\toprule
\textbf{Limitation} & \textbf{Why this is acceptable / common} & \textbf{Future work} \\
\midrule
No support for extremely large entities (e.g., $>50$K nodes)
&
Scientific workflows and human interpretation typically focus on functional subregions (domains, active sites, loci) rather than a single-pass global readout; in our setting, LLM context and compute budgets further motivate segment-level reasoning.
&
Retrieval-augmented hierarchical reasoning (coarse-to-fine chunking, region proposals, iterative zoom-in) with memory-based summarization across passes. \\
\midrule
Sensitivity to structure correctness / preprocessing choices
&
Structure-grounded modeling intrinsically depends on geometric fidelity; we therefore prioritize high-quality inputs (validated coordinates; controlled conformers) to avoid conflating model limitations with upstream structural noise.
&
Uncertainty-aware tokens (confidence/quality conditioning), multi-conformer / ensemble supervision, and structure-quality estimation are integrated into training and inference. \\
\midrule
Reasoning supervision is coarse (no native all-atom reasoning corpus)
&
Large-scale, structure-grounded reasoning traces are scarce; our supervision is primarily format-stable and regularizes outputs, while most reasoning capacity is inherited from the backbone LLM.
&
Construct domain-grounded reasoning datasets and enforce evidence-based rationale learning with verifiable geometric attribution. \\
\bottomrule
\end{tabular}
\label{tab:limitations_future}
\vskip -15 pt
\end{table}

\subsection{Future Work}
\label{app:future_work}

Tab.~\ref{tab:limitations_future} suggests three actionable extensions: (i) extreme-scale structure understanding via hierarchical segmentation, retrieval, and iterative multi-pass summarization; (ii) robustness to upstream structure noise through uncertainty-aware tokens and multi-conformer/ensemble training; and (iii) domain-grounded reasoning supervision by building all-atom reasoning corpora with evidence-tethered rationales.
In addition, two orthogonal directions remain important for broadening scope and trustworthiness: folding-aware, multi-scale nucleic-acid modeling (secondary/tertiary constraints and long-range contacts) and post-hoc verification (structure-consistency checks, tool-assisted validation, and uncertainty calibration) to further reduce residual hallucinations in open-ended generation.

\section{Reproducibility}
\label{app:reproducibility}

We release the full codebase, \modelname checkpoints, and the proposed dataset \datasetname at
\href{https://github.com/zihao-jing/Cuttlefish}{https://github.com/zihao-jing/Cuttlefish}.
The repository includes comprehensive documentation and scripts to reproduce training, evaluation, and inference, including environment setup, preprocessing, command-line entry points, and configuration files for all reported experiments.
All hyperparameters and implementation details not fully enumerated (e.g., LoRA settings) in the paper are specified in the released code and configs.


\end{document}